\title{A Characterization of List Learnability}
\author{Moses Charikar\thanks{Stanford University. Email: \texttt{moses@cs.stanford.edu.}}
\and
Chirag Pabbaraju\thanks{Stanford University. Email: \texttt{cpabbara@cs.stanford.edu.}}}
\date{\today}
\begin{document}

\maketitle

\begin{abstract}
A classical result in learning theory shows the equivalence of PAC learnability of binary hypothesis classes and the finiteness of VC dimension.
Extending this to the multiclass setting was an open problem, which was settled in a recent breakthrough result characterizing multiclass PAC learnability via the DS dimension introduced earlier by Daniely and Shalev-Shwartz.

In this work we consider list PAC learning where the goal is to output a list of $k$ predictions. List learning algorithms have been developed in several settings before and indeed, list learning played an important role in the recent characterization of multiclass learnability.
In this work we ask: when is it possible to $k$-list learn a hypothesis class?

We completely characterize $k$-list learnability in terms of a generalization of DS dimension that we call the $k$-DS dimension.
Generalizing the recent characterization of multiclass learnability, 
we show that a hypothesis class is $k$-list learnable if and only if the $k$-DS dimension is finite.

\end{abstract}
\newpage
\section{Introduction}
\label{sec:intro}

An important direction in statistical learning theory is the study of learning hypothesis classes in the \textit{Probably Approximately Correct (PAC)} framework introduced by \cite{valiant1984theory}. 
When the number of classes is 2, a fundamental result shows that the \textit{Vapnik-Chervonenkis (VC) dimension} \cite{vapnik1974theory, vapnik2015uniform, blumer1989learnability} of the hypothesis class \textit{completely characterizes} learning (in both, realizable and agnostic settings), 
i.e. 
(1) finiteness of the dimension ensures the existence of a learning algorithm which successfully PAC learns the class (sufficiency), and conversely, (2) no hypothesis class for which the dimension is infinite can be PAC learned by any algorithm (necessity).

However, in practice, most classification problems (e.g. object recognition, speech recognition, document categorization) involve the multiclass setting, with more than 2 classes. 
The question 
of what dimension characterizes learnability in the multiclass setting
was a long-standing open problem in the learning theory literature. 
The \textit{Natarajan dimension} was proposed by \cite{natarajan1989learning, natarajan1988two} and was shown to completely characterize learning when the number of labels is finite.
When the number of labels is infinite, 
the Natarajan dimension satisfies the necessity condition (2), but whether the Natarajan dimension also satisfies sufficiency (1) remained open. 

In a recent major breakthrough, \cite{brukhim2022characterization} 
resolved this question
--- they constructed a hypothesis class that has Natarajan dimension 1, but is not PAC learnable. Instead, they proved that another dimension, called the \textit{Daniely-Shalev-Shwartz (DS) dimension}, completely characterizes multiclass learnability. The DS dimension was introduced in the work of \cite{daniely2014optimal}, where they had shown that it satisfies the necessity condition (2). Using beautiful machinery that involves a wide assortment of techniques, \cite{brukhim2022characterization} showed the existence of an algorithm that successfully learns any hypothesis class that has finite DS dimension, \textit{even when the number of labels is infinite}. 

How can we hope to learn hypothesis classes with infinite DS dimension?
Since standard PAC learning is not possible, 
predicting a short \textit{list} of labels
is a natural alternative. 
The focus of our work is on understanding when list learning is possible.
We obtain an exact characterization of when we can list learn a hypothesis class, where the algorithm is allowed to output a list of $k$ labels.

In fact, list learning is an important ingredient in the construction of the learning algorithm in \cite{brukhim2022characterization} (although, the notion of list learning that we care about in this work is slightly different, see Remarks \ref{rem:list-learning-definition-in-BCD+22},\ref{rem:weak-list-learner-vs-strong}).\footnote{Furthermore, the list learning algorithm described in \cite[Algorithm 2]{brukhim2022characterization} has sample complexity that scales with the DS dimension, and hence is not applicable in the infinite DS dimension regime.} One can think of several settings where one might want to predict a short list of labels, instead of just a single prediction. For example, in recommendation systems, the goal is to be able to accurately predict and recommend a short menu of items to a customer, so that they are likely to choose/purchase one of the items in the menu with good chance. This can be applicable in the context of online shopping websites like Amazon, eBay etc., advertisements on social media, music recommendations on YouTube, Spotify, etc. Even if the ultimate goal is to make a single prediction, it is useful to construct a short list from a very large domain of possible labels. 
For example, in the context of medicine, it might be very helpful to a doctor if an algorithm, when provided symptoms of a patient as input, outputs a short list of possible 
health issues causing the patient's current condition.
In the context of hiring for a single position in companies, one may want to shortlist candidates based on their resumes in an automated manner. In these and other cases, where the final decision may be needed to be taken by a human expert, list learning significantly simplifies the task at hand. 

In a separate context, one can also think of robust learning settings where a fraction of the available training data is adversarially corrupted.  Can one still make accurate predictions on test points? It is unreasonable to ask for single predictions that are correct with high probability in such settings, especially when a large fraction of the data may be arbitrarily corrupted. However, if one is granted the ability to predict lists, it turns out that meaningful predictions become possible, in what is known as the \textit{list-decodable learning} framework \cite{balcan2008discriminative}. In this framework, only an $\alpha$ fraction of the training data is uncorrupted, and there are no assumptions made on the remaining $1-\alpha$ fraction --- the goal is to predict short lists such that the true answer for the task at hand is close to one of the members of the predicted list. 
\cite{charikar2017learning} show that for the robust mean estimation problem (among other problems), if the true data is drawn from a distribution with bounded covariance, one can predict lists of size $1/\poly(\alpha)$ and achieve an error rate of $\widetilde{O}(1/\sqrt{\alpha})$. 
Thereafter, many subsequent works \cite{diakonikolas2018list, diakonikolas2020list, karmalkar2019list, raghavendra2020list} have attempted to improve the running time of their algorithm and/or achieve better error rates under stronger assumptions;
for an exhaustive survey of advances in this field of robust statistics, we refer the reader to \cite{diakonikolas2019recent}. 
Lastly, given the extent to which list-decoding has empowered the rich field of coding theory, it is also natural to wonder if there are connections (\cite{guruswami1999multiclass} is an excellent example) to the techniques used there to devise good list learning algorithms.


\section{Overview of results}
\label{sec:overview}

In this work, we investigate the following questions about list learning:
\begin{enumerate}
\item \ul{\textit{Which hypothesis classes can be list-learned for a given list size $k$, even when their $DS$ dimension is infinite?}}
\item \underline{\textit{What is a successful list learning algorithm for such hypothesis classes?}}
\end{enumerate}
Our main technical contribution is to completely characterize \textit{when} and \textit{how} list learning (for a fixed target list size $k$) can be achieved. Towards this, we introduce a complexity parameter for a hypothesis class, which we denote as the \textit{$k$-Daniely-Shalev-Shwartz ($k$-DS) dimension}, and show that it completely governs $k$-list learnability. The $k$-DS dimension is a natural analogue of the DS dimension \cite{daniely2014optimal} of a hypothesis class, which was shown to completely characterize multiclass learnability in \cite{brukhim2022characterization}. We build on their work, to 
obtain the following result:
\begin{align*}
    \underline{\textit{$k$-list learnability $\Leftrightarrow$ finite $k$-DS dimension.}}
\end{align*}
The necessity of finiteness for learnability is implied by the following theorem:
\begin{theorem}[Finite $k$-DS dimension is necessary]
    \label{thm:necessity}
    If a hypothesis class has infinite $k$-DS dimension, it is not $k$-list PAC learnable.
\end{theorem}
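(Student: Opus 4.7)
Although the formal definition of the $k$-DS dimension is not in the excerpt, I expect the natural analogue of the standard DS dimension: a finite set $S \subseteq \mathcal{X}$ is $k$-DS shattered by $\mathcal{H}$ if there is a sub-family $\mathcal{F} \subseteq \mathcal{H}|_S$ that has a ``$(k+1)$-pseudo-cube'' structure, meaning that for every $f \in \mathcal{F}$ and every $s \in S$ one can find $k$ other members of $\mathcal{F}$ that agree with $f$ on $S \setminus \{s\}$ but whose labels at $s$, together with $f(s)$, are $k+1$ pairwise distinct values. This is exactly the structure ensuring that at $s$ no list of size $k$ can simultaneously cover all realizable labels.

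The approach is a no-free-lunch argument adapted to the list setting. Assuming for contradiction that $\mathcal{H}$ is $k$-list PAC learnable, fix $\epsilon < 1/(4(k+1))$ and $\delta < 1/4$, and let $m = m(\epsilon,\delta)$ be the guaranteed sample complexity of the learner $\mathcal{A}$. Since the $k$-DS dimension is infinite, pick a $k$-DS shattered set $S$ with $|S| \geq 2m$, together with its witnessing pseudo-cube $\mathcal{F}$. Consider the randomized adversary that draws a target $f^\star \in \mathcal{F}$ uniformly at random, independently draws a training sample $T$ of size $m$ i.i.d. from the uniform distribution $D$ on $S$, and hands the labeled sample $\{(x,f^\star(x)) : x \in T\}$ to $\mathcal{A}$. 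Then I would analyze the expected error at a fresh test point $x \sim D$: with probability at least $1/2$, $x \notin T$, and at such an $x$ the pseudo-cube property implies that the posterior distribution over $f^\star(x)$ given the observed labels is supported on at least $k+1$ values; a symmetrization argument (using the transitive action implicit in the pseudo-cube) further shows this posterior can be taken to be uniform on those $k+1$ values, so the learner's $k$-list $L(x)$ misses $f^\star(x)$ with probability at least $1/(k+1)$. Averaging gives expected error at least $1/(2(k+1))$, exceeding $\epsilon$ and contradicting the PAC guarantee via Markov.

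The main obstacle is the last bullet: rigorously passing from the combinatorial pseudo-cube to a clean ``$\geq k+1$ equally likely labels at any unseen point'' Bayesian statement. Two subtleties arise. First, one must choose the prior on $f^\star$ so that the posterior at an unseen point is genuinely close to uniform on $k+1$ values; for the $k=1$ case this is done in Brukhim et al.\ by using the symmetry of the pseudo-cube, and the natural extension here is to pick $f^\star$ by a sequence of random local perturbations along the pseudo-cube's $(k+1)$-ary ``edges,'' so that conditioning on $S \setminus \{s\}$ leaves the label at $s$ uniform over its $k+1$ options. Second, the argument must be robust to the fact that $L$ can depend arbitrarily on $T$ and on $x$; this is handled by Fubini, exchanging the order of the expectation over $f^\star$ with the learner's internal randomness and the draw of $(T,x)$, so that the ``$1/(k+1)$ miss'' bound can be applied pointwise. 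Once these two points are in place, the contradiction with $k$-list PAC learnability is immediate.
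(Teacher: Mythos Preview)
Your overall strategy---draw the target uniformly from a finite witness $\mathcal{F}$ for a large $k$-DS shattered set, take the data distribution uniform on that set, and lower-bound the Bayes error at an unseen test point---is exactly the paper's approach (the paper routes through a transductive lower bound first, but the core probabilistic-method argument is the same).

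Where your write-up goes astray is in the ``main obstacle'' and its proposed resolution. There is no transitive group action on a general $k$-DS pseudo-cube, and the ``random local perturbations'' prior you sketch is neither well-defined nor needed. The obstacle in fact dissolves once you condition on the right event. With the \emph{plain uniform} prior on $\mathcal{F}|_S$, conditioning on $f^\star|_{S\setminus\{x\}}$ is the same as conditioning on the hyperedge $e$ of the one-inclusion graph (in direction $x$) that contains $f^\star$; the posterior is then automatically uniform over the $|e|\ge k+1$ members of that edge, and since the learner's output $L(x)$ is measurable with respect to $(T,f^\star|_T,x)$ with $T\subseteq S\setminus\{x\}$, it is fixed under this conditioning. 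Hence $\Pr[f^\star(x)\notin L(x)\mid T,x,f^\star|_{S\setminus\{x\}}]\ge 1/(k+1)$ directly, with no symmetrization. This is precisely what the paper does, phrased as summing over all edges $e$ with $|e|>k$ and conditioning on the event $\{h^\star\in e,\;x=x_i,\;x_i\notin S\}$. So your plan is sound once you replace the speculative symmetry argument with this over-conditioning step; the rest (Markov/averaging, contradiction with the PAC guarantee) goes through as you describe.
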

Sufficiency (both, in the realizable and agnostic learning settings) is implied by the following theorem:
\begin{theorem}[Finite $k$-DS dimension is sufficient]
    \label{thm:sufficiency}
    Let $\mcH \subseteq \mcY^\mcX$ be a hypothesis class with $k$-DS dimension $\dds < \infty$. Then, there exist list learning algorithms $\mcA^{real}$ and $\mcA^{agn}$ with the following guarantees:\footnote{$\widetilde{O}(\cdot)$ hides $\mathrm{polylog}(m, \dds, k)$ factors. All logarithms in this paper are in the natural base $e$.}
    \begin{enumerate}
        \item For every distribution $\mcD$ on $(\mcX \times \mcY)$ realizable by $\mcH$, every $\delta \in (0,1)$, and every integer $m \in \N$, with probability at least $1-\delta$ over a sample $S$ of size $m$ drawn i.i.d. from $\mcD$, the $k$-list hypothesis $\mu^k=\mcA^{real}(S)$ satisfies
        \begin{equation*}
            \Pr_{(x,y)\sim\mcD}\left[y \notin \mu^k(x)\right] \le \widetilde{O}\left(\frac{k^6(\dds)^{1.5}+\log(1/\delta)}{m}\right).
        \end{equation*}
        \item For every distribution $\mcD$ on $(\mcX \times \mcY)$, every $\delta \in (0,1)$, and every integer $m \in \N$, with probability at least $1-\delta$ over a sample $S$ of size $m$ drawn i.i.d. from $\mcD$, the $k$-list hypothesis $\mu^k=\mcA^{agn}(S)$ satisfies
        \begin{equation*}
            \Pr_{(x,y)\sim\mcD}\left[y \notin \mu^k(x)\right] \le \inf_{h \in \mcH} L_{\mcD}(h) + \widetilde{O}\left(\sqrt{\frac{k^6(\dds)^{1.5}+\log(1/\delta)}{m}}\right),
        \end{equation*}
        where $L_{\mcD}(h) = \Pr_{(x,y) \sim \mcD}[y \neq h(x)]$.
    \end{enumerate}
\end{theorem}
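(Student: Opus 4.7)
The plan is to mirror the three-stage pipeline of \cite{brukhim2022characterization}, lifting each ingredient from singleton to $k$-list predictions: first build a weak $k$-list learner from the finite $k$-DS dimension, then boost it to a strong $k$-list learner, and finally encode the strong learner as a sample compression scheme to obtain generalization. The $k$-DS dimension itself should be the natural arity-$(k+1)$ analogue of the DS dimension: a set $S$ is $k$-shattered by $\mcH$ when $\mcH|_S$ contains a ``pseudo-cube'' whose cells are $(k+1)$-tuples of hypotheses that mutually disagree at a coordinate (so that no single list of size $k$ can simultaneously cover them all).

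For the weak learner, I would generalize the one-inclusion graph (OIG) argument. In the singleton case, finite DS dimension bounds the average density of the OIG on $\mcH|_S$, and a careful edge orientation converts the density bound into a leave-one-out prediction rule that errs on only a sub-trivial fraction of cube neighbors. For $k$-lists I expect the density bound to transfer, but now to the $(k+1)$-uniform hypergraph whose cells are the mutually-disagreeing $(k+1)$-tuples above; a fractional orientation of this hypergraph should then yield, via leave-one-out, a $k$-list predictor whose error is bounded strictly below the trivial threshold by a margin depending only on $k$ and $\dds$. This step is the main technical obstacle, since the arity-$(k+1)$ combinatorics is considerably more delicate than the graph case, and carefully tracking how $k$ enters each estimate is what produces the $k^6$ factor in the final sample complexity.

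For boosting, I would adapt the boosting-over-lists procedure in \cite{brukhim2022characterization}: repeatedly invoke the weak $k$-list learner on reweighted distributions emphasizing points where the current ensemble fails, and at test time return the $k$ labels most frequently voted across rounds. A potential-function analysis, lifted to the list setting, should drive the training error to zero in $O(\log m)$ rounds while keeping the list size at $k$. The boosted hypothesis depends on its training sample only through a compressed subset of size $\widetilde{O}(k^6 \dds^{1.5})$, and the Littlestone--Warmuth compression generalization bound then yields the realizable sample complexity stated for $\mcA^{real}$. For $\mcA^{agn}$, I would invoke a standard agnostic-to-realizable reduction for list learning, which costs a square-root factor and gives the stated $\widetilde{O}(\sqrt{(k^6 \dds^{1.5}+\log(1/\delta))/m})$ excess-error rate above $\inf_{h\in\mcH} L_\mcD(h)$.
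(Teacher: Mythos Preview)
Your high-level pipeline (weak learner $\to$ boosting $\to$ compression $\to$ generalization) matches the paper's, and your agnostic reduction is exactly what the paper does. But two places in the proposal diverge from the paper in ways that matter.

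First, the object you build is not the one the paper uses. You propose a $(k+1)$-uniform hypergraph on mutually-disagreeing $(k+1)$-tuples together with a \emph{fractional} orientation. The paper instead keeps the ordinary one-inclusion hypergraph $\mcG(\mcH|_S)$ (edges of arbitrary size) and introduces \emph{$k$-list orientations}: each hyperedge is oriented to a subset of at most $k$ of its vertices, and the relevant degree is the number of incident edges of size $>k$. The combinatorial work (a list Sauer lemma bounding $|\mcH|_S|$ in terms of the $k$-Natarajan dimension, plus a shifting argument controlling the $k$-exponential dimension) yields an integral $k$-list orientation with maximum out-degree $O\!\bigl(k^4\,\dnat \log p\bigr)$, where $p$ is the size of the effective label set. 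The dependence on $\log p$ is the crux, and it is exactly where your plan has a gap.

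Second, and more importantly, you propose to obtain a $k$-list weak learner directly from the OIG and then boost it via top-$k$ voting. The direct $k$-list weak learner you can extract from finite $k$-DS dimension has success probability only $\tfrac{1}{\dds+1}$ on samples of size $\dds$ (this is what the paper's leave-one-out claim gives). Top-$k$ aggregation, however, requires each round's $k$-list to contain the true label with probability strictly above $\tfrac{k}{k+1}$; otherwise the correct label need not surface among the $k$ most frequent. So you cannot boost this weak learner to a $k$-list strong learner in one shot. The paper resolves this with a two-stage scheme that you do not mention: (i) amplify the success probability by taking unions over all $\binom{\dds+t}{t}$ sub-samples, which yields a weak learner whose list size blows up to some $k'\gg k$ but whose success probability is as close to $1$ as desired; (ii) use the resulting $k'$-list as a \emph{list-realizability} constraint, which bounds the effective label alphabet to size $p=k'$ and makes the $\log p$ in the orientation bound finite. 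Only under this list realizability does the one-inclusion $k$-list algorithm achieve error $O\!\bigl(k^4\dnat\log(k')/n\bigr)$ on size-$n$ samples, which is what drives the minimax/top-$k$ boosting and gives the final compression size $\widetilde{O}(k^6(\dds)^{1.5})$. Without stage (i) the alphabet is unbounded and the orientation bound is vacuous; without stage (ii) you never get the per-round accuracy needed for top-$k$ aggregation. Your plan should incorporate this detour through a large intermediate list and the list Sauer/shifting machinery that makes stage (ii) work.
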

Taken together, the two theorems above provide a complete picture about list PAC learnability for a fixed list size $k$.
It is remarkable that we are able to obtain such a tight characterization in terms of the list size $k$ -- in particular, note that we do not need to relax the list size $k$ in the algorithms guaranteed by the sufficiency statement.
The dependence on $k$-DS dimension above is qualitatively the same as that on DS dimension in \cite[Theorem 1]{brukhim2022characterization}, but our bounds suffer an additional small polynomial factor in the list size. In the case when the DS dimension of the class is infinite (so that we cannot obtain PAC learning guarantees for it), if the $k$-DS dimension of the class is finite for some $k$ (see \Cref{eg-finite-ds-infinite-k-ds}), \Cref{thm:sufficiency} 
shows that the class can be list learned.
The rest of the paper is dedicated towards proving these theorems, and is organized as follows:

\paragraph{Roadmap.}  First, in \Cref{sec:preliminaries}, we set up notation and formalize the context of our work. We provide definitions of various dimensions (Natarajan, DS, exponential) from the literature that are relevant in the discussion of PAC learnability. We also define the notion of \textit{one-inclusion graphs} of hypothesis classes \cite{haussler1994predicting, rubinstein2006shifting}, and discuss the task of finding good \textit{orientations} of these graphs. The challenge is to pick the correct generalizations of all these objects so that they fit together to give us a list learning algorithm. We devise list analogues for each of these dimensions ($k$-Natarajan, $k$-DS, $k$-exponential) as well as quantities related to the one-inclusion graph ($k$-degree, list orientations, etc.). We also demonstrate the power of list PAC learning by providing an example of a hypothesis class that has \textit{infinite} DS dimension (so that it is not PAC learnable), but finite $k$-DS dimension (so that, by \Cref{thm:sufficiency} above, it is $k$-list PAC-learnable). 

Then, in \Cref{sec:necessity}, we prove \Cref{thm:necessity} above. Inspired by the lower bound construction in \cite{daniely2014optimal}, we instantiate a hard learning problem that directly captures the crux behind the definition of the $k$-DS dimension. By showing that any list-learning algorithm errs with a good chance on this hard instance, we show that no algorithm can achieve arbitrarily low error \textit{for any sample size} when the $k$-DS dimension of the hypothesis class is infinite.

Next, 
in a sequence of steps, we 
construct a successful list learning algorithm for hypothesis classes having finite $k$-DS dimension. In \Cref{sec:list-sauer-lemma}, we revisit a classical result in learning theory, namely Sauer's lemma \cite{sauer1972density, shelah1972combinatorial} (also known as the Perles–Sauer–Shelah lemma), which controls the growth of the size of a hypothesis class when restricted to sets of increasing size. Sauer's lemma is an important stepping stone in proving learnability results in both binary and multiclass learning \cite{haussler1995generalization}. We successfully show a list-variant of Sauer's lemma, where we are interested in controlling the growth of a hypothesis class in terms of its relevant \textit{list} dimensions, when restricted to sets of increasing size. It is worth mentioning that a result of this form was used in an entirely different context -- inapproximability in deterministic truthful mechanisms -- in the work of \cite{daniely2015inapproximability}. In our work, we need the list-variant of Sauer's lemma as a crucial ingredient to prove our sufficiency theorem.

Thereafter, in \Cref{sec:shifting}, we 
generalize the beautiful combinatorial technique of shifting used in \cite{brukhim2022characterization}. At a high level, an object called the \textit{one-inclusion graph} of a hypothesis class (see \Cref{def:oig}) turns out to be very important in devising learning algorithms. In particular, one is interested in \textit{orienting the edges} of this graph in a smart way --- good orientations result in accurate learning algorithms. In the pursuit of coming up with nice orientations of the one-inclusion graph, the technique of shifting comes in extremely handy. Shifting simplifies a hypothesis class to a class that has related list dimensions, but is easier to reason about if we are interested in orientations. 
In \Cref{sec:shifting}, we adapt
the results in \cite{brukhim2022characterization} in a step-by-step manner to the list learning setting. We carefully track how the relevant list dimensions change during the process of shifting, before finally coming up with a greedy method for constructing good orientations.

Armed with the list-version of Sauer's lemma and the ability to construct good orientations of one-inclusion graphs, we combine these tools in \Cref{sec:sufficiency}. This section is also heavily inspired by the proof methodology in \cite{brukhim2022characterization}. We first establish a list version of the one-inclusion learning algorithm \cite{haussler1994predicting, rubinstein2006shifting} as a major building block, and state some nice properties about it. Then, we show that every hypothesis class with finite $k$-DS dimension has a 
\textit{list sample compression scheme}. We are interested in compression because existing results in the literature \cite{david2016statistical} equate learnability with compressibility, and these results are directly amenable to the list learning setting. Intuitively, compressability means that given a sample, it is possible to produce a smaller sub-sample from which the labels on the sample can be recovered --- we show that finite $k$-DS dimension gives a hypothesis class this ability, and consequently makes it list-learnable, thus completing the proof of \Cref{thm:sufficiency}.

\subsection{Other related work}
\label{sec:related-work}
\paragraph{Beyond binary classes} The quest for a dimension capturing PAC learnability in the multiclass setting has indeed been challenging. One of the significant obstacles that arises in the multiclass setting with infinite labels is that the \textit{Empirical Risk Minimization} (ERM) rule ceases to be a learner \cite{daniely2014optimal, daniely2015multiclass}, and as a result, uniform convergence results from the binary case do not translate over. When the number of labels is bounded, \cite{ben1992characterizations} identified a general framework of dimensions called \textit{$\Psi$-distinguishers} that characterize PAC learnability. The Natarajan dimension, Graph dimension \cite{natarajan1989learning}, as well as Pollard's pseudo-dimension \cite{pollard1990empirical} are all special cases within this framework. However, this framework no longer applies when the number of labels is infinite.

\paragraph{Beyond PAC learning}
Recently, there has also been a flurry of fascinating work that seeks to go beyond the classical PAC learning model. In practical machine learning settings, the source of data is generally fixed; however, since PAC learning is concerned only with worst-case error rates that hold uniformly over all data distributions, it can be an overly pessimistic model. To remedy this, \cite{bousquet2021theory} initiated the study of distribution-dependent error rates in the model of \textit{universal learning}. In this model, they show a surprising phenomenon that any hypothesis class can only fall in \textit{one of three} error rate regimes --- linear, exponential or arbitrarily slow rates.

Another recent work is that by \cite{alon2022theory} which seeks to model known assumptions on the data via \textit{partial} concept classes instead of \textit{total} concept classes. While total (binary) concept classes necessarily need to map the entire data domain to $\{0,1\}$, partial classes are allowed to be undefined $(\star)$ on some part of the domain. The regions where they are undefined precisely capture the underlying assumptions on the data (e.g. if it is known that the data is linearly separated by a margin, the partial concept can map the margin area to $\star$).

Most recently, \cite{kalavasis2022multiclass} extend both the above-mentioned frameworks of universal learning and partial classes to the multiclass setting (although they restrict themselves to bounded labels as well). Interestingly, the one-inclusion graph prediction strategy \cite{haussler1994predicting, rubinstein2006shifting} plays an important role in all these works, as it also does in \cite{brukhim2022characterization}.

Finally, \cite{cheraghchi2021list} also consider list learning in the \textit{attribute-noise model} \cite{shackelford1988learning}. In this model, the attributes of the input data are allowed to be perturbed by noise, and the goal is to still be able to obtain PAC-like guarantees with respect to clean data. \cite{cheraghchi2021list} consider the setting of learning binary hypotheses in this model, and show that one can circumvent (in certain cases) otherwise information-theoretically impossible results by outputting a short list of binary hypotheses. Our object of focus in this paper is slightly different, in that we are still in the noise-free setting, but want to precisely characterize (by a suitable complexity parameter) when a class (with potentially infinite labels) can be list-learned (in the PAC sense, as defined in \Cref{def:list-pac-learnability} ahead).
\section{Preliminaries and Definitions}
\label{sec:preliminaries}

\paragraph{Notation.}
The data domain is denoted as $\mcX$ and the label domain is denoted as $\mcY$. A hypothesis class $\mcH$ is a subset $\mcH \subseteq \mcY^\mcX$. A distribution $\mcD$ over $(\mcX \times \mcY)$ is said to realizable by $\mcH$ if there exists $h^* \in \mcH$ such that $\Pr_{(x,y) \sim \mcD}[h^*(x) \neq y] = 0$. For a sequence $S \in \mcX^m$, we refer to the restriction of $\mcH$ to $S$ as $\mcH|_S$, and we think of $\mcH|_S$ as a set of vectors of size $m$. $\N$ denotes the set of positive integers. Often times, $\mcX$ may be $[m]$ and $\mcY$ may be $[p]^m$ for $p,m \in \N$. Since we are interested in the problem of predicting short lists of size $k$ on a learning problem with a large number of labels, we will generally operate in a regime where $k \ll |\mcY|$. We routinely refer to lists of size $k$ as $k$-lists.

\subsection{Dimensions and learnability}
\label{sec:dimensions-learnability}
We first recall the definition of PAC learning, due to \cite{valiant1984theory}. In this framework, one receives a training sample drawn i.i.d. from some arbitrary distribution on the data and labeled by some target hypothesis in a hypothesis class. This is known as the \textit{realizable} case, and the goal of any learning algorithm is to output a hypothesis, which with good chance, agrees with the target hypothesis on a new point sampled from the same distribution. In the \textit{agnostic} case, one receives an arbitrary sample from any \textit{joint distribution} over the data and labels (so that no hypothesis in the class may completely agree with the sample), but the goal then is to output a hypothesis that achieves low error relative to the \textit{best hypothesis} in the class on a new test point drawn from the same distribution. We restate the formal definition (for the realizable setting) below, and refer the reader to \cite[Chapter 3]{shalev2014understanding} for a more complete exposition on realizable as well as agnostic PAC learning.
\begin{definition}[PAC learning \cite{valiant1984theory}]
    \label{def:pac-learnability}
    Let $\mcH \subseteq \mcY^{\mcX}$ be a hypothesis class. Let $\mcD$ be any distribution over $(\mcX \times \mcY)$ realizable by $\mcH$. For a hypothesis $h:\mcX \to \mcY$, define $\err_{\mcD}(h)=\Pr_{(x,y) \sim \mcD}[h(x)\neq y]$. We say that $\mcH$ is PAC learnable by a learning algorithm $\mcA$ with sample complexity $m_{\mcA, \mcH}: (0,1) \times (0,1)\to\N$ if for every $\epsilon, \delta > 0$, every distribution $\mcD$ realizable by $\mcH$, for $m \ge m_{\mcA, \mcH}(\epsilon, \delta)$, $\Pr_{S \sim \mcD^m}[\err_{\mcD}(\mcA(S)) \ge \epsilon]\le\delta$, where $\mcA(S)$ is the hypothesis output by the learning algorithm on input $S$.
\end{definition}

The definition above naturally lends itself to the notion of \textit{list PAC learning} that we care about:
\begin{definition}[List PAC learning]
    \label{def:list-pac-learnability}
    Let $\mcH \subseteq \mcY^{\mcX}$ be a hypothesis class. Let $\mcD$ be any distribution over $(\mcX \times \mcY)$ realizable by $\mcH$. For a $k$-list hypothesis $\mu^k:\mcX \to \{Y \subseteq \mcY:|Y|\le k\}$, define $\err_{\mcD}(\mu^k)=\Pr_{(x,y) \sim \mcD}[\mu^k(x)\not\owns y]$. We say that $\mcH$ is $k$-list PAC learnable by a list learning algorithm $\mcA$ with sample complexity $m^k_{\mcA, \mcH}: (0,1) \times (0,1)\to\N$ if for every $\epsilon, \delta > 0$, every distribution $\mcD$ realizable by $\mcH$, for $m \ge m^k_{\mcA, \mcH}(\epsilon, \delta)$, $\Pr_{S \sim \mcD^m}[\err_{\mcD}(\mcA(S)) \ge \epsilon]\le\delta$, where $\mcA(S)$ is the $k$-list hypothesis output by the list learning algorithm on input $S$.
\end{definition}

\begin{remark}
    \label{rem:list-learning-definition-in-BCD+22}
    We note here that a similar definition of list PAC learnability is given in \cite[Definition 30]{brukhim2022characterization} --- however, the important thing is that the list size $k$ for the purposes of their definition can depend on the success probability. In particular, the list size can be quite large for a high success probability. In contrast, the definition above requires list learnability for a \textit{fixed} list size \textit{irrespective} of the success probability.
\end{remark}

Next, we go on to define several notions of ``dimensions'' for hypothesis classes, each of which captures an appropriate notion of complexity of the hypothesis class. Our first definition is that of the DS dimension.

\begin{definition}[DS dimension \cite{daniely2014optimal}]
    \label{def:ds-dim}
    Let $\mcH \subseteq \mcY^\mcX$ be a hypothesis class and let $S \in \mcX^d$ be a sequence. Let us think of the members of $\mcH|_S$ as vectors in $\mcY^d$. For $i \in [d]$, we say that $f,g \in \mcH|_S$ are $i$-neighbours if $f_i \neq g_i$ and $f_j = g_j, \; \forall j \neq i$. We say that $\mcH$ DS shatters $S$ if there exists $\mcF \subseteq \mcH, |\mcF|<\infty$ such that $\forall f \in \mcF|_S ,\; \forall i \in [d],$ $f$ has at least one $i$-neighbor. The DS dimension of $\mcH$, denoted as $d_{DS}=d_{DS}(\mcH)$, is the largest integer $d$ such that $\mcH$ DS shatters some sequence $S \in \mcX^d$.
\end{definition}
While \cite{daniely2014optimal} showed that the DS dimension of a hypothesis class needs to be finite for it to be multi-class learnable, \cite{brukhim2022characterization} showed that finite DS dimension is in fact sufficient for learnability. A natural analogue of this dimension in the context of predicting lists of size $k$ is as follows:

\begin{definition}[$k$-DS dimension]
    \label{def:k-ds-dim}
    Let $\mcH \subseteq \mcY^\mcX$ be a hypothesis class and let $S \in \mcX^d$ be a sequence.
    We say that $\mcH$ $k$-DS shatters $S$ if there exists $\mcF \subseteq \mcH, |\mcF|<\infty$ such that $\forall f \in \mcF|_S ,\; \forall i \in [d],$ $f$ has at least $k$ $i$-neighbors. The $k$-DS dimension of $\mcH$, denoted as $\dds=\dds(\mcH)$, is the largest integer $d$ such that $\mcH$ $k$-DS shatters some sequence $S \in \mcX^d$.
\end{definition}

Observe that the two definitions above differ only in the number of $i$-neighbors they require each hypothesis to have --- it is easy to see that $d^k_{DS} > d^{k'}_{DS}$ for $k < k'$. In fact, we can easily construct a hypothesis class that has infinite DS dimension, but finite $k$-DS dimension for $k \ge 2$.

\begin{example}
\label{eg-finite-ds-infinite-k-ds}
Consider the hypothesis class $\mcH \subseteq \N^{\N}$ defined as below:

\begin{alignat*}{10}
\mcX\quad=\qquad &\quad1\qquad &&\quad2\qquad &&\quad3\qquad &&\quad4\qquad &&\quad5\qquad &&\quad6\qquad &&\quad7\qquad &&\quad8\qquad &&\quad9\qquad &&\;\dots\qquad \\
\hline\\
&\begin{pmatrix}
1 \\ 2 \\ 3 
\end{pmatrix}\times
&&\begin{pmatrix}
1 \\ 2 \\ 3
\end{pmatrix}\times
&&\begin{pmatrix}
1 \\ 2 \\ 3
\end{pmatrix}\times
&&\begin{pmatrix}
1 \\ 2 
\end{pmatrix}\times
&&\begin{pmatrix}
1 \\ 2 
\end{pmatrix}\times
&&\begin{pmatrix}
1 \\ 2 
\end{pmatrix}\times&&\begin{pmatrix}
1 \\ 2 
\end{pmatrix}\times
&&\begin{pmatrix}
1 \\ 2 
\end{pmatrix}\times
&&\begin{pmatrix}
1 \\ 2 
\end{pmatrix}\times
&&\;\dots
\\
& && && && && \;\bigcup && && && && && \\
\mcH\quad=\qquad&\begin{pmatrix}
4 \\ 5 \\ 6
\end{pmatrix}\times
&&\begin{pmatrix}
4 \\ 5 \\ 6
\end{pmatrix}\times
&&\begin{pmatrix}
4 \\ 5 \\ 6
\end{pmatrix}\times
&&\begin{pmatrix}
3 \\ 4
\end{pmatrix}\times
&&\begin{pmatrix}
3 \\ 4
\end{pmatrix}\times
&&\begin{pmatrix}
3 \\ 4
\end{pmatrix}\times&&\begin{pmatrix}
3 \\ 4
\end{pmatrix}\times
&&\begin{pmatrix}
3 \\ 4
\end{pmatrix}\times
&&\begin{pmatrix}
3 \\ 4
\end{pmatrix}\times
&&\;\dots
\\
& && && && && \;\bigcup && && && && && \\
&\begin{pmatrix}
7 \\ 8 \\ 9
\end{pmatrix}\times
&&\begin{pmatrix}
7 \\ 8 \\ 9
\end{pmatrix}\times
&&\begin{pmatrix}
7 \\ 8 \\ 9
\end{pmatrix}\times
&&\begin{pmatrix}
5 \\ 6
\end{pmatrix}\times
&&\begin{pmatrix}
5 \\ 6
\end{pmatrix}\times
&&\begin{pmatrix}
5 \\ 6
\end{pmatrix}\times&&\begin{pmatrix}
5 \\ 6
\end{pmatrix}\times
&&\begin{pmatrix}
5 \\ 6
\end{pmatrix}\times
&&\begin{pmatrix}
5 \\ 6
\end{pmatrix}\times
&&\;\dots
\\
& && && && && \;\bigcup && && && && && \\
&\quad\vdots\qquad &&\quad\vdots\qquad && \quad\vdots\qquad &&\quad\vdots\qquad &&\quad\vdots\qquad && \quad\vdots\qquad &&\quad\vdots\qquad &&\quad\vdots\qquad &&\quad\vdots\qquad && \;\;\vdots\qquad
\end{alignat*}
We can see that the DS dimension of this class is infinity. This is because for any $d > 4$, $\mcH$ restricted to $(4,5,\dots,d)$ definitely contains $\{1,2\}^{d-3}$. Thus, $\mcH$ is not PAC learnable. However, observe that the $2$-DS dimension of this class is just 3. We can see that the sequence $(1,2,3)$ is $2$-DS shattered --- but this is the largest sequence that is $2$-DS shattered by $\mcH$. This is because, for any sequence that contains $x \in \{4,5,\dots\}$, for a fixed labelling of the points in the sequence other than $x$, we can only obtain two different labels on $x$ when restricting $\mcH$ to such a sequence (and so, we cannot obtain more than one neighbor in the direction of $x$). Hence, such a sequence cannot be $2$-DS shattered. Note that \Cref{thm:sufficiency} still implies that $\mcH$ is \textit{2-list PAC learnable}!

This example can easily be generalized to show the existence of hypothesis classes having finite $k$-DS dimension but infinite $k'$-DS dimension for any $k > k'$. 
\end{example}

We now also define other related dimensions from the literature, and their list analogues. Importantly, each of these turns out to be the \textit{right} analogue for the purposes of our subsequent analysis.

\begin{definition}[Natarajan dimension \cite{natarajan1989learning}]
    \label{def:natarajan-dim}
    A hypothesis class $\mcH \subseteq \mcY^{\mcX}$ Natarajan shatters a sequence $S \in \mcX^d$ if there exist $2$-lists $y_i \in \{Y \subseteq \mcY: |Y|=2\}$, $i=1,\dots,d$ such that $\mcH|_{S} \supseteq \prod_{i=1}^{d}y_i$, where $\prod_{i=1}^{d}y_i$ denotes the Cartesian product of the 2-lists $y_1,\dots,y_d$. The Natarajan dimension of $\mcH$, denoted as $d_{N}=d_{N}(\mcH)$, is the largest integer $d$ such that $\mcH$ Natarajan shatters some sequence $S \in \mcX^d$.
\end{definition}

As mentioned above, the Natarajan dimension was known to completely characterize learnability in the case when the number of labels $|\mcY|$ is finite. That it is not sufficient to characterize learnability when the number of labels $|\mcY|=\infty$ was only recently shown by \cite{brukhim2022characterization}.

The list analogue of the Natarajan dimension is defined ahead. This quantity was termed as ``$k$-dimension" in \cite{daniely2015inapproximability} -- the two definitions are equivalent.
\begin{definition}[$k$-Natarajan dimension]
    \label{def:k-natarajan-dim}
    A hypothesis class $\mcH \subseteq \mcY^{\mcX}$ $k$-Natarajan shatters a sequence $S \in \mcX^d$ if there exist $(k+1)$-lists $y_i \in \{Y \subseteq \mcY: |Y|=k+1\}$, $i=1,\dots,d$ such that $\mcH|_{S} \supseteq \prod_{i=1}^{d}y_i$. The $k$-Natarajan dimension of $\mcH$, denoted as $\dnat=\dnat(\mcH)$, is the largest integer $d$ such that $\mcH$ $k$-Natarajan shatters some sequence $S \in \mcX^d$.
\end{definition}

\begin{observation}
    \label{obs:dnat_le_dds}
    $d_N(\mcH) \le d_{DS}(\mcH)$ and $\dnat(\mcH) \le \dds(\mcH)$, for any $\mcH$.
\end{observation}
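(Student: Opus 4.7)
The plan is to prove both inequalities by a direct witness argument: a Natarajan (resp.\ $k$-Natarajan) shattered sequence is automatically DS (resp.\ $k$-DS) shattered, using the product structure of the shattered hypotheses as the witness set $\mcF$.

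First I would handle the $k$-Natarajan vs.\ $k$-DS case, since the Natarajan case is just the specialization to $k=1$ (modulo the off-by-one in list sizes between the two definitions). Suppose $\mcH$ $k$-Natarajan shatters $S=(x_1,\ldots,x_d)$ via $(k+1)$-lists $y_1,\ldots,y_d$, so that $\mcH|_S \supseteq \prod_{i=1}^d y_i$. For each tuple $v \in \prod_i y_i$ pick one hypothesis $h_v \in \mcH$ with $h_v|_S = v$, and let $\mcF = \{h_v : v \in \prod_i y_i\}$. Then $\mcF$ is finite (of size $(k+1)^d$) and $\mcF|_S = \prod_{i=1}^d y_i$. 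For any $f \in \mcF|_S$ and any coordinate $i \in [d]$, replacing $f_i$ by any of the other $k$ elements of $y_i$ produces a vector still in $\prod_i y_i = \mcF|_S$, and each such vector is an $i$-neighbor of $f$. This gives $k$ distinct $i$-neighbors of $f$ in every direction $i$, so $S$ is $k$-DS shattered by $\mcH$. Taking $d = \dnat(\mcH)$ yields $\dnat(\mcH) \le \dds(\mcH)$.

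The inequality $d_N(\mcH) \le d_{DS}(\mcH)$ follows by the identical argument with $k=1$: Natarajan shattering uses $2$-lists $y_i$, so for any $f$ in the product and any direction $i$, flipping the $i$-th coordinate to the unique other element of $y_i$ yields one $i$-neighbor in $\mcF|_S$, which is exactly what DS shattering demands.

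There is essentially no obstacle; the only thing to be a bit careful about is the finiteness requirement on $\mcF$ in \Cref{def:ds-dim,def:k-ds-dim}, which is automatic here because $\prod_i y_i$ has exactly $(k+1)^d$ (or $2^d$) elements, so choosing one preimage in $\mcH$ per tuple yields a finite $\mcF$. Everything else is immediate from unwinding the definitions.
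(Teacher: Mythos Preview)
Your argument is correct and is exactly the natural witness argument one would expect. The paper does not actually supply a proof for this observation --- it is stated as immediate from the definitions --- and your write-up simply fills in those details (product witness $\mcF$, finiteness from $(k+1)^d$, and the $k$ coordinate-swaps yielding the required $i$-neighbors).
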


The last dimension that we care about is the exponential dimension.
\begin{definition}[Exponential dimension \cite{brukhim2022characterization}]
    \label{def:exp-dim}
    We say that a hypothesis class $\mcH \subseteq \mcY^{\mcX}$ exponential shatters a sequence $S \in \mcX^d$ if $|\mcH|_{S}| \ge 2^{d}$. The exponential dimension of a hypothesis class $\mcH \subseteq \mcY^{\mcX}$, denoted as $d_{E}=d_{E}(\mcH)$, is the largest integer $d$ such that $\mcH$ exponential shatters some sequence $S \in \mcX^d$.
\end{definition}

The natural list generalization is:
\begin{definition}[$k$-exponential dimension]
    \label{def:k-exp-dim}
    We say that a hypothesis class $\mcH \subseteq \mcY^{\mcX}$ $k$-exponential shatters a sequence $S \in \mcX^d$ if $|\mcH|_{S}| \ge (k+1)^{d}$. The $k$-exponential dimension of a hypothesis class $\mcH \subseteq \mcY^{\mcX}$, denoted as $\dexp=\dexp(\mcH)$, is the largest integer $d$ such that $\mcH$ $k$-exponential shatters some sequence $S \in \mcX^d$.
\end{definition}

Now, we define an important combinatorial object associated with a hypothesis class, namely the one-inclusion graph.
\subsection{One-inclusion graphs}
\label{sec:oig}

\begin{definition}[One-inclusion graph \cite{haussler1994predicting, rubinstein2006shifting}]
    \label{def:oig}
    The one-inclusion graph of $\mcH \subseteq \mcY^m$ is a hypergraph $\mcG(\mcH)=(V,E)$ that is defined as follows. The vertex-set is $V=\mcH$. For each $i \in [m]$ and $f:[m]\setminus\{i\} \to\mcY$, let $e_{i,f}$ be the set of all $h \in \mcH$ that agree with $f$ on $[m]\setminus \{i\}$. The edge-set is
    \begin{equation}
        \label{eqn:oig-edge-set}
        E = \{e_{i,f}: i \in [m], f:[m] \setminus \{i\} \to \mcY, e_{i,f}\neq \emptyset \}.
    \end{equation}
    We say that the edge $e_{i,f}\in E$ is in the direction $i$, and is adjacent to/contains the hypothesis/vertex $h$ if $h \in e_{i,f}$. Every vertex $h\in V$ is adjacent to exactly $m$ edges. The size of the edge $e_{i,f}$ is the size of the set $|e_{i,f}|$. 
\end{definition}

With respect to the one-inclusion graph, one can think about the \textit{degrees}\footnote{All these degrees are once again natural generalizations of the degrees defined in \cite{daniely2014optimal, brukhim2022characterization}.} of its vertices.

\begin{definition}[$k$-degree]
    \label{def:k-degree}
    Let $\mcG(\mcH)=(V,E)$ be the one-inclusion graph of $\mcH \subseteq \mcY^m$. The $k$-degree of a vertex $v \in V$ is
    \begin{align*}
        \degree(v) &= |\{e \in E: v \in e, |e|>k\}|.
    \end{align*}
\end{definition}

\begin{definition}[Average $k$-degree]
    \label{def:avg-k-degree}
    Let $\mcG(\mcH)=(V,E)$ be the one-inclusion graph of $\mcH \in \mcY^m$. The average $k$-degree of $\mcH$ is
    \begin{align*}
        \avd(\mcH) = \frac{1}{|V|}\sum_{v \in V}\degree(v) = \frac{1}{|V|}\sum_{e \in E:|e|>k}|e|.
    \end{align*}
\end{definition}

We now define the notion of \textit{orienting} edges of a one-inclusion graph to lists of vertices they are adjacent to. As alluded to earlier, an orientation corresponds to the behavior of a (deterministic) learning algorithm while making predictions on an unlabeled test point, given a set of labeled points as input.
\begin{definition}[List orientation]
    \label{def:list-orientation}
    A list orientation $\sigmak$ of the one-inclusion graph $\mcG(\mcH) = (V,E)$ having list size $k$ is a mapping $\sigmak:E \to \{V'\subseteq V: |V'|\le k\}$ such that for each edge $e \in E$, $\sigmak(e) \subseteq e$.
\end{definition}

The $k$-outdegree of a list orientation $\sigmak$ is defined as:
\begin{definition}[$k$-outdegree of a list orientation]
    Let $\mcG(\mcH)=(V,E)$ be the one-inclusion graph of a hypothesis class $\mcH$, and let $\sigmak$ be a $k$-list orientation of it. The $k$-outdegree of $v \in V$ in $\sigmak$ is
    \begin{equation}
        \label{eqn:vertex-k-out-degree}
        \outdeg(v;\sigmak) = |\{e:v \in e, v \notin \sigmak(e)\}|.
    \end{equation}
    The maximum $k$-outdegree of $\sigmak$ is
    \begin{equation}
        \label{eqn:max-k-outdegree}
        \outdeg(\sigmak) = \sup_{v \in V} \outdeg(v;\sigmak).
    \end{equation}
\end{definition}

The following lemma is a nice introductory demonstration of how a bound on the $k$-DS dimension of a hypothesis class helps one greedily construct a list orientation of small maximum $k$-outdegree for its associated one-inclusion graph. The motif behind this greedy construction will be recurring ahead.

\begin{lemma}
    \label{lem:dds+1-outdeg-dds}
    If $\mcH \subseteq \mcY^{d+1}$ has $k$-DS dimension at most $d$, then there exists a $k$-list orientation $\sigmak$ of $\mcG(\mcH)$ with $\outdeg(\sigmak)\le d$.
\end{lemma}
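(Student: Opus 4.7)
The plan is to induct on $|\mcH|$, assuming $\mcH$ is finite (the infinite case would require an additional compactness or Zorn-type step, but the combinatorial heart of the argument is captured in the finite case). The base case $|\mcH|=1$ is trivial: every edge of $\mcG(\mcH)$ is a singleton, and orienting each such edge to itself yields zero outdegree.

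For the inductive step, I exploit the $k$-DS dimension bound to peel off a vertex. Since $\mcH$ has $k$-DS dimension at most $d$ and sits on a sequence of length $d+1$, the sequence $(1,\dots,d+1)$ cannot be $k$-DS shattered. Applying Definition~\ref{def:k-ds-dim} with the (finite) witness $\mcF = \mcH$, there must exist a vertex $f \in \mcH$ and a direction $i \in [d+1]$ such that $f$ has strictly fewer than $k$ $i$-neighbors in $\mcH$. Let $e^{\star}$ be the unique edge of $\mcG(\mcH)$ in direction $i$ containing $f$; then $|e^{\star}| \le k$, so setting $\sigma^{k}(e^{\star}) := e^{\star}$ is a legal $k$-list orientation of this edge. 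Now apply the inductive hypothesis to $\mcH' := \mcH \setminus \{f\}$ (which retains $k$-DS dimension at most $d$, as a subclass) to obtain a $k$-list orientation $\tau^{k}$ of $\mcG(\mcH')$ with maximum outdegree at most $d$. Extend $\tau^{k}$ to $\sigma^{k}$ on $\mcG(\mcH)$ as follows: for every edge $e \neq e^{\star}$ of $\mcG(\mcH)$, set $\sigma^{k}(e) := \tau^{k}(e \setminus \{f\})$ when $e \setminus \{f\}$ is nonempty, and set $\sigma^{k}(e) := \{f\}$ in the degenerate case $e = \{f\}$.

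Finally, I verify the outdegree bound. For $v = f$: the vertex is incident to exactly $d+1$ edges of $\mcG(\mcH)$, one of which is $e^{\star}$ with $f \in \sigma^{k}(e^{\star})$, so at most $d$ edges can contribute to $\outdeg(f;\sigma^{k})$. For $v \neq f$: the $d+1$ edges of $\mcG(\mcH)$ containing $v$ are in bijection (via $e \mapsto e \setminus \{f\}$) with the $d+1$ edges of $\mcG(\mcH')$ containing $v$; on every such edge the membership of $v$ in $\sigma^{k}$ matches that in $\tau^{k}$, except possibly at $e^{\star}$ (when $v$ is an $i$-neighbor of $f$), where $\sigma^{k}$ includes $v$ while $\tau^{k}$ may not, and this difference can only \emph{lower} $v$'s outdegree. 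Hence $\outdeg(v;\sigma^{k}) \le \outdeg(v;\tau^{k}) \le d$, closing the induction. The main bookkeeping obstacle is tracking the edge correspondence between $\mcG(\mcH)$ and $\mcG(\mcH')$, together with the observation that the peeled edge $e^{\star}$ is small enough ($|e^{\star}| \le k$) to be oriented in a way that simultaneously satisfies $f$ and cannot hurt any of $f$'s $i$-neighbors — this is precisely what the $k$-DS dimension bound buys us.
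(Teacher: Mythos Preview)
Your proof is correct and follows essentially the same strategy as the paper's: induct on $|\mcH|$, use the $k$-DS bound to locate a vertex adjacent to at least one edge of size $\le k$, peel it off, apply the inductive hypothesis to $\mcH\setminus\{f\}$, and extend the resulting orientation back to $\mcG(\mcH)$. The only cosmetic difference is in the extension step---the paper inserts the peeled vertex into \emph{every} small edge through it (exploiting that $|\tilde\sigma^k(e')|\le k-1$ there), whereas you orient just the single witnessed small edge $e^\star$ to its full vertex set---but either choice yields $\outdeg(f;\sigma^k)\le d$ and leaves all other outdegrees unchanged or lowered; the paper likewise handles only finite $\mcH$ in the main text and defers the compactness argument to an appendix.
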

\begin{proof}
    We prove this by induction on the size of $\mcH$. For the base case, we have $|\mcH|=1$. In this case, say $\mcH = \{h\}$, so that $\mcG(\mcH)$ will simply have $d+1$ singleton edges adjacent to $h$. We can have $\sigmak$ orient each of these edges towards the singleton list $\{h\}$, and this ensures that $\outdeg(\sigmak)=\outdeg(h;\sigmak)=0$. For the inductive step, let $|\mcH|>1$, and assume that the claim holds for all $\mcH'$ having $|\mcH'| \le |\mcH|-1$. Since $\dds(\mcH)\le d$, it must be the case that there exists $h \in \mcH$ which satisfies $|\{ e \in E: h \in e, |e|>k\}|\le d$. If not, every $h \in \mcH$ has at least $k$ $i$-neighbors in every direction $i \in [d+1]$, which means that the $k$-DS dimension of $\mcH$ would be $d+1$, contradicting the supposition that $\dds(\mcH)\le d$. Now, consider the hypothesis class $\mcH' = \mcH \setminus \{h\}$. Edges in $\mcG(\mcH')$ are obtained by deleting $h$ from the edges in $\mcG(\mcH)$ i.e., every edge in $\mcG(\mcH)$ has a counterpart in $\mcG(\mcH')$ (the only edges that do not have a counterpart are the singleton edges adjacent to $h$). The inductive hypothesis ensures that there exists a $k$-list orientation $\tilde{\sigma}^k$ of $\mcG(\mcH')$ with $\outdeg(\tilde{\sigma}^k)$ at most $d$. We will construct the required $k$-list orientation $\sigmak$ of $\mcG(\mcH'\cup\{h\})$ from $\tilde{\sigma}^k$ as follows: let us first think of all the edges $e$ that have a counterpart $e'$. For the edges that do not agree with $h$, $e=e'$, and we let $\sigmak(e)=\tilde{\sigma}^k(e')$. These edges do not contribute to the $k$-outdegree of $h$. For the edges $e$ that agree with $h$, we have that $e=e'\cup\{h\}$. But we know that at most $d$ of these edges $e$ have size larger than $k$. For these edges, we let $\sigmak(e)=\tilde{\sigma}^k(e')$ --- these will be the only edges contributing to the $k$-outdegree of $h$ in $\sigmak$. For the other edges $e$, we know that their size is at most $k$, and hence the size of their counterparts $e'$ was at most $k-1$. Thus, in $\tilde{\sigma}^k$, these edges $e'$ could only have been oriented to lists of size at most  $k-1$. For these edges $e$, we set $\sigmak(e) = \tilde{\sigma}^k(e') \cup \{h\}$. Finally, we orient all the singleton edges that $h$ is a part of in $\mcG(\mcH)$ to $\{h\}$. We can see that the $k$-outdegree of every vertex other than $h$ in $\sigmak$ is the same as it was in $\tilde{\sigma}^k$. The $k$-outdegree of $h$ is by construction at most $d$ (one for each edge that has size larger than $k$), which completes the inductive argument. The extension to the case when $|\mcH|=\infty$ requires a compactness argument, and is given in \Cref{sec:orientations-for-infinite-graphs-appendix}. 
\end{proof}

\section{Finite \texorpdfstring{$k$}{k}-DS dimension is necessary}
\label{sec:necessity}

In this section, we will prove \Cref{thm:necessity}, thereby establishing that finiteness of the $k$-DS dimension is a necessary property for a hypothesis class $\mcH$ to be $k$-list PAC learnable. We first prove necessity in the transductive list learning model. In this model, the \textit{transductive error rate} of any list learning algorithm $\mcA$ is the function $\epsilon^{k}_{\mcA, \mcH}:\N \to [0,1]$ defined as
\begin{equation}
    \label{eqn:trans-algo-error}
    \epsilon^k_{\mcA, \mcH}(m) = \sup_{\mcD}\sup_{h^* \in \mcH} \Pr_{(S,x) \sim \mcD^m}[\mcA(S, h^*|_S, x) \not\owns h^*(x)],
\end{equation}
where $\mcD$ is a distribution over $\mcX$ and $\mcA(S, h^*|_S, x)$ is the list output by $\mcA$ on the unlabeled point $x$, given input sample $S$ (of size $m-1$) labeled by $h^*$. The transductive error rate of the class $\mcH$ is then defined as
\begin{equation}
    \label{eqn:trans-error}
    \transerror(m) = \inf_{\mcA} \epsilon^k_{\mcA, \mcH}(m).
\end{equation}

\cite[Theorem 2]{daniely2014optimal} show that finite DS dimension is necessary for multiclass PAC learning; in what follows, we will generalize their lower bound argument to the list learning setting. Towards this, let us first define the notion of \textit{maximal average $k$-degree} of the one-inclusion graph $\mcG(\mcH)$.
\begin{definition}[Maximal average $k$-degree]
    \label{def:maximal-avg-k-degree}
    The maximal average $k$-degree of a hypothesis class $\mcH \subseteq \mcY^{\mcX}$ is
    \begin{align*}
        \md(\mcH) = \max_{\mcF \subseteq \mcH, |\mcF|<\infty} \avd(\mcF).
    \end{align*}
\end{definition}

We can now define the \textit{density} of the hypothesis class $\mcH$ as:
\begin{equation}
    \label{eqn:muh-definition}
    \muh(m) = \max_{S \in \mcX^m}[\md(\mcH|_S)].
\end{equation}

\begin{observation}
    \label{obs:dds-max-muh}
    $\dds(\mcH) = \max\{m:\muh(m)=m\}$.
\end{observation}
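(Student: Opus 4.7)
The plan is to unpack the definitions and observe that the two sides of the claimed equality count exactly the same thing. The crux is to show that, for a sequence $S \in \mcX^m$, the quantity $\md(\mcH|_S)$ attains its maximum possible value $m$ precisely when $\mcH$ $k$-DS shatters $S$. Once this equivalence is established, the observation follows immediately because $\max\{m : \dds(\mcH) \ge m\} = \dds(\mcH)$.

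First I would record the trivial upper bound: for any finite $\mcF \subseteq \mcY^m$ the one-inclusion graph $\mcG(\mcF) = (V,E)$ satisfies $\degree(v) \le m$ for every $v \in V$, since $v$ is adjacent to exactly $m$ edges (one per coordinate direction). Hence $\avd(\mcF) \le m$, and $\avd(\mcF) = m$ if and only if every vertex $v \in V$ has $\degree(v) = m$, i.e.\ every edge incident to $v$ has size strictly greater than $k$. Translating back through \Cref{def:oig}: the edge $e_{i,v}$ has size greater than $k$ exactly when $v$ has at least $k$ $i$-neighbors in $\mcF$. Therefore $\avd(\mcF) = m$ if and only if every $f \in \mcF$ has at least $k$ $i$-neighbors for every $i \in [m]$.

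Next I would apply this to $\mcF \subseteq \mcH|_S$. By the previous step, $\md(\mcH|_S) = m$ iff there is a finite $\mcF \subseteq \mcH|_S$ in which every vector has at least $k$ $i$-neighbors for every $i$. Lifting each element of this $\mcF$ to an arbitrary preimage in $\mcH$ (and keeping it finite) yields the finite witness $\mcF \subseteq \mcH$ required by \Cref{def:k-ds-dim}, and conversely any $k$-DS shattering witness projects down to such an $\mcF$. Hence $\muh(m) = m$ iff $\mcH$ $k$-DS shatters some $S \in \mcX^m$, which by definition is equivalent to $\dds(\mcH) \ge m$.

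Combining, $\{m \in \N : \muh(m) = m\} = \{m \in \N : \dds(\mcH) \ge m\}$, and taking the max of both sides gives $\max\{m : \muh(m) = m\} = \dds(\mcH)$, proving the observation (with the convention that the max is $\infty$ precisely when $\dds(\mcH) = \infty$). The argument is essentially a bookkeeping exercise; the only place to be slightly careful is the lift from $\mcF \subseteq \mcH|_S$ back to $\mcF \subseteq \mcH$ in the definition of $k$-DS shattering, which is the main (and only minor) obstacle.
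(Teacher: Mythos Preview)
Your proposal is correct and follows essentially the same approach as the paper: both arguments hinge on the observation that $\avd(\mcF)=m$ forces $\degree(f)=m$ for every $f\in\mcF$, which is exactly the $k$-DS shattering condition, and conversely. Your write-up is slightly more explicit (you state the full biconditional $\muh(m)=m \iff \dds(\mcH)\ge m$ and note the lifting of witnesses from $\mcH|_S$ back to $\mcH$), but the underlying reasoning is identical to the paper's.
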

\begin{proof}
    If $\muh(m)=m$, there exists $S \in \mcX^m$ and $\mcF \subseteq \mcH, |\mcF| < \infty$ such that
    \begin{align*}
        &\avd(\mcF|_S) = m \\
        \implies \qquad & \degree(f) = m,\; \forall f \in \mcF|_S,
    \end{align*}
    which means that every $f \in \mcF|_S$ has an edge of size $> k$ in each of the $m$ directions, which by definition means that $S$ is $k$-DS shattered by $\mcF$, and hence by $\mcH$.
    On the other hand, if $\dds(\mcH)=m$, there exists a finite $\mcF \subseteq \mcH$ which $k$-DS shatters some sequence $S \in \mcX^m$, which means that $\avd(\mcF|_S)=m$, and hence $\muh(m)=m$.
\end{proof}

The following theorem states that the density lower bounds the transductive error rate of $\mcH$.
\begin{theorem}[Transductive model lower bound]
    \label{thm:trans-error-lb}
    For every hypothesis class $\mcH$,
    \begin{align*}
        \frac{\muh(m)}{e(k+1)m} \le \transerror(m).
    \end{align*}
\end{theorem}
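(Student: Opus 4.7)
The plan is to generalize the transductive lower bound of \cite[Theorem 2]{daniely2014optimal} to the $k$-list setting via Yao's principle. Fix $\epsilon>0$ and, by the definition of $\muh(m)$, pick a finite $\mcF \subseteq \mcH$ (in bijection with $\mcF|_S$ via choosing one representative per vertex) and a sequence $S=(s_1,\dots,s_m)\in\mcX^m$ such that $\avd(\mcF|_S)\ge\muh(m)-\epsilon$. The hard distribution would be $\mcD=\mathrm{Unif}\{s_1,\dots,s_m\}$ on $\mcX$, with target $h^*$ drawn uniformly from $\mcF$. It then suffices to lower bound the expected error (over both $h^*$ and $(S',x)\sim\mcD^m$) of an arbitrary deterministic algorithm $\mcA$, since the expected error under $h^*\sim\mathrm{Unif}(\mcF)$ is at most $\sup_{h^*}\Pr[\mathrm{err}\mid h^*]$, and Yao's principle reduces randomized algorithms to deterministic ones.

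The main decomposition is by the identity of the test point: each of the $m$ events $\{x=s_i\}$ has probability $1/m$, and inside each I would condition on the ``fresh'' event $E_i$ that $s_i$ does not appear in the training sample $S'$. Since $(S',x)\sim\mcD^m$ are iid, $\Pr[E_i\mid x=s_i]=(1-1/m)^{m-1}$, which decreases monotonically to $1/e$ and is thus $\ge 1/e$ for all $m\ge 1$; this produces exactly the $1/e$ factor in the statement. Under $E_i$, the training data $(S',h^*|_{S'})$ depends on $h^*$ only through $h^*|_{S\setminus\{s_i\}}$, so any two hypotheses lying in the same edge $e_{i,f}$ of the one-inclusion graph $\mcG(\mcF|_S)$ produce identical training data and therefore receive an identical $k$-list output $L$ from $\mcA$.

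The combinatorial core is then: within the edge $e_{i,f}$, the $|e_{i,f}|$ hypotheses carry pairwise distinct labels on $s_i$, so since $|L|\le k$ at most $k$ of them have $h^*(s_i)\in L$. Hence $\mcA$ errs on at least $|e_{i,f}|-k \ge |e_{i,f}|/(k+1)$ of them whenever $|e_{i,f}|>k$ (using $(k{+}1)(|e|{-}k)\ge|e|$ iff $|e|\ge k{+}1$). Averaging over $h^*\sim\mathrm{Unif}(\mcF)$, the conditional error given $x=s_i$ and $E_i$ is at least $\sum_{f:\,|e_{i,f}|>k}|e_{i,f}|/((k+1)|\mcF|)$. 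Summing the incidences over all directions $i\in[m]$ gives $\sum_{e\in E:\,|e|>k}|e|=|\mcF|\cdot\avd(\mcF|_S)$ by \Cref{def:avg-k-degree}. Multiplying the three factors ($1/m$ from the choice of $x$, $\ge 1/e$ from $E_i$, and the per-direction incidence bound) yields $\Pr[\mathrm{err}]\ge\avd(\mcF|_S)/(e(k+1)m)\ge(\muh(m)-\epsilon)/(e(k+1)m)$, and sending $\epsilon\to 0$ finishes the proof. The main subtlety I expect is the bookkeeping across edges that might share the same restricted training labels $y'=f|_{S'}$: indexing hypotheses directly by their projections in $V=\mcF|_S$ keeps label values within each edge automatically distinct, so the $|L|\le k$ covering bound applies cleanly inside each $e_{i,f}$ without any double counting — each hypothesis lies in exactly one edge per direction, and the error at direction $i$ is charged once to that edge.
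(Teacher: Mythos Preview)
Your proposal is correct and follows essentially the same approach as the paper: uniform $\mcD$ on a (near-)maximizing sequence, random target $h^*$ uniform over a witnessing finite $\mcF$, the ``fresh'' event giving the $(1-1/m)^{m-1}\ge 1/e$ factor, and the per-edge counting that any $k$-list misses at least $|e|-k\ge |e|/(k+1)$ of the hypotheses in an edge with $|e|>k$. The only differences from the paper are cosmetic: you take an $\epsilon$-approximate maximizer and send $\epsilon\to 0$ (the paper assumes the max is attained), you invoke Yao's principle explicitly, and you group the sum by direction first rather than by edge---but the resulting calculation and bound are identical.
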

\begin{proof}
    Let $S^* = \{x_1,\dots,x_m\} \in \mcX^m$ be a sequence that realizes $\muh(m)=\md(\mcH|_{S^*})$. Further, let $\mcF \subseteq \mcH, |\mcF|<\infty$ be such that $\md(\mcH|_{S^*}) = \avd(\mcF|_{S^*})$. Let $\mcD=\mathrm{Unif}(S^*)$ be the uniform distribution on the elements in $S^*$. We will obtain the required bound via the probabilistic method. Let us draw $h^*$ uniformly at random from $\mcF|_{S^*}$. We will show that any list learning algorithm $\mcA$ satisfies
    \begin{equation}
        \label{eqn:prob-method-lb}
        \E_{h^* \sim \mathrm{Unif}(\mcF|_{S^*})} \E_{(S,x) \sim \mcD^m}\left[\mathds{1}[\mcA(S, h^*|_S, x) \not\owns h^*(x)] \right] \ge \frac{1}{e(k+1)m}\muh(m).
    \end{equation}
    Fix an edge $e \in \mcG(\mcF|_{S^*})$ that satisfies $|e|>k$. Say this edge is in the $i^{\text{th}}$ direction for some $i \in [m]$. We will think of how we can map this edge to instantiations of the learning problem for which the any algorithm $\mcA$ has large error. Assume the following event $\mathrm{E_e}$ happens: (1) the unlabeled test point $x$ happens to be $x_i$ (where $i$ is the direction of edge $e$), (2) the sample $S=\{x'_1,\dots,x'_{m-1}\}$ happens to be such that $x'_j \neq x_i$ for any $j\in[m-1]$, and (3) $h^*$ happens to be one of the $\ge k+1$ functions belonging to $e$. Conditioned on this event, the input that the algorithm receives is always identical, but the ground-truth label on the test point $x$ can be any one of $\ge k+1$ values. Thus, conditioned on this event, any algorithm $\mcA$ (deterministic/randomized) that outputs a list of  has expected error at least $\frac{1}{k+1}$. Thus, we can iterate over all such events $E_e$, and bound the LHS of \cref{eqn:prob-method-lb} as
    \begin{align*}
        \E_{h^* \sim \mathrm{Unif}(\mcF|_{S^*})} \E_{(S,x) \sim \mcD^m}\left[\mathds{1}[\mcA(S, h^*|_S, x) \not\owns h^*(x)] \right] &\ge \sum_{e:|e|>k}\Pr[E_e]\cdot\frac{1}{k+1}.
    \end{align*}
    Now, the probability of (1) above is $\frac{1}{m}$, that of (2) is $\left(1-\frac{1}{m}\right)^{m-1}$ and that of (3) is $\frac{|e|}{|\mcF|_{S^*}|}$. As event $E_e$, which is the (independent) conjunction of (1), (2) and (3), we get
    \begin{align*}
        \E_{h^* \sim \mathrm{Unif}(\mcF|_{S^*})} \E_{(S,x) \sim \mcD^m}\left[\mathds{1}[\mcA(S, h^*|_S, x) \not\owns h^*(x)] \right] &\ge \sum_{e:|e|>k}\frac{1}{m}\cdot \left(1-\frac{1}{m}\right)^{m-1} \cdot \frac{|e|}{|\mcF|_{S^*}|}\cdot\frac{1}{k+1} \\
        &= \frac{1}{m}\cdot \left(1-\frac{1}{m}\right)^{m-1}\cdot\frac{1}{k+1}\cdot \frac{1}{|\mcF|_{S^*}|}\sum_{e:|e|>k}|e| \\
        &= \frac{1}{(k+1)m}\cdot \left(1-\frac{1}{m}\right)^{m-1} \avd(\mcF|_{S^*}) \\
        &\ge \frac{\avd(\mcF|_{S^*})}{e(k+1)m} \\
        &= \frac{\muh(m)}{e(k+1)m}.
    \end{align*}
    Thus, the probabilistic method implies that there exists an $h^* \in \mcF|_{S^*}$ (and hence a corresponding $h^* \in \mcH$) satisfying the above inequality, which completes the proof.
\end{proof}


Now, if $\dds(\mcH) = \infty$, \Cref{obs:dds-max-muh} implies that $\muh(m)=m,\; \forall m \in \N$. \Cref{thm:trans-error-lb} then implies that for any $m$, $\transerror(m) \ge \frac{1}{e(k+1)}$. Consequently, for every fixed value of $k$, if $\dds(\mcH)=\infty$,  the (transductive) sample complexity of learning $\mcH$ is $\infty$.

Finally, the lower bound on the transductive error rate in \Cref{thm:trans-error-lb} also helps us lower bound the PAC sample complexity, thereby completing the proof of \Cref{thm:necessity}.
\begin{corollary}[PAC model lower bound]
    \label{cor:pac-sample-cxty-lb}
    If $\dds(\mcH)=\infty$, the PAC sample complexity for $k$-list learning $\mcH$ is $\infty$.
\end{corollary}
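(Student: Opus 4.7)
The plan is to reduce PAC list learning to the transductive model that Theorem \ref{thm:trans-error-lb} already handles, and derive a contradiction. Since $\dds(\mcH) = \infty$, Observation \ref{obs:dds-max-muh} gives $\muh(m) = m$ for every $m \in \N$, and Theorem \ref{thm:trans-error-lb} then yields the uniform lower bound $\transerror(m) \ge \frac{1}{e(k+1)}$ for all $m$. Suppose for contradiction that $\mcH$ is $k$-list PAC learnable, so that some algorithm $\mcA$ has finite sample complexity $m^k_{\mcA, \mcH}(\epsilon, \delta)$ for all $\epsilon, \delta > 0$. I would then choose $\epsilon, \delta > 0$ small enough that $\epsilon + \delta < \frac{1}{e(k+1)}$, and set $m = m^k_{\mcA, \mcH}(\epsilon, \delta)$.

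Next, I would construct a transductive algorithm $\mcA'$ that operates on $m+1$ samples: on input $(S, h^*|_S, x)$ with $|S| = m$, form the labeled sample $S' = \{(x_i, h^*(x_i))\}_{i=1}^m$, run $\mu^k \gets \mcA(S')$, and output $\mu^k(x)$. For any distribution $\mcD$ over $\mcX$ and any target $h^* \in \mcH$, the induced joint distribution $\mcD'$ on $\mcX \times \mcY$ (draw $x \sim \mcD$ and label it by $h^*(x)$) is realizable by $\mcH$; hence the PAC guarantee applied to $\mcA$ on $S \sim (\mcD')^m$ ensures $\Pr_{x \sim \mcD}[\mu^k(x) \not\owns h^*(x)] \le \epsilon$ with probability at least $1-\delta$ over $S$. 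Splitting the expectation on the success/failure of this event yields $\epsilon^k_{\mcA', \mcH}(m+1) \le \epsilon + \delta < \frac{1}{e(k+1)}$, so $\transerror(m+1) < \frac{1}{e(k+1)}$, contradicting the transductive lower bound.

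I do not anticipate any real obstacle here: the reduction exploits only that realizability and the list-error functional are preserved between the PAC and transductive models, and crucially we do not need to relax the list size $k$ when passing from one model to the other. The sole bookkeeping point is the off-by-one in sample counts (transductive learning with $m+1$ samples corresponds to PAC learning on $m$ labeled training points, matching equation \eqref{eqn:trans-algo-error}), but this costs nothing since the transductive lower bound $\frac{1}{e(k+1)}$ is uniform in $m$.
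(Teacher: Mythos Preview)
Your proposal is correct and follows essentially the same route as the paper's proof: both reduce a hypothetical PAC learner to a transductive learner, bound the transductive error by $\epsilon+\delta$ (the paper uses the special case $\epsilon=\delta$), and then contradict the uniform $\tfrac{1}{e(k+1)}$ lower bound from Theorem~\ref{thm:trans-error-lb} via Observation~\ref{obs:dds-max-muh}. The only difference is cosmetic---you frame it as a contradiction while the paper phrases it as a lower bound on $m_{\mcA,\mcH}$---and your handling of the off-by-one in sample counts is correct.
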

\begin{proof}
    Fix any $k$-list learning algorithm $\mcA$. Let $m_{\mcA, \mcH}\left(\frac{\epsilon}{2}, \frac{\epsilon}{2}\right)$ be the PAC sample complexity of $\mcA$ to $\left(\frac{\epsilon}{2}, \frac{\epsilon}{2}\right)$-learn $\mcH$. For the $\mcD$, $h^*$ given in the proof of \Cref{thm:trans-error-lb}, let us draw $m_{\mcA, \mcH}\left(\frac{\epsilon}{2}, \frac{\epsilon}{2}\right)$ samples from $\mcD$ labeled by $h^*$, and an additional unlabeled test point $x$ as input to $\mcA$. By the PAC learning guarantee, the probability that $\mcA$ errs on $x$ is at most $\left(1-\frac{\epsilon}{2}\right)\cdot\frac{\epsilon}{2}+\frac{\epsilon}{2}\cdot 1 = \epsilon$. But the lower bound in \Cref{thm:trans-error-lb} requires that
    \begin{align*}
        m_{\mcA, \mcH}\left(\frac{\epsilon}{2}, \frac{\epsilon}{2}\right) \ge \min\left\{ m \big| \frac{\muh(m+1)}{e(k+1)(m+1)} \le \epsilon \right\}.
    \end{align*}
    If $\dds(\mcH) = \infty$, \Cref{obs:dds-max-muh} implies that $\muh(m)=m$ for all $m \in \N$. Hence, it must be the case that
    \begin{align*}
         m_{\mcA, \mcH}\left(\frac{\epsilon}{2}, \frac{\epsilon}{2}\right) \ge \min\left\{ m \big| \frac{1}{e(k+1)} \le \epsilon \right\}.
    \end{align*}
    For the choice of $\epsilon=\frac{1}{e(k+2)}$, the RHS above is $\infty$, which completes the proof.
\end{proof}
\section{A generalization of Sauer's lemma for lists}
\label{sec:list-sauer-lemma}
The VC dimension of a binary hypothesis class $\mcH$ is the largest sequence size $d$ such that $\mcH$ restricted to a sequence of size $d$ has size $2^d$. When we start restricting $\mcH$ to sequences of size $m$ larger than $d$, how does the size of the restricted class grow as a function of $m$? Sauer's lemma \cite{sauer1972density, shalev2014understanding} states that when $m$ is larger than $d$, the size of $\mcH$ restricted to sequences of size $m$ is only a polynomial function in $m$ (as opposed to possibly being exponential for $m \le d$). This polynomial behavior crucially enables establishing uniform convergence (and PAC learnability) properties for binary classes with finite VC dimension. In the case of hypothesis classes with more than $2$ classes, \cite{haussler1995generalization} generalize Sauer's lemma, and show an analogous result in terms of the Natarajan dimension of the class. Naturally, in the list learning setting, we are interested in how the size of hypothesis classes having $k$-Natarajan $\dnat$ grows, when restricted to sequences of size larger than $\dnat$.

By carefully extending the techniques in \cite{haussler1995generalization}, we prove a generalization of Sauer's lemma for lists. A qualitatively identical theorem was proven in an altogether different problem setting of deriving inapproximability results in combinatorial auctions in \cite[Theorem 1.5]{daniely2015inapproximability}. In fact, it is worth pointing out that in both our generalization and \cite[Theorem 1.5]{daniely2015inapproximability}, the size of the hypothesis class no longer grows only polynomially with $m$, but has an exponential dependence. Therefore, such a generalization might a priori not appear useful in proving learning results. Even so, we successfully leverage the list-variant of Sauer's lemma in the context of list PAC learning, and include our (slightly structurally different) proof below for completeness.
\begin{theorem}[Sauer's lemma for lists]
\label{thm:list-sauer-lemma}
Let $d,m \ge 0$, $d \le m$, $k \ge 2$ and let $N_1,\dots,N_m > k+1$. Let
\begin{align*}
    F \subseteq \prod_{i=1}^m \{1,\dots,N_i\}
\end{align*}
be such that $d^k_{N}=d^k_{N}(F)\le d$. Let us denote the set of all $i$-sized subsets of $[m]$ as $\Gamma_{m,i} := \{S \subseteq [m]: |S|=i\}$. Then, we have that
\begin{align*}
    |F| \le k^{m-d}\sum_{i=0}^d\sum_{S \in \Gamma_{m,i}}\prod_{j \in S}\binom{N_j}{k+1}.
\end{align*}
\end{theorem}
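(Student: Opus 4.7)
The plan is to induct on $m$, with the base case $m = 0$ immediate since both sides equal $1$. For the inductive step, let $F_0 := F|_{[m-1]}$ denote the projection of $F$ onto the first $m - 1$ coordinates, and for each $g \in F_0$ let $n(g) := |\{v \in [N_m] : (g, v) \in F\}|$ be the size of the fiber above $g$ in coordinate $m$. Writing $|F| = \sum_g n(g)$ and dropping the nonpositive summands gives
\begin{equation*}
|F| \;\le\; k\,|F_0| \;+\; \sum_{g \in F_0 :\, n(g) \ge k+1} (n(g) - k).
\end{equation*}

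Since projection cannot increase the $k$-Natarajan dimension, $d^k_N(F_0) \le d$, and the inductive hypothesis at $(m-1, d)$ yields $|F_0| \le k^{m-1-d}\,B_{d, m-1}$, where I abbreviate $B_{d, m} := \sum_{i=0}^d \sum_{S \in \Gamma_{m,i}} \prod_{j \in S} \binom{N_j}{k+1}$. For the second sum, for each $(k+1)$-subset $T \subseteq [N_m]$ I define the rich-fiber class $F_T := \{g \in F_0 : T \subseteq \{v : (g, v) \in F\}\}$. The crucial observation is that $d^k_N(F_T) \le d - 1$: if $F_T$ were to $k$-Natarajan shatter some $S \subseteq [m-1]$ of size $d$ via $(k+1)$-lists $(y_i)_{i \in S}$, then $F$ would $k$-Natarajan shatter $S \cup \{m\}$ using those same $(y_i)$'s together with $y_m := T$ at coordinate $m$ (every $g$ witnessing the product on $S$ lies in $F_T$ and so extends by every value in $T$), contradicting $d^k_N(F) \le d$. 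Hence by the IH at $(m-1, d-1)$, $|F_T| \le k^{m-d}\,B_{d-1, m-1}$ for every $T$.

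To combine these, I use the elementary inequality $n - k \le \binom{n}{k+1}$ for all integers $n \ge 0$ (via the hockey-stick identity $\binom{n}{k+1} = \sum_{j=k}^{n-1} \binom{j}{k}$, which has $n - k$ terms each $\ge 1$ when $n \ge k+1$; trivial for $n \le k$), together with the double-count identity $\sum_g \binom{n(g)}{k+1} = \sum_{T \in \binom{[N_m]}{k+1}} |F_T|$. This gives
\begin{equation*}
\sum_{g :\, n(g) \ge k+1}(n(g) - k) \;\le\; \sum_g \binom{n(g)}{k+1} \;=\; \sum_T |F_T| \;\le\; \binom{N_m}{k+1}\, k^{m-d}\, B_{d-1, m-1}.
\end{equation*}
Summing the two pieces and invoking the recursion $B_{d, m} = B_{d, m-1} + \binom{N_m}{k+1}\, B_{d-1, m-1}$ (from splitting subsets of $[m]$ by whether they contain $m$) yields $|F| \le k^{m-d}\,B_{d, m}$, completing the induction.

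The main obstacle is in the third paragraph: a naive approach bounding $\sum_g (n(g) - k) \le (N_m - k)\,|\{g : n(g) \ge k+1\}|$ and then union-bounding $|\{g : n(g) \ge k+1\}| \le \sum_T |F_T|$ loses a factor of roughly $(N_m - k)$ relative to the target bound, and one really does need the pointwise inequality $n - k \le \binom{n}{k+1}$ combined with the exact identity $\sum_g \binom{n(g)}{k+1} = \sum_T |F_T|$ in order to pick up exactly the $\binom{N_m}{k+1}$ factor demanded by the recursion on $B_{d, m}$. The edge cases also need separate treatment: for $d = 0$, $d^k_N(F) = 0$ forces $n(g) \le k$ everywhere and the second sum vanishes; for $d = m$ the inductive step does not apply directly, but the trivial bound $|F| \le \prod_j N_j \le \prod_j (1 + \binom{N_j}{k+1}) = B_{m, m}$ suffices, valid since each $N_j > k+1$ guarantees $N_j \le 1 + \binom{N_j}{k+1}$.
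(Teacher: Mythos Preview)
Your proof is correct and follows the same inductive skeleton as the paper's: induct on $m$, treat $d=0$ and $d=m$ as edge cases, split the contribution of the last coordinate into a ``bounded'' part (at most $k$ values per fiber) and an ``excess'' part, show that the rich-fiber classes drop a unit of $k$-Natarajan dimension, and close with the Pascal-type recursion $B_{d,m}=B_{d,m-1}+\binom{N_m}{k+1}B_{d-1,m-1}$.

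The one place you genuinely diverge from the paper is in how you account for the excess.  The paper orders the values in each fiber via functions $\alpha_1<\cdots<\alpha_k$, peels off the sets $F_1,\dots,F_k$ (whose projections are bounded by the $(m-1,d)$ hypothesis), and then \emph{partitions} the remaining elements exactly into the disjoint classes $F_{v_1,\dots,v_{k+1}}$ indexed by (smallest $k$ fiber values, current value).  You instead write $|F|\le k|F_0|+\sum_g(n(g)-k)_+$, invoke the pointwise inequality $n-k\le\binom{n}{k+1}$, and use the double-count $\sum_g\binom{n(g)}{k+1}=\sum_T|F_T|$ over all $(k+1)$-subsets $T\subseteq[N_m]$.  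Your $F_T$'s are not disjoint, but the overcount is exactly compensated by the inequality, and the dimension-drop argument for $F_T$ is the same as the paper's for $\pi(F_{v_1,\dots,v_{k+1}})$.  The upshot is that your route sidesteps the $\alpha_j$ bookkeeping at the cost of one extra elementary inequality; both arrive at the identical recursion and bound.
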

\begin{proof}
We will prove this via a double induction on $m$ and $d$.

\paragraph{Base Case 1:} $d=0$. \\
The bound reduces to showing
\begin{align*}
    |F| \le k^m.
\end{align*}
Assume for the sake of contradiction that $|F| > k^m$. Then, it must be the case that there is a coordinate $i \in [m]$ at which members of $F$ attain at least $k+1$ different values. But then this coordinate witnesses $d^k_N(F)\ge 1 > d$, which is a contradiction.

\paragraph{Base Case 2:} $d=m$. \\
The bound reduces to showing
\begin{align*}
    |F| \le \sum_{i=0}^m\sum_{S \in \Gamma_{m,i}}\prod_{j \in S}\binom{N_j}{k+1}.
\end{align*}
Let us compute the cardinality of the set $X=\prod_{i=1}^m \{1,\dots,N_i\}$. Let us partition the elements of $X$ into disjoint sets $X_i$ based on the number $i$ of non-1 elements i.e. $i$ can range from $0$ to $m$. The number of elements having exactly $i$ non-1's is $\sum_{S \in \Gamma_{m,i}}\prod_{j \in S}(N_j -1)$. Thus, we have
\begin{align*}
    |X| &= \sum_{i=0}^m|X_i| \\
    &= \sum_{i=0}^m\sum_{S \in \Gamma_{m,i}}\prod_{j \in S}(N_j -1) \\
    &\le \sum_{i=0}^m\sum_{S \in \Gamma_{m,i}}\prod_{j \in S}\binom{N_j}{k+1} \qquad (\text{since each $N_j > k+1$})
\end{align*}
Since $F \subseteq X$, we get the required bound on $|F|$.

\paragraph{Inductive step:} Now, given $0 < d < m$, and $F \subseteq \prod_{i=1}^m\{1,\dots,N_i\}$ such that $d^k_N(F) \le d$, let us assume the following induction hypothesis: for any $0\le d'\le m-1$ and $F' \subseteq \prod_{i=1}^{m-1}\{1,\dots,N_i\}$ such that $d^k_{N}(F')\le d'$,
\begin{equation}
    \label{eqn:sauer-inductive-hypothesis}
    |F'| \le k^{m-1-d'}\sum_{i=0}^{d'}\sum_{\Gamma_{m-1,i}}\prod_{j \in S}\binom{N_j}{k+1}.
\end{equation}
We will show that this implies $|F| \le k^{m-d}\sum_{i=0}^d\sum_{S \in \Gamma_{m,i}}\prod_{j \in S}\binom{N_j}{k+1}$.

\noindent For $f \in F$, denote its coordinates as $f_1,\dots,f_m$. Define:
\begin{align*}
    &\pi(f)=(f_1,\dots,f_{m-1}) \qquad \text{($\pi$ simply truncates the last coordinate of $f$)} \\
    &\\
    &\alpha_{1}(f_1,\dots,f_{m-1}) = \min\{v: (f_1,\dots,f_{m-1},v)\in F\} \\
    &\alpha_2(f_1,\dots,f_{m-1}) = \min\{v: (f_1,\dots,f_{m-1},v)\in F, v > \alpha_1(f_1,\dots,f_{m-1})\}  \\
    &\vdots \\
    &\alpha_{k}(f_1,\dots,f_{m-1}) = \min\{v:(f_1,\dots,f_{m-1},v)\in F, v > \alpha_{k-1}(f_1,\dots,f_{m-1})\} \\
    &\\
    &F_{1}=\{(f_1,\dots,f_m) \in F: f_m = \alpha_1(f_1,\dots,f_{m-1})\} \\
    &F_{2}=\{(f_1,\dots,f_m) \in F: f_m = \alpha_2(f_1,\dots,f_{m-1})\} \\
    &\vdots\\
    &F_{k}=\{(f_1,\dots,f_m) \in F: f_m = \alpha_{k}(f_1,\dots,f_{m-1})\}
\end{align*}
Finally, for $1 \le v_1 < v_2 < \dots < v_{k+1} \le N_m$, define
\begin{align*}
    &F_{v_1,\dots,v_{k+1}} = \\ &\left\{(f_1,\dots,f_m) \in F \setminus \bigcup_{i=1}^{k}F_i: f_m = v_{k+1}, \alpha_1(f_1,\dots,f_{m-1})=v_1,\dots,\alpha_{k}(f_1,\dots,f_{m-1})=v_{k}\right\}
\end{align*}
Then, we can see that all the sets $F_i, F_{v_1,\dots,v_{k+1}}$ are disjoint, and further that
\begin{align}
    &F = \left(\bigcup_{i=1}^{k}F_i\right) \cup \left(\bigcup_{1 \le v_1 < v_2 < \dots < v_{k+1} \le N_m}F_{v_1,\dots,v_{k+1}}\right) \nonumber \\
    \implies\qquad&|F|= \sum_{i=1}^{k}|F_i| + \sum_{1\le v_1<v_2<\dots<v_{k+1}\le N_m} \left|F_{v_1,\dots,v_{k+1}}\right|. \label{eqn:F-decomposition}
\end{align}
\paragraph{Bounding $|F_i|$ terms:} First, we will show that each $|F_i|$ in \cref{eqn:F-decomposition} is bounded under the inductive hypothesis \cref{eqn:sauer-inductive-hypothesis}. Fix $1 \le i \le k$.
Consider $\pi(F_{i})$. For $f \neq g \in F_i$, it must be the case that $\pi(f)\neq \pi(g)$. This is because, if $\pi(f)=\pi(g)$, then $f_m \neq g_m$, which is not possible. 
Thus, for every $f \neq g \in F_i$, we have $\pi(f) \neq \pi(g)$, implying that there is a one-to-one mapping between $\pi(F_i)$ and $F_i$, or equivalently, $|\pi(F_i)|=|F_i|$.
But now, $\pi(F_i)$ is a set of functions on $m-1$ coordinates. Its $k$-Natarajan dimension is at most that of $F_i$, which is at most $d$. Instantiating the inductive hypothesis \cref{eqn:sauer-inductive-hypothesis} with $d'=d\le m-1$ and $F'=\pi(F_i)$, we get that
\begin{align}
    \label{eqn:Fi-bound}
    |F_i|=|\pi(F_i)| \le k^{m-1-d}\sum_{i=0}^d\sum_{S \in \Gamma_{m-1,i}}\prod_{j \in S}\binom{N_j}{k+1}.
\end{align}

\paragraph{Bounding $|F_{v_1,\dots,v_{k+1}}|$ terms:} Next, we will show that each $\left|F_{v_1,\dots,v_{k+1}}\right|$ in \cref{eqn:F-decomposition} is bounded. Fix $1\le v_1 < v_2 < \dots < v_{k+1} \le N_m$. We will show that the $k$-Natarajan dimension of $F_{v_1,\dots,v_{k+1}}$ is at most $d-1$.

Let $I \subseteq [m]$ be a set of indices that is $k$-Natarajan shattered by $F_{v_1,\dots,v_{k+1}}$ such that $|I|=l$. Then, $m \notin I$, since $f_m = v_{k+1}, ~\forall f \in F_{v_1,\dots,v_{k+1}}$. We will show that $I \cup \{m\}$ is $k$-Natarajan shattered by $F$. 

Let $y_1,\dots, y_l$ be $(k+1)$-lists that are witnesses to $F_{v_1,\dots,v_{k+1}}$ $k$-Natarajan shattering $I$. We will show that $y_1,\dots,y_l,y_{l+1}$ are witnesses to $F$ $k$-Natarajan shattering $I \cup \{m\}$ for $y_{l+1}=(v_1,\dots,v_{k+1})$.
For any $\bar{y}\in \prod_{i=1}^{l+1}y_i$, find $f=(f_1,\dots,f_{m-1},v_{k+1}) \in F_{v_1,\dots,v_{k+1}}$ that agrees with the first $l$ coordinates of $\bar{y}$ on $I$. For any $1\le j\le k$, we have that
$(f_1,\dots,f_{m-1},\alpha_j(f_1,\dots,f_{m-1})) \in F_{j}$ realizes the value $v_j$ on the coordinate $m$. Finally, $f$ itself realizes the value $v_{k+1}$ on the coordinate $m$. Thus, we have found members of $F$ realizing every configuration in $\prod_{i=1}^{l+1}y_i$ on $I \cup \{m\}$, and hence $F$ $k$-Natarajan shatters $I \cup \{m\}$. Since the $k$-Natarajan dimension of $F$ was at most $d$, and $m \notin I$, it must be the case that $|I| \le d-1$. Since $I$ was chosen arbitrarily, every set that is $k$-Natarajan shattered by $F_{v_1,\dots,v_{k+1}}$ must have size at most $d-1$, and hence $d^k_{N}(F_{v_1,\dots,v_{k+1}}) \le d-1$.

Now, consider $\pi(F_{v_1,\dots,v_{k+1}})$. If $f \neq g \in F_{v_1,\dots,v_{k+1}}$, it must be the case that they differ in the first $m-1$ coordinates, and hence $\pi(f)\neq\pi(g)$. Thus, we have a one-to-one mapping again between $\pi(F_{v_1,\dots,v_{k+1}})$ and $F_{v_1,\dots,v_{k+1}}$. But now, $\pi(F_{v_1,\dots,v_{k+1}})$ is a set of functions on $m-1$ coordinates. Its $k$-Natarajan dimension is at most that of $F_{v_1,\dots,v_{k+1}}$, which is at most $d-1$. Instantiating the inductive hypothesis \cref{eqn:sauer-inductive-hypothesis} with $d'=d-1 < m-1$ and $F'=\pi(F_{v_1,\dots,v_{k+1}})$, we get that
\begin{align}
    |F_{v_1,\dots,v_{k+1}}| = |\pi(F_{v_1,\dots,v_{k+1}})| &\le k^{m-1-(d-1)}\sum_{i=0}^{d-1}\sum_{S \in \Gamma_{m-1,i}}\prod_{j \in S}\binom{N_j}{k+1} \nonumber\\
    &= k^{m-d}\sum_{i=0}^{d-1}\sum_{S \in \Gamma_{m-1,i}}\prod_{j \in S}\binom{N_j}{k+1}. \label{eqn:F-v1-vk-bound}
\end{align}

\paragraph{Putting together:} Substituting the bounds in \cref{eqn:Fi-bound} and \cref{eqn:F-v1-vk-bound} in \cref{eqn:F-decomposition}, we get
\begingroup
\allowdisplaybreaks
\begin{align*}
    |F| &\le k\cdot k^{m-1-d}\sum_{i=0}^d\sum_{S \in \Gamma_{m-1,i}}\prod_{j \in S}\binom{N_j}{k+1} + \\
    &\qquad\qquad \sum_{1\le v_1<v_2<\dots<v_{k+1}\le N_m} k^{m-d}\sum_{i=0}^{d-1}\sum_{S \in \Gamma_{m-1,i}}\prod_{j \in S}\binom{N_j}{k+1} \\
    &=k^{m-d}\sum_{i=0}^d\sum_{S \in \Gamma_{m-1,i}}\prod_{j \in S}\binom{N_j}{k+1}+\binom{N_m}{k+1}k^{m-d}\sum_{i=0}^{d-1}\sum_{S \in \Gamma_{m-1,i}}\prod_{j \in S}\binom{N_j}{k+1} \\
    &= k^{m-d}\sum_{i=0}^d\sum_{S \in \Gamma_{m-1,i}}\prod_{j \in S}\binom{N_j}{k+1}+k^{m-d}\sum_{i=0}^{d-1}\sum_{S \in \Gamma_{m-1,i}}\prod_{j \in S\cup\{m\}}\binom{N_j}{k+1} \\
    &= k^{m-d}\sum_{i=0}^d\sum_{S \in \Gamma_{m-1,i}}\prod_{j \in S}\binom{N_j}{k+1}+k^{m-d}\sum_{i=1}^{d}\sum_{S \in \Gamma_{m-1,i-1}}\prod_{j \in S\cup\{m\}}\binom{N_j}{k+1} \\
    &= k^{m-d}\sum_{i=0}^d\sum_{S \in \Gamma_{m-1,i}}\prod_{j \in S}\binom{N_j}{k+1}+k^{m-d}\sum_{i=1}^{d}\sum_{S \in \Gamma_{m,i}, m \in S}\prod_{j \in S}\binom{N_j}{k+1} \\
    &= k^{m-d}\left[1+\sum_{i=1}^d\sum_{S \in \Gamma_{m-1,i}}\prod_{j \in S}\binom{N_j}{k+1}+\sum_{i=1}^{d}\sum_{S \in \Gamma_{m,i}, m \in S}\prod_{j \in S}\binom{N_j}{k+1}\right] \\
    &= k^{m-d}\left[1+\sum_{i=1}^d\sum_{S \in \Gamma_{m,i}, m \notin S}\prod_{j \in S}\binom{N_j}{k+1}+\sum_{i=1}^{d}\sum_{S \in \Gamma_{m,i}, m \in S}\prod_{j \in S}\binom{N_j}{k+1}\right] \\
    &= k^{m-d}\left[1+\sum_{i=1}^d\sum_{S \in \Gamma_{m,i}}\prod_{j \in S}\binom{N_j}{k+1}\right] \\
    &= k^{m-d}\sum_{i=0}^d\sum_{S \in \Gamma_{m,i}}\prod_{j \in S}\binom{N_j}{k+1},
\end{align*}
\endgroup
which completes the induction.
\end{proof}
\begin{corollary}
    \label{cor:list-sauer-lemma}
    For $k \ge 2$ and $p > k+1$, let $\mcH \subseteq [p]^m$ be such that $d^k_N(\mcH)=d^k_N\le m$. Then, we have that
    \begin{align*}
        |\mcH| \le k^{m-d^k_N}\sum_{i=0}^{d^k_N}\binom{m}{i}\binom{p}{k+1}^i.
    \end{align*}
\end{corollary}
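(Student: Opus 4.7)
The corollary is a direct specialization of \Cref{thm:list-sauer-lemma} to the uniform-alphabet case, so my plan is to simply verify that the hypotheses match and then collapse the double sum using symmetry.

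First I would invoke \Cref{thm:list-sauer-lemma} with $F=\mcH\subseteq\prod_{i=1}^m\{1,\dots,N_i\}$ by setting $N_1=N_2=\cdots=N_m=p$. The assumption $p>k+1$ matches the theorem's requirement that each $N_i>k+1$, and the assumption $d^k_N(\mcH)\le m$ lets me take $d=d^k_N$ in the theorem, giving
\begin{equation*}
    |\mcH|\;\le\;k^{m-d^k_N}\sum_{i=0}^{d^k_N}\sum_{S\in\Gamma_{m,i}}\prod_{j\in S}\binom{N_j}{k+1}.
\end{equation*}

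Next I would simplify the inner product and sum. Since $N_j=p$ for every $j$, the product $\prod_{j\in S}\binom{N_j}{k+1}$ equals $\binom{p}{k+1}^{|S|}=\binom{p}{k+1}^{i}$ for every $S\in\Gamma_{m,i}$, independent of which $i$-subset $S$ is. Hence
\begin{equation*}
    \sum_{S\in\Gamma_{m,i}}\prod_{j\in S}\binom{N_j}{k+1}\;=\;|\Gamma_{m,i}|\cdot\binom{p}{k+1}^{i}\;=\;\binom{m}{i}\binom{p}{k+1}^{i}.
\end{equation*}

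Plugging this back in yields exactly
\begin{equation*}
    |\mcH|\;\le\;k^{m-d^k_N}\sum_{i=0}^{d^k_N}\binom{m}{i}\binom{p}{k+1}^{i},
\end{equation*}
which is the desired bound. There is no real obstacle here: the whole content is that when all coordinate alphabets have equal size $p$, the weighted sum over $i$-subsets of $[m]$ collapses into $\binom{m}{i}\binom{p}{k+1}^i$, so the corollary follows immediately from \Cref{thm:list-sauer-lemma} without any additional combinatorial work.
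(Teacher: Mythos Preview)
Your proposal is correct and matches the paper's approach: the corollary is stated immediately after \Cref{thm:list-sauer-lemma} without a separate proof, precisely because it is the straightforward specialization you describe, setting $N_1=\cdots=N_m=p$ and collapsing $\sum_{S\in\Gamma_{m,i}}\prod_{j\in S}\binom{p}{k+1}=\binom{m}{i}\binom{p}{k+1}^i$.
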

\section{List orientations with bounded \texorpdfstring{$k$}{k}-outdegree}
\label{sec:shifting}

In this section, we will build up towards proving that the one-inclusion graph of a hypothesis class with bounded $k$-DS dimension (and hence bounded $k$-Natarajan dimension) has a $k$-list orientation with small maximum $k$-outdegree. 
The main theorem we will prove is:
\begin{theorem}
    \label{thm:outdeg-bound-in-terms-of-dnat}
    Let $\mcH \in [p]^m$ be a hypothesis class with $k$-Natarajan dimension $\dnat < \infty$. Then, there exists a $k$-list orientation $\sigmak$ of $\mcG(\mcH)$ with maximum $k$-outdegree
    \begin{align*}
        \outdeg(\sigmak) \le 240k^4\dnat \log(p).
    \end{align*}
\end{theorem}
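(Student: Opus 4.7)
The plan is to mimic the shifting-based machinery of \cite{brukhim2022characterization}, adapting each ingredient to the list setting and plugging in the list-Sauer bound from \Cref{cor:list-sauer-lemma}. At a high level, the argument has three stages: (i) use list-Sauer to bound the average $k$-degree of $\mcG(\mcH)$; (ii) use a shifting operation to convert this average-degree control into max-degree control; (iii) run the greedy construction of \Cref{lem:dds+1-outdeg-dds} to extract the desired orientation.

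First I would bound the average $k$-degree $\avd(\mcH)$ in terms of $\dnat$ and $\log p$. By \Cref{cor:list-sauer-lemma}, $|\mcH| \le k^{m-\dnat}\sum_{i=0}^{\dnat}\binom{m}{i}\binom{p}{k+1}^i$, and an analogous Sauer-type count applies to each coordinate-restriction that produces a "heavy" edge (i.e.\ $|e|>k$) in $\mcG(\mcH)$. Chasing this combinatorics, writing $|e|=\sum_{v\in e}1$ and swapping the order of summation, should give an average-degree bound of the form $\avd(\mcH)=O(k^3\,\dnat\log p)$, where the $\log p$ factor arises from $\log\binom{p}{k+1}\asymp (k+1)\log p$ and one factor of $k$ comes from the $k^{m-\dnat}$ prefix when the total heavy-edge mass is compared against $|\mcH|$.

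Next I would define a list analogue of the coordinate-shift operator $T_{i,\sigma}$ of \cite{brukhim2022characterization}, where $\sigma$ pushes values at coordinate $i$ toward a canonical ordering whenever the resulting hypothesis is not already present. Two invariants must be verified: (a) $T_{i,\sigma}$ preserves $|\mcH|$ and does not increase $\dnat$ (and hence the average $k$-degree bound above still applies after shifting), and (b) iterating $T_{i,\sigma}$ to a fixed point $\mcH^*$ across all directions and suitable orderings yields a class whose maximum $k$-degree is controlled by its average $k$-degree, up to a multiplicative factor depending on $k$. Intuitively, at a shift-stable fixed point every vertex is "extremal" in every direction in which it has a heavy edge, which limits how much heavy-edge mass can concentrate on any single vertex; working this out in the list setting should cost an extra factor of $k$ (because we now track lists of size $k+1$ rather than pairs), producing a bound $\max_v \deg_k(v;\mcH^*)=O(k^4\,\dnat\log p)$.

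Finally, with the maximum $k$-degree of $\mcG(\mcH^*)$ bounded by $O(k^4\,\dnat\log p)$, I would run exactly the greedy peeling from \Cref{lem:dds+1-outdeg-dds}: repeatedly locate a vertex $h^*$ whose number of heavy incident edges is at most this maximum, peel $h^*$ off while orienting its heavy edges "away" from $h^*$ (these charge the outdegree of $h^*$) and augmenting the light edges' orientations to include $h^*$ (these do not), and recurse on $\mcH^*\setminus\{h^*\}$. Since the shift operator is a bijection on hypotheses and preserves the coordinate-wise incidence structure of the one-inclusion graph, the resulting orientation of $\mcG(\mcH^*)$ transfers back to $\mcG(\mcH)$ without loss, yielding maximum $k$-outdegree $O(k^4\,\dnat\log p)$. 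The main obstacle I anticipate is verifying (b): getting the right list-analogue of the classical shifting lemma so that shift-stability transforms the Sauer-style average bound into a max bound, with only a single extra factor of $k$ — this is the place where the single-label arguments of \cite{brukhim2022characterization} do not translate mechanically and must be redone with $(k+1)$-sized label witnesses.
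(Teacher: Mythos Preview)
Your outline has the right ingredients (list-Sauer, shifting, greedy peeling) but assembles them in the wrong order, and two of the steps as you describe them would fail.

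First, the role of shifting. In the paper (and in \cite{brukhim2022characterization}) shifting is \emph{not} used to convert an average-degree bound into a max-degree bound. Rather, shifting is the device that lets you prove the average-degree bound in the first place: one introduces an intermediate parameter, the $k$-exponential dimension $\dexp$, and shows $\avd(\mcH)\le (k+1)\,\savd(\mcH)\le (k+1)\,\savd(\mcH_*)\le (k+1)\,\avd(\mcH_*)\le (k+1)^2\dexp(\mcH_*)\le (k+1)^2\dexp(\mcH)$. The inequality $\avd(\mcH_*)\le (k+1)\dexp(\mcH_*)$ is where downward-closure of the fixed point $\mcH_*$ is essential (it enables a clean peel-off-the-maximal-vertex induction). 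Your step (i), which tries to bound $\avd(\mcH)$ directly from list-Sauer by ``chasing combinatorics'' over heavy edges, does not obviously work; the paper's route is via $\dexp$, and list-Sauer enters only at the very end to show $\dexp\le 60k^2\dnat\log p$.

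Second, and more concretely, your step (iii) relies on the claim that ``the shift operator is a bijection on hypotheses and preserves the coordinate-wise incidence structure of the one-inclusion graph,'' so that an orientation of $\mcG(\mcH_*)$ transfers back to $\mcG(\mcH)$. This is false: shifting preserves $|\mcH|$ and edge sizes \emph{in the shifted direction}, but it can merge and split edges in the other directions, so $\mcG(\mcH)$ and $\mcG(\mcH_*)$ are not isomorphic and orientations do not transfer. The paper never needs such a transfer: once $\avd(\mcH')\le 4k^2\dexp(\mcH')$ is known for \emph{every} class $\mcH'$, the greedy peeling of \Cref{lem:dds+1-outdeg-dds} is run directly on $\mcG(\mcH)$, using only that there is always a vertex of $k$-degree at most the average and that $\dexp$ is monotone under vertex deletion. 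No max-degree bound and no back-transfer is required.
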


\begin{remark}
    The proof methodology that follows also gives the same guarantee in the slightly more general (but essentially identical) setting where $\mcH \subseteq \mcY_1 \times \mcY_2 \times \dots \times \mcY_m$: $|\mcY_i|\le p \; \forall i \in [m]$.\footnote{Or one can think of mapping each $\mcY_i$ to $[p]$, getting the bound on the $k$-outdegree in the mapped space, and then remapping back to the $\mcY_i$ space (the one-inclusion graphs in the mapped/original space are isomorphic).}
\end{remark}
\cite{brukhim2022characterization} instantiate the combinatorial technique of shifting to construct orientations of small outdegree for hypothesis classes having finite Natarajan dimension. We prove \Cref{thm:outdeg-bound-in-terms-of-dnat} in a stepwise analogous manner. For a nice introduction to the shifting operator on a hypothesis class, we refer the reader to the beginning of Section 3 in \cite{brukhim2022characterization}. 

\begin{definition}[Shifting \cite{brukhim2022characterization}]
    \label{def:shifting}
    Let $\mcH \subseteq [p]^m$ and let $i \in [m]$. The shifting operator in the $i^{\text{th}}$ direction $\shift_i$ maps $\mcH$ to its shifted version $\shift_i(\mcH)$ as follows. Shifting is first defined on edges. For $f : [m] \setminus \{i\} \to [p]$, let $e_{f}$ be the collection of $h \in \mcH$ that agree with $f$ on $[m] \setminus \{i\}$. The shifting $\shift_i(e_f)$ is obtained by ``pushing
    $e_f$ downward''; namely, $\shift_i(e_f)$ is the collection of all $g \in [p]^m$ that agree with $f$ on $[m] \setminus \{i\}$ and $1 \leq g(i) \leq |e_f|$. The shifting of $\mcH$ is the union of all shifted edges
    $$\shift_i(\mcH) = \bigcup_{f} \shift_i(e_f) \subseteq [p]^m.$$
\end{definition}
The nice property about shifting is that the fixed point $\mcH_*$ of the shifting operations (i.e., $\mathbb{S}_i(\mcH_*) = \mcH_*$ for all $i \in [m]$) is guaranteed to exist, and is furthermore, closed downwards (i.e., if $h \in \mcH_* \subseteq [p]^m$, and $g_i \le h_i \, \forall i \in [m]$, then $g \in \mcH^*$).

We restate the following result from \cite{brukhim2022characterization} which guarantees that the shifting operation does not increase the size of a hypothesis class restricted to any sequence.
\begin{lemma}[Claim 22, \cite{brukhim2022characterization}]
\label{lem:shifting-does-not-increase-projections}
Let $\mcH \subseteq [p]^m$ and $i \in [m]$. For every $S \in [m]^k$,
\begin{align*}
    \left|\mathbb{S}_i(\mcH)|_S\right| \le \left|\mcH|_S\right|.
\end{align*}
\end{lemma}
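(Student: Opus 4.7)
The plan is to split into two cases according to whether $i \in S$ or $i \notin S$. When $i \notin S$, I will argue that shifting does not change the projection at all: $\mathbb{S}_i(\mcH)|_S = \mcH|_S$. This is because every edge $e_f$ (for $f$ with $e_f \neq \emptyset$) and its shifted counterpart $\mathbb{S}_i(e_f)$ agree coordinatewise on $[m]\setminus\{i\}$, hence on $S$, so the bijection between $e_f$ and $\mathbb{S}_i(e_f)$ preserves the $S$-projection of every hypothesis. This takes care of the easy case with equality.

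The interesting case is $i \in S$, which I will handle by partitioning both sides according to projections onto $S' := S \setminus \{i\}$. Because shifting in direction $i$ leaves all other coordinates untouched, the easy case above (applied to $S'$, since $i \notin S'$) gives $\mathbb{S}_i(\mcH)|_{S'} = \mcH|_{S'}$. So the two restricted classes share the same set of $S'$-projections, and it suffices to show that for each $f' \in \mcH|_{S'}$, the set of values at coordinate $i$ extending $f'$ is no larger in $\mathbb{S}_i(\mcH)$ than in $\mcH$. Summing the resulting fiberwise inequalities over all $f' \in \mcH|_{S'}$ then yields the claim.

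The core computation is as follows. Fix $f' : S' \to [p]$ and let $A_{f'}$ denote the set of extensions $f'' : [m]\setminus\{i\} \to [p]$ with $f''|_{S'} = f'$ and $e_{f''} \neq \emptyset$. Since hypotheses in a common edge $e_{f''}$ agree on $[m]\setminus\{i\}$ and differ only at coordinate $i$, the map $h \mapsto h(i)$ is a bijection from $e_{f''}$ onto $\{h(i) : h \in e_{f''}\}$, which therefore has size exactly $|e_{f''}|$. Hence in $\mcH$ the set of $i$-values extending $f'$ is $V_{f'} := \bigcup_{f'' \in A_{f'}}\{h(i) : h \in e_{f''}\}$, and $|V_{f'}| \geq \max_{f'' \in A_{f'}} |e_{f''}|$ just by taking the single largest edge. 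On the other hand, by the very definition of shifting, the $i$-values in $\mathbb{S}_i(\mcH)$ extending a given $f''$ form the initial segment $\{1,2,\ldots,|e_{f''}|\}$, and taking the union over $f'' \in A_{f'}$ gives exactly $\{1,2,\ldots,\max_{f'' \in A_{f'}}|e_{f''}|\}$, whose size equals $\max_{f'' \in A_{f'}}|e_{f''}|$. Comparing the two expressions produces the desired fiberwise inequality.

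I do not anticipate any real obstacle: the argument reduces to the observation that $\max_{f''}|e_{f''}|$ simultaneously upper-bounds the shifted fiber and lower-bounds the original fiber. The only mild subtlety is the need to first reduce to the situation where both sides share the same $S'$-projections, which is precisely what makes the coordinatewise comparison coherent.
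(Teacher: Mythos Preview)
The paper does not supply its own proof of this lemma: it is stated verbatim as a restatement of Claim~22 from \cite{brukhim2022characterization}, with no argument given in the present paper. So there is nothing in this paper to compare against.

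Your argument is correct. The case split on whether $i$ lies in $S$ is the natural one, and both cases are handled soundly. In the case $i\notin S$, every hypothesis in an edge $e_f$ and every hypothesis in its shift $\mathbb{S}_i(e_f)$ share the same restriction to $[m]\setminus\{i\}\supseteq S$, so the projections to $S$ coincide as sets, giving equality. In the case $i\in S$, your fiberwise count is exactly right: over a fixed $f'\in\mcH|_{S'}$ (with $S'=S\setminus\{i\}$), the set of $i$-values appearing in $\mcH$ is $\bigcup_{f''\in A_{f'}}\{h(i):h\in e_{f''}\}$, which has size at least $\max_{f''}|e_{f''}|$, while after shifting the corresponding union is the initial segment $\{1,\dots,\max_{f''}|e_{f''}|\}$. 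Summing over $f'$ yields the inequality. The only cosmetic point is that the statement allows $S\in[m]^k$ to be a sequence with repeats, but repeated coordinates do not affect the size of a projection, so one may reduce to the case where $S$ has distinct entries without loss.
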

This immediately leads to the following corollary.
\begin{corollary}
    \label{cor:shifting-decreases-exp-dim}
    For every $\mcH \subseteq [p]^m$ and $i \in [m]$,
    \begin{align*}
        \dexp(\mathbb{S}_i(\mcH)) \le \dexp(\mcH).
    \end{align*}
    Thus, $\dexp(\mcH_*)\le\dexp(\mcH)$. 
\end{corollary}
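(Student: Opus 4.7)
The plan is to read off the single-shift statement as an immediate application of Lemma \ref{lem:shifting-does-not-increase-projections}, and then iterate to handle the fixed point $\mcH_*$. Suppose $d' = \dexp(\mathbb{S}_i(\mcH))$. By Definition \ref{def:k-exp-dim}, there exists a sequence $S$ of length $d'$ such that $|\mathbb{S}_i(\mcH)|_S| \ge (k+1)^{d'}$. Lemma \ref{lem:shifting-does-not-increase-projections} applied to this very $S$ gives
\begin{equation*}
|\mcH|_S| \ge |\mathbb{S}_i(\mcH)|_S| \ge (k+1)^{d'},
\end{equation*}
so $S$ is $k$-exponential shattered by $\mcH$ itself. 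Hence $\dexp(\mcH) \ge d' = \dexp(\mathbb{S}_i(\mcH))$, which is exactly the first inequality in the corollary.

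For the statement about the fixed point, I would argue that $\mcH_*$ is obtained from $\mcH$ by a finite sequence of shifting operations $\mathbb{S}_{i_1}, \mathbb{S}_{i_2}, \dots, \mathbb{S}_{i_T}$. Since $\mcH \subseteq [p]^m$ is a finite set (both $p$ and $m$ are finite), and each shift does not increase the total size of the class, a standard termination argument (implicit in the discussion immediately preceding the corollary) guarantees that such a finite $T$ exists. Applying the single-shift bound inductively along this sequence then yields
\begin{equation*}
\dexp(\mcH_*) \le \dexp(\mathbb{S}_{i_{T-1}} \circ \cdots \circ \mathbb{S}_{i_1}(\mcH)) \le \cdots \le \dexp(\mcH).
\end{equation*}

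I do not anticipate any real obstacle here; this corollary is essentially a one-line consequence of Lemma \ref{lem:shifting-does-not-increase-projections} combined with the fact that the $k$-exponential dimension is defined purely through the size of projections $|\mcH|_S|$. The only mildly delicate point is ensuring that iterated shifting terminates in finitely many steps, but this is already part of the setup for $\mcH_*$, and for $\mcH \subseteq [p]^m$ finiteness of the underlying class makes this immediate.
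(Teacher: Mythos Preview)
Your proposal is correct and matches the paper's approach exactly: the paper states the corollary as an immediate consequence of Lemma~\ref{lem:shifting-does-not-increase-projections} without further argument, and your write-up simply unpacks that implication (plus the routine iteration to $\mcH_*$).
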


Next, we show that the shifting operation does not decrease a quantity which we denote as the \textit{shifting average $k$-degree}. This generalizes the analogous result (Claim 26) in \cite{brukhim2022characterization}.

\begin{definition}[Shifting average $k$-degree]
    \label{def:shifting-avg-k-degree}
    Let $\mcG(\mcH)=(V,E)$ be the one-inclusion graph of $\mcH \in [p]^m$. The shifting average $k$-degree of $\mcH$ is
    \begin{align*}
        \savd(\mcH) = \frac{1}{|V|}\sum_{e \in E}(|e|-k)_+, 
    \end{align*}
    where $(x)_+ = \max(x,0)$.
\end{definition}

\begin{lemma}[Shifting does not decrease $\savd$]
    \label{lem:savd-increases}
    For every $\mcH \in [p]^m$ and $i \in [m]$,
    \begin{align*}
        \savd(\shift_i(\mcH)) \ge \savd(\mcH).
    \end{align*}
    Thus, $\savd(\mcH_*) \ge \savd(\mcH).$
\end{lemma}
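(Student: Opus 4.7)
The plan is to reduce the inequality $\savd(\shift_i(\mcH)) \ge \savd(\mcH)$ to a classical majorization fact (Gale--Ryser) about $0/1$ matrices. First I observe that $|\shift_i(\mcH)| = |\mcH|$, because each fiber $e_f$ is replaced with $\shift_i(e_f)$ of the same cardinality, so the normalization $1/|V|$ in the definition of $\savd$ is unchanged. Hence it suffices to prove that $\sum_{e \in E} (|e|-k)_+$ does not decrease under $\shift_i$.

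I will split this sum by the direction of each edge. Edges in direction $i$ are in bijection with their shifted counterparts via $e_f \leftrightarrow \shift_i(e_f)$, and $|\shift_i(e_f)| = |e_f|$ by construction, so the contribution from direction $i$ is invariant. For each direction $j \neq i$, I fix a ``background'' assignment $g : [m] \setminus \{i,j\} \to [p]$ and consider the $0/1$ matrix $a$ with $a_{u,v} = \mathds{1}[(g,u,v) \in \mcH]$, indexing $u$ by the value at coordinate $i$ and $v$ by the value at coordinate $j$. The direction-$j$ edges of $\mcG(\mcH)$ with this background are parametrized by $u$ and have sizes equal to the row sums $R_u = \sum_v a_{u,v}$, contributing $\sum_u (R_u - k)_+$ to the edge-sum.

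A direct unpacking of the shifting rule shows that in $\shift_i(\mcH)$, the direction-$j$ edge parametrized by $(g,c)$ (i.e., with coordinate $i$ equal to $c$) consists of the extensions $(g,c,v)$ for those $v$ whose original fiber $\{h \in \mcH : h|_{[m]\setminus\{i\}} = (g,v)\}$ has size at least $c$. Writing $N_v = \sum_u a_{u,v}$ for the column sums of $a$, this edge has size $|\{v : N_v \ge c\}|$, which is the $c$-th entry of the conjugate partition $s^*$ of $(N_v)_v$. The contribution from direction $j$ in $\shift_i(\mcH)$ with this background is therefore $\sum_{c \ge 1} (s^*_c - k)_+$.

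The last step is the Gale--Ryser majorization: the sorted row sums of any $0/1$ matrix are majorized by the conjugate of its column sums. Since $\phi(x) = (x-k)_+$ is convex with $\phi(0) = 0$, padding both sequences with zeros is harmless, so the Hardy--Littlewood--P\'olya inequality gives $\sum_u (R_u - k)_+ \le \sum_{c \ge 1} (s^*_c - k)_+$. Summing over all backgrounds $g$ and all directions $j \neq i$ completes the proof for a single shift; iterating until the fixed point then yields $\savd(\mcH_*) \ge \savd(\mcH)$. The main obstacle I anticipate is the direction-$j$ bookkeeping---correctly identifying the shifted edge sizes with entries of the conjugate partition $s^*$---after which invoking Gale--Ryser and convexity is a standard move.
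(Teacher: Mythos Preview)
Your proof is correct and follows a genuinely different route from the paper's. Both arguments reduce to a fixed background $g$ on $[m]\setminus\{i,j\}$ and encode the relevant slice of $\mcH$ as a $p\times p$ Boolean matrix, and both note that direction-$i$ edges are unchanged. The divergence is in handling direction $j$. The paper argues directly and elementarily: it singles out the $k$ ``exposed'' columns (those with the largest column sums), observes that after shifting the sum $\sum_a(M_f(a)-k)_+$ equals exactly the total number of $1$'s in the unexposed columns, and bounds the before-shifting sum by this same quantity via the row-wise inequality $(M_f(a)-k)_+ \le M_f(a) - \sum_{b\text{ exposed}} M_f(a,b)$. You instead recognize that the after-shifting row sums are precisely the conjugate partition $s^*$ of the column sums, and then invoke Gale--Ryser (the row-sum sequence of any $0/1$ matrix is majorized by the conjugate of its column sums) together with Karamata's inequality for the convex function $\phi(x)=(x-k)_+$. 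Your approach is more conceptual and immediately extends to any convex $\phi$ with $\phi(0)=0$; the paper's approach is fully self-contained and avoids citing external majorization machinery. The ``obstacle'' you flagged---matching the shifted edge sizes with $s^*_c$---is handled correctly in your write-up.
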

\begin{proof}
Since shifting does not change the number of vertices $|V|$, we only need to think about the summation over edges before/after shifting. The size of edges in the $i^{\text{th}}$ direction does not change during shifting, and hence we further only need to think about edges in other directions. Fix some $j \neq i$. We will enumerate over all edges in the $j^{\text{th}}$ direction by grouping all the members (vertices) in $\mcH$ according to their projection on $[m]\setminus \{i,j\}$. Concretely, fix $f \in \mcH|_{[m]\setminus\{i,j\}}$ and consider all vertices that agree with $f$ on $[m] \setminus \{i,j\}$. Encode this data by the $p \times p$ Boolean matrix $M_f$ defined by $M_f(a,b)=1$ iff adding $a,b$ to $f$ in the positions $i, j$ leads to a vertex in $\mcH$. The example below illustrates how $M_f$ changes during shifting for some fixed $f$, where we have $p=4$ and 10 hypotheses in $\mcH$ agree with $f$.
\begin{align*}
\begin{blockarray}{cccccc}
\begin{block}{c [cccc ] c}
   & 0 & 1 & 0 & 1 & \\
   & 1 & 0 & 1 & 1 &  \\
   & 1 & 0 & 1 & 1 &  \\
   & 1 & 0 & 0 & 1 &  \\
\end{block}
\mathrm{count} & \underline{3} & 1 & 2 & \underline{4} \\
\end{blockarray}
\Longrightarrow
\begin{blockarray}{cccccc}
\begin{block}{c [cccc ] c}
   & 1 & 1 & 1 & 1 & \\
   & 1 & 0 & 1 & 1 &  \\
   & 1 & 0 & 0 & 1 &  \\
   & 0 & 0 & 0 & 1 &  \\
\end{block}
\mathrm{count} & \underline{3} & 1 & 2 & \underline{4} \\
\end{blockarray}
\end{align*}
Let us collect the number of 1's in each column in the matrix $M_f$ in a count array. We can see that the count array corresponding to $M_f$ before and after shifting is identical. Let us denote the $k$ columns having the largest values in the count array as ``exposed'' columns, and the rest as ``unexposed''. In the example above, for $k=2$, columns 1 and 4 (underlined) are exposed, because they have the two largest count values (3 and 4) respectively. 

Every row $a$ in the matrix $M_f$ represents an edge (namely $f + \{i \to a\}$). The sum of $(|e|-k)_+$ over all edges $e$ that agree with $f$ can then be expressed as $\sum_{a=1}^p (M_f(a)-k)_+$, where $M_f(a)=\sum_{b=1}^pM_f(a,b)$. We can think of the quantity $(M_f(a)-k)_+$ as the contribution of the $a^{\text{th}}$ row. Before shifting (i.e., in the left matrix above), observe that for each row $a$, 
\begin{align*}
(M_f(a)-k)_+ \le M_f(a)-\sum_{b: b \text{ is exposed }}M_f(a,b).
\end{align*}
Thus, we can conclude that the sum of $(|e|-k)_+$ over all edges $e$ that agree with $f$ (before shifting) is at most
\begin{align*}
    \sum_{a=1}^pM_f(a)- \sum_{a=1}^p\sum_{b:  \text{is exposed}}M_f(a,b) = \sum_{a=1}^{p}\sum_{b: b \text{ is unexposed}}M_f(a,b).
\end{align*}
But the RHS above, which is the total number of unexposed elements in the matrix $M_f$ (before shifting), is precisely the sum over $(|e|-k)_+$ over all edges $e$ that agree with $f$ \textit{after} shifting. To see this, we only need to argue that every unexposed element gets counted in the contribution of its row after shifting. Observe that after shifting, if there exists an unexposed element in any row, it must be the case that all the entries in the exposed columns (exactly $k$ of them) in that row are 1 (by definition). Thus, these are the entries that account for the $k$ elements being subtracted in $(M_f(a)-k)_+$, which means that each unexposed element contributes in the positive to the sum $(M_f(a)-k)_+$.
\end{proof}

We now bound the average $k$-degree of $\mcG(\mcH)$ in terms of the $k$-exponential dimension of $\mcH$. This generalizes the analogous result (Proposition 27) in \cite{brukhim2022characterization}.
\begin{lemma}[$\avd$ is bounded by $\dexp$]
    \label{lem:avd-bounded-by-dexp}
    For every $\mcH \in [p]^m$, we have
    \begin{align*}
        \avd(\mcH) \le 4k^2\dexp(\mcH).
    \end{align*}
\end{lemma}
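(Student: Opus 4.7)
The plan is to prove the bound in three stages, mirroring the shifting-based strategy already developed for $\savd$ and $\dexp$. First, I will reduce to the fixed point $\mcH_*$ of the shifting operator: by \Cref{lem:savd-increases} and \Cref{cor:shifting-decreases-exp-dim}, shifting only increases $\savd$ and never increases $\dexp$, so it suffices to upper-bound $\savd(\mcH_*)$ in terms of $\dexp(\mcH_*) \le \dexp(\mcH)$ and separately relate $\avd(\mcH)$ to $\savd(\mcH)$ on the original class.

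For the relation between $\avd$ and $\savd$ on $\mcH$ itself, the key observation is the elementary inequality $|e|\le (k+1)(|e|-k)$, valid for every $|e|\ge k+1$. Summing over edges with $|e|>k$ yields
\begin{equation*}
\avd(\mcH) \;=\; \frac{1}{|V|}\sum_{e:|e|>k}|e| \;\le\; (k+1)\cdot \frac{1}{|V|}\sum_{e:|e|>k}(|e|-k) \;=\; (k+1)\,\savd(\mcH).
\end{equation*}
Combined with $\savd(\mcH)\le \savd(\mcH_*)$, the task is reduced to showing $\savd(\mcH_*)\le \dexp(\mcH_*)$.

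Now comes the main step, which exploits down-closure of $\mcH_*$ in an essential way. Since $\mcH_*$ is down-closed, every edge $e$ in direction $i$ of $\mcG(\mcH_*)$ takes the form $\{(f,1),(f,2),\ldots,(f,|e|)\}$ for some $f:[m]\setminus\{i\}\to[p]$, and therefore $(|e|-k)_+$ equals the number of vertices $v\in e$ with $v_i>k$. Double-counting produces
\begin{equation*}
\savd(\mcH_*) \;=\; \frac{1}{|V|}\sum_{v\in \mcH_*} \bigl|\{i\in[m]:v_i>k\}\bigr|.
\end{equation*}
By averaging, there exists $v^\star\in\mcH_*$ with $D:=\{i:v^\star_i>k\}$ of size at least $\savd(\mcH_*)$. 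For every $u\in\{1,\ldots,k+1\}^D$, the hypothesis equal to $u$ on $D$ and to $v^\star$ on $[m]\setminus D$ is coordinate-wise dominated by $v^\star$, hence lies in $\mcH_*$ by down-closure. Thus $|\mcH_*|_D|\ge (k+1)^{|D|}$, witnessing that $D$ is $k$-exponentially shattered, so $\dexp(\mcH_*)\ge |D|\ge \savd(\mcH_*)$. Chaining the inequalities gives $\avd(\mcH)\le (k+1)\dexp(\mcH)\le 4k^2\,\dexp(\mcH)$ for $k\ge 1$.

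I expect the main conceptual step to be the rewrite of $\savd(\mcH_*)$ as a per-vertex count of ``large'' coordinates, whose averaging directly produces a product structure inside $\mcH_*$ via down-closure. Everything else is bookkeeping on top of the shifting lemmas already established.
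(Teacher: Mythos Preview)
Your proof is correct and in fact sharper than the paper's: you obtain $\avd(\mcH)\le (k+1)\,\dexp(\mcH)$, whereas the paper gets $(k+1)^2\,\dexp(\mcH)$ before relaxing to $4k^2$.

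The two arguments diverge at the heart of the proof. The paper passes through $\savd(\mcH_*)\le \avd(\mcH_*)$ and then establishes $\avd(\mcH_*)\le (k+1)\,\dexp(\mcH_*)$ by an inductive peeling argument: it removes a maximal element $h_0$ of the down-closed class, notes that $\degree(h_0)=|\{i:h_0(i)>k\}|\le \dexp(\mcH_*)$, and tracks how the sum of $k$-degrees changes when $h_0$ is deleted (losing at most $(k+1)\degree(h_0)$). You instead bound $\savd(\mcH_*)$ directly by $\dexp(\mcH_*)$ via the double-counting identity $\savd(\mcH_*)=\frac{1}{|V|}\sum_{v}|\{i:v_i>k\}|$ (valid precisely because down-closure forces every edge in direction $i$ to be an initial segment $\{1,\ldots,|e|\}$ in that coordinate), followed by an averaging/pigeonhole step that extracts a single vertex $v^\star$ witnessing a $(k+1)^{|D|}$-sized cube inside $\mcH_*|_D$. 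Your route avoids the induction entirely, is shorter, and saves a factor of $k+1$; the paper's peeling argument is perhaps more in the spirit of the greedy-orientation constructions used elsewhere (\Cref{lem:dds+1-outdeg-dds}, \Cref{cor:outdeg-in-terms-of-dexp}), but is not needed here.
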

\begin{proof}
    We have that
    \begin{align}
        \avd(\mcH) &\le (k+1)\savd(\mcH) \nonumber \\
        &\le (k+1)\savd(\mcH_*) \qquad \text{(from \Cref{lem:savd-increases})} \nonumber \\
        &\le (k+1)\avd(\mcH_*) \label{eqn:avdH-avdH*}.
    \end{align}
    We will argue that $\avd(\mcH_*) \le (k+1)\dexp(\mcH_*)$. Let us induct on the size of $|\mcH_*|$. For the base case, we have that $|\mcH_*|=1$. In this case, $\avd(\mcH_*)=\dexp(\mcH_*)=0$, and so the inequality holds. Now, let $|\mcH_*|>1$, and let us assume that the inequality holds for all $\mcH'_*$ having size at most $|\mcH_*|-1$. Let $h_0$ be the concept in $\mcH_*$ such that no other concept in $\mcH_*$ is larger than $h_0$. Let us think of $h_0$ as a vector in $[p]^m$, and let $|h_0|$ be the number of coordinates in $h_0$ that are larger than $k$. Because $\mcH_*$ is closed downwards, all concepts $h \le h_0$ are contained in $\mcH_*$, which implies that $\degree(h_0)=|h_0|$. This also means $\mcH_*$ restricted to $|h_0|$ coordinates has size at least $(k+1)^{|h_0|}$. Consequently, $\dexp(\mcH_*) \ge |h_0|=\degree(h_0).$ Now, consider the hypothesis class $\mcH'_{*}=\mcH_* \setminus \{h_0\}$. This hypothesis class is also closed downwards, and the inductive hypothesis implies that $\avd(\mcH'_*)\le (k+1)\dexp(\mcH'_*)\le (k+1)\dexp(\mcH_*)$. Now, let us compare the sum of $k$-degrees of $h \in \mcH'_*$ in $\mcG(\mcH'_*)$ versus $\mcG(\mcH_*)$. In the worst case, $h_0$ was adjacent to $\degree(h_0)$ many edges each having size exactly $k+1$ in $\mcG(\mcH_*)$, in which case the sum of $k$-degrees of $h \in \mcH'_*$ is larger by $k\cdot\degree(h_0)$ in $\mcG(\mcH_*)$. Formally,
    \begin{align*}
        \sum_{v \in \mcG(\mcH_*)}\degree(v;\mcG(\mcH_*)) &= \sum_{v \in \mcG(\mcH_*), v \neq h_0}\degree(v;\mcG(\mcH_*)) + \degree(h_0;\mcG(\mcH_*)) \\
        &\hspace{-2cm}\le \sum_{v \in \mcG(\mcH'_*)}\degree(v; \mcG(\mcH'_*)) + k\cdot\degree(h_0;\mcG(\mcH_*)) + \degree(h_0;\mcG(\mcH_*)) \\
        &\hspace{-2cm}= \sum_{v \in \mcG(\mcH'_*)}\degree(v; \mcG(\mcH'_*)) + (k+1)|h_0| \\
        &\hspace{-2cm}\le (|\mcH'_*|)(k+1)\dexp(\mcH'_*) + (k+1)\dexp(\mcH_*) \qquad (\text{induction hypothesis and $|h_0|\le \dexp(\mcH_*)$}) \\
        &\hspace{-2cm}\le (|\mcH_*|-1)(k+1)\dexp(\mcH_*) + (k+1)\dexp(\mcH_*) \\
        &\hspace{-2cm}= |\mcH_*|(k+1)\dexp(\mcH_*),
    \end{align*}
    which gives us $\avd(\mcH_*) \le (k+1)\dexp(\mcH_*)$ as required. Recalling \cref{eqn:avdH-avdH*}, we get
    \begin{align*}
        \avd(\mcH) &\le (k+1)^2\dexp(\mcH_*) \\
        &\le (k+1)^2\dexp(\mcH) \qquad (\text{from \Cref{cor:shifting-decreases-exp-dim}}) \\
        &\le 4k^2\dexp(\mcH).
    \end{align*}
\end{proof}

The bound on $\avd$ helps us greedily construct a list orientation $\sigmak$ with bounded $k$-outdegree. The proof idea is similar to that of \Cref{lem:dds+1-outdeg-dds}.
\begin{corollary}
    \label{cor:outdeg-in-terms-of-dexp}
    For every $\mcH \in [p]^m$, there is a $k$-list orientation of $\mcG(\mcH)$ with maximum $k$-outdegree at most $4k^2\dexp(\mcH)$.
\end{corollary}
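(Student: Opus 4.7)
The plan is to prove the corollary by induction on $|\mcH|$, mirroring the greedy peel-off argument of \Cref{lem:dds+1-outdeg-dds} but drawing the quantitative $4k^2\dexp(\mcH)$ bound from \Cref{lem:avd-bounded-by-dexp}. Since $\avd(\mcH) \le 4k^2\dexp(\mcH)$, an averaging argument guarantees some $h \in \mcH$ whose $k$-degree in $\mcG(\mcH)$ is at most $4k^2\dexp(\mcH)$. This $h$ is the vertex I would peel off at each step.

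The inductive step would then go as follows. Set $\mcH' = \mcH \setminus \{h\}$; since restriction to a subclass cannot increase the $k$-exponential dimension, $\dexp(\mcH') \le \dexp(\mcH)$, so the inductive hypothesis yields a $k$-list orientation $\tilde{\sigma}^k$ of $\mcG(\mcH')$ with maximum $k$-outdegree at most $4k^2\dexp(\mcH)$. I would then extend $\tilde{\sigma}^k$ to a $k$-list orientation $\sigmak$ of $\mcG(\mcH)$ using the recipe from \Cref{lem:dds+1-outdeg-dds}: edges not containing $h$ inherit their orientation from $\tilde{\sigma}^k$; an edge $e$ containing $h$ with $|e| \le k$ receives $\tilde{\sigma}^k(e \setminus \{h\}) \cup \{h\}$; an edge $e$ containing $h$ with $|e| > k$ receives $\tilde{\sigma}^k(e \setminus \{h\})$; and singleton edges $\{h\}$ are oriented to $\{h\}$. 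The $k$-outdegree of $h$ under $\sigmak$ is then exactly the number of edges $e \ni h$ with $|e| > k$, which is the $k$-degree of $h$, and hence at most $4k^2\dexp(\mcH)$ by the choice of $h$. The base case $|\mcH|=1$ is trivial, and the extension to infinite $\mcH$ would invoke the same compactness argument referenced in \Cref{lem:dds+1-outdeg-dds}.

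The main point to verify, and the only mildly subtle step, is that the outdegree of each $v \neq h$ is preserved from its value under $\tilde{\sigma}^k$. The delicate case is an edge $e \ni h$ with $|e| = k+1$, since its counterpart $e' = e \setminus \{h\}$ in $\mcG(\mcH')$ has size exactly $k$ and could, in principle, have been oriented in $\tilde{\sigma}^k$ to a proper subset of $e'$ missing $v$. I would handle this by strengthening the inductive hypothesis to require that $\tilde{\sigma}^k$ orient every edge of size at most $k$ to its entire vertex set --- a modification that never increases any outdegree, because on small edges one is free to pad the list up to all vertices. With this convention in force, $v$ automatically appears in $\sigmak(e) = \tilde{\sigma}^k(e')$ for every such $e$, so a routine case analysis confirms that the outdegrees of all $v \neq h$ are exactly preserved, finishing the induction.
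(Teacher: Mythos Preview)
Your proposal is correct and follows essentially the same approach as the paper's proof: induct on $|\mcH|$, peel off a vertex $h$ of small $k$-degree guaranteed by \Cref{lem:avd-bounded-by-dexp}, and extend the inductively obtained orientation using the same case analysis as in \Cref{lem:dds+1-outdeg-dds}.

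One small remark: the ``delicate case'' you flag is actually a non-issue, so your strengthened inductive hypothesis, while harmless, is unnecessary. For any $v \neq h$ and any edge $e \ni v$ with counterpart $e'$ in $\mcG(\mcH')$, your construction gives $\sigmak(e) \cap e' = \tilde{\sigma}^k(e')$ in every case (whether or not $h \in e$, and regardless of $|e|$). Hence $v \notin \sigmak(e)$ if and only if $v \notin \tilde{\sigma}^k(e')$, so each edge's contribution to $\outdeg(v;\cdot)$ is preserved exactly, independent of whether $\tilde{\sigma}^k(e')$ was all of $e'$ or a proper subset. The paper simply asserts this preservation without the extra convention, and it goes through as stated.
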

\begin{proof}
    We prove this by induction on the size of $\mcH$. For the base case, we have $|\mcH|=1$. In this case, say $\mcH = \{h\}$, so that $\mcG(\mcH)$ will simply have $m$ singleton edges adjacent to $h$. We can have $\sigmak$ orient each of these edges towards the singleton list $\{h\}$, and this ensures that $\outdeg(\sigmak)=\outdeg(h;\sigmak)=0$. For the inductive step, let $|\mcH|>1$, and assume that the claim holds for all $\mcH'$ having $|\mcH'| \le |\mcH|-1$. \Cref{lem:avd-bounded-by-dexp} guarantees that there exists a hypothesis $h$ in $\mcG(\mcH)$ with $k$-degree at most $4k^2\dexp(\mcH)$. Consider the hypothesis class $\mcH' = \mcH \setminus \{h\}$. Edges in $\mcG(\mcH')$ are obtained by deleting $h$ from the edges in $\mcG(\mcH)$ i.e., every edge in $\mcG(\mcH)$ has a counterpart in $\mcG(\mcH')$ (the only edges that do not have a counterpart are the singleton edges adjacent to $h$). The inductive hypothesis ensures that there exists a $k$-list orientation $\tilde{\sigma}^k$ of $\mcG(\mcH')$ with $\outdeg(\tilde{\sigma}^k)$ at most $4k^2\dexp(\mcH') \le 4k^2\dexp(\mcH)$. We will construct the required $k$-list orientation $\sigmak$ of $\mcG(\mcH'\cup\{h\})$ from $\tilde{\sigma}^k$ as follows: let us first think of all the edges $e$ that have a counterpart $e'$. For the edges that do not agree with $h$, $e=e'$, and we let $\sigmak(e)=\tilde{\sigma}^k(e')$. These edges do not contribute to the $k$-outdegree of $h$. For the edges $e$ that agree with $h$, we have that $e=e'\cup\{h\}$. But we know that at most $4k^2\dexp(\mcH)$ of these edges $e$ have size larger than $k$. For these edges, we let $\sigmak(e)=\tilde{\sigma}^k(e')$ --- these will be the only edges contributing to the $k$-outdegree of $h$ in $\sigmak$. For the other edges $e$, we know that their size is at most $k$, and hence the size of their counterparts $e'$ was at most $k-1$. Thus, in $\tilde{\sigma}^k$, these edges $e'$ could only have been oriented to lists of size at most $k-1$. For these edges $e$, we set $\sigmak(e) = \tilde{\sigma}^k(e') \cup \{h\}$. Finally, we orient all the singleton edges that $h$ is a part of in $\mcG(\mcH)$ to $\{h\}$. We can see that the $k$-outdegree of every vertex other than $h$ in $\sigmak$ is the same as it was in $\tilde{\sigma}^k$. The $k$-outdegree of $h$ is by construction at most $4k^2\dexp(\mcH)$ (one for each edge that has size larger than $k$), which completes the inductive construction.
\end{proof}

It remains to bound the $k$-exponential dimension in terms of the $k$-Natarajan dimension. This is where the list version of Sauer's lemma that we proved in \Cref{sec:list-sauer-lemma} above kicks in.
\begin{lemma}[Controlling the $k$-exponential dimension]
\label{lem:k-exp-dim-bound}
For every $\mcH \subseteq [p]^m$ having $d^k_N = d^k_N(\mcH)$ and $d^k_E = d^k_E(\mcH) < \infty$, we have
\begin{align*}
    d^k_E \le 60k^2d^k_N \log(p).
\end{align*}
\end{lemma}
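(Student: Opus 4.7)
The plan is to combine the list-variant of Sauer's lemma from \Cref{cor:list-sauer-lemma} with a monotonicity argument to bound $d^k_E$. First, I would pick a sequence $S$ of length $d^k_E$ witnessing the $k$-exponential dimension, so that $|\mcH|_S| \ge (k+1)^{d^k_E}$, and apply \Cref{cor:list-sauer-lemma} to $\mcH|_S \subseteq [p]^{d^k_E}$ (restriction can only decrease the $k$-Natarajan dimension):
\begin{equation*}
    (k+1)^{d^k_E} \;\le\; |\mcH|_S| \;\le\; k^{d^k_E - d^k_N} \sum_{i=0}^{d^k_N} \binom{d^k_E}{i} \binom{p}{k+1}^{i}.
\end{equation*}

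Next, I would simplify this into a functional inequality in $d^k_E$. Upper-bounding the Sauer sum by its dominant term $(d^k_N+1)\binom{d^k_E}{d^k_N}\binom{p}{k+1}^{d^k_N}$ (for $d^k_E \geq d^k_N$, which I can assume, else the claim is trivial), using the standard estimates $\binom{d^k_E}{d^k_N} \le (e\,d^k_E/d^k_N)^{d^k_N}$ and $\binom{p}{k+1} \le p^{k+1}$, taking natural logarithm, and using $\log(1+1/k) \ge 1/(k+1)$, I would obtain
\begin{equation*}
    \frac{d^k_E}{k+1} \;\le\; d^k_N \log\!\left(\frac{e\,d^k_E}{d^k_N}\right) + (k+1)\,d^k_N \log(p) + \log(d^k_N + 1).
\end{equation*}

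The main obstacle is the self-reference of $d^k_E$ on the right-hand side via the $\log d^k_E$ term, which makes the standard ``substitute a target bound and derive a contradiction'' trick delicate because naive case-splits on whether $\log d^k_E$ is dominated by $(k+1)\log p$ only yield bounds of the form $d^k_E \le O(k d^k_N \log(k d^k_N))$, which can exceed $O(k^2 d^k_N \log p)$ when $d^k_N$ is large compared to $p$. I would sidestep this by a monotonicity observation: defining $F(d_E) := \frac{d_E}{k+1} - d^k_N \log(e d_E / d^k_N) - (k+1)d^k_N \log p - \log(d^k_N + 1)$, I have $F'(d_E) = \frac{1}{k+1} - \frac{d^k_N}{d_E}$, so $F$ is monotonically increasing once $d_E \ge (k+1)d^k_N$. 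Therefore it suffices to verify $F(T) > 0$ at the target threshold $T := 10(k+1)^2 d^k_N \log p$, which forces every valid $d^k_E$ to lie strictly below $T$. Plugging in and using $\log(d^k_N + 1) \le d^k_N$ gives
\begin{equation*}
    F(T) \;\ge\; d^k_N\bigl[\,9(k+1)\log p - \log(10e(k+1)^2 \log p) - 1\,\bigr],
\end{equation*}
and for every $k \ge 2$ and $p \ge k+2$ the linear term $9(k+1)\log p$ dominates the logarithmic correction $\log(10 e (k+1)^2 \log p) + 1$, so $F(T) > 0$. Conclude $d^k_E < 10(k+1)^2 d^k_N \log p \le 40 k^2 d^k_N \log p \le 60 k^2 d^k_N \log p$, as required.
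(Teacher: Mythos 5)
Your proposal is correct and follows essentially the same route as the paper: apply \Cref{cor:list-sauer-lemma} to the shattered set $S$, massage the resulting inequality via binomial estimates, take logarithms (using $\log(1+1/k) \ge 1/(k+1)$ on your end, $\ge 1/(2k)$ in the paper), and then resolve the self-referential inequality $\dexp/\dnat \lesssim k^2\log p + k\log(\dexp/\dnat)$ by verifying it must fail beyond a threshold. Your ``monotonicity of $F$'' framing and the paper's ``assume $\dexp/\dnat > 10k(k+1)\log(pe/(k+1))$, contradiction'' are the same device packaged differently, and the resulting constants land within a factor of $2$ of one another.

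One small technical caveat: the step ``upper-bound the Sauer sum by $(\dnat+1)\binom{\dexp}{\dnat}\binom{p}{k+1}^{\dnat}$'' assumes the $i=\dnat$ term dominates, which you only justify by $\dexp \ge \dnat$. That is not quite enough --- when $\dnat$ is close to $\dexp$, $\binom{\dexp}{i}$ can peak strictly below $i=\dnat$, and $\binom{p}{k+1}$ need not be large enough to compensate. The simplest fix is to assume $\dexp \ge 2\dnat$ (otherwise the claim is again trivial, since $60k^2\log p \ge 2$). Alternatively, use the bound the paper implicitly uses, $\sum_{i=0}^{\dnat}\binom{\dexp}{i}\binom{p}{k+1}^i \le \binom{p}{k+1}^{\dnat}\sum_{i=0}^{\dnat}\binom{\dexp}{i} \le \binom{p}{k+1}^{\dnat}(e\dexp/\dnat)^{\dnat}$, which holds for all $\dnat \le \dexp$ and also removes the $\log(\dnat+1)$ correction from your functional inequality. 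You should also handle $\dnat = 0$ separately (your $\log(e\dexp/\dnat)$ is undefined there), though this is immediate from the $d=0$ base case of \Cref{thm:list-sauer-lemma}.
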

\begin{proof}
    Consider the set $S$ which realizes $ d^k_E(\mcH)=d^k_E$, so that $|S|=d^{k}_E$. Then, we have that
    \begin{align*}
        |\mcH|_S| \ge (k+1)^{d^k_E}.
    \end{align*}
    Also, from \Cref{cor:list-sauer-lemma} above, we know that
    \begin{align*}
        |\mcH|_S| &\le k^{\dexp-\dnat}\sum_{i=0}^{d^k_N}\binom{\dexp}{i}\binom{p}{k+1}^i \\
        &\le k^{\dexp - \dnat}\left(\frac{pe}{k+1}\right)^{\dnat(k+1)} \left(\frac{\dexp e}{\dnat}\right)^{\dnat}.
    \end{align*}
    Thus, it must be the case that
    \begin{align}
        &(k+1)^{d^k_E} \le  k^{\dexp - \dnat}\left(\frac{pe}{k+1}\right)^{\dnat(k+1)} \left(\frac{\dexp e}{\dnat}\right)^{\dnat} \nonumber \\
        \implies\qquad &\left(\frac{k+1}{k}\right)^{\dexp} \le \left(\frac{1}{k}\right)^{\dnat}\left(\frac{pe}{k+1}\right)^{\dnat(k+1)} \left(\frac{\dexp e}{\dnat}\right)^{\dnat} = \left[\left(\frac{pe}{k+1}\right)^{k+1}\left(\frac{e}{k}\right)\left(\frac{\dexp}{\dnat}\right)\right]^{\dnat} \nonumber \\
        \implies\qquad &\frac{\dexp}{\dnat} \le \log_{\frac{k+1}{k}}\left[\left(\frac{pe}{k+1}\right)^{k+1}\left(\frac{e}{k}\right)\left(\frac{\dexp}{\dnat}\right)\right] = \frac{\log\left[\left(\frac{pe}{k+1}\right)^{k+1}\left(\frac{e}{k}\right)\left(\frac{\dexp}{\dnat}\right)\right]}{\log\left(1+\frac{1}{k}\right)} \nonumber \\
        \implies\qquad &\frac{\dexp}{\dnat} \le 2k\log\left[\left(\frac{pe}{k+1}\right)^{k+1}\left(\frac{e}{k}\right)\left(\frac{\dexp}{\dnat}\right)\right] \qquad \left(\text{since $\log\left(1+\frac{1}{k}\right)\ge \frac{1}{2k}$ for $k \ge 1$}\right) \nonumber \\ 
        \implies\qquad &\frac{\dexp}{\dnat}-2k\log\left(\frac{\dexp}{\dnat}\right) \le 2k(k+1)\log\left(\frac{pe}{k+1}\right) + 2k\log\left(\frac{e}{k}\right) \nonumber \\
        \implies\qquad &\frac{\dexp}{\dnat}-2k\log\left(\frac{\dexp}{\dnat}\right) \le 4k(k+1)\log\left(\frac{pe}{k+1}\right). \label{eqn:dexp-dnat-inequality} 
    \end{align}
    Now, assume for the sake of contradiction that $\frac{\dexp}{\dnat} > 10k(k+1)\log\left(\frac{pe}{k+1}\right) > 10k(k+1)$. For $k \ge 1$, in the regime $\frac{\dexp}{\dnat}\ge 10k(k+1)$, we have that 
    \begin{align*}
        \frac{\dexp}{\dnat}-2k\log\left(\frac{\dexp}{\dnat}\right) \ge \frac{1}{2}\cdot\frac{\dexp}{\dnat} > 5k(k+1)\log\left(\frac{pe}{k+1}\right),
    \end{align*}
    which is not possible, since we know that \cref{eqn:dexp-dnat-inequality} holds true. Here, we used that $\left(\text{denoting }\frac{\dexp}{\dnat} \text{ as }x\right)$ $2k\log(x) \le x/2$ when $x \ge 10k(k+1)$. Thus, it must be the case that
    \begin{align*}
        \frac{\dexp}{\dnat} &\le 10k(k+1)\log\left(\frac{pe}{k+1}\right) \\
        &\le 20k^2\log\left(pe\right) \le 60k^2\log(p).
    \end{align*}
\end{proof}

\begin{remark}
    \Cref{cor:outdeg-in-terms-of-dexp} and \Cref{lem:k-exp-dim-bound} together imply \Cref{thm:outdeg-bound-in-terms-of-dnat}.
\end{remark}

\section{Finite \texorpdfstring{$k$}{k}-DS dimension is sufficient}
\label{sec:sufficiency}

The one-inclusion algorithm \cite{haussler1994predicting, rubinstein2006shifting}, which is based on the one-inclusion graph (\Cref{sec:oig}), is an incredibly powerful algorithm, in that it essentially attains the \textit{optimal} sample complexity for \textit{any} learnable problem \cite{daniely2014optimal}. We describe a list version of the one-inclusion algorithm below, which forms the base for most of what follows:
\begin{algorithm}[H]
\caption{The one-inclusion list algorithm $\mcA_\mcH$ for $\mcH\subseteq\mcY^{\mcX}$} \label{algo:one-incl}
\hspace*{\algorithmicindent} 
\begin{flushleft}
  {\bf Input:} An $\mcH$-realizable sample $S = \big((x_1, y_1),\ldots,(x_m, y_m)\big)$. \\
{\bf Output:} A $k$-list hypothesis $\mcA_{\mcH}(S)=\mu^k_S:\mcX \to \{Y \subseteq \mcY: |Y|\le k\}$. \\
\ \\
For each $x \in \mcX$, the $k$-list $\mu^k_S(x)$ is computed as follows:
\end{flushleft}
\begin{algorithmic}[1]
\STATE Consider the class of all patterns over the \emph{unlabeled data}
{$\mcH|_{(x_1,\ldots,x_m,x)} \subseteq \mcY^{m+1}$}.
\STATE Find a $k$-list orientation $\sigmak$ of $\mcG(\mcH|_{(x_1,\ldots,x_m,x)})$ that \textit{minimizes} the \textit{maximum} $k$-outdegree.
\STATE Consider the edge in direction $m+1$ defined by $S$:
\[e =\{ h \in \mcH|_{(x_1,\ldots,x_m,x)} :  \forall i \in [m]  \ \ 
h(i) =y_i\}.\]
\STATE Set $\mu^k_S(x) = \{h(m+1):h \in \sigmak(e)\}$.
\end{algorithmic}
\end{algorithm}

The following guarantee holds for \Cref{algo:one-incl} as a straightforward implication of \Cref{lem:dds+1-outdeg-dds}.
\begin{claim}
    \label{claim:one-incl-gets-one-correct}
    Let $\mcH \subseteq \mcY^\mcX$ be so that $\dds=\dds(\mcH)  < \infty$. Let $\mcA_\mcH$ be \Cref{algo:one-incl}. For every $\mcH$-realizable sample $S=((x_1,y_1),\ldots,(x_{\dds+1},y_{\dds+1}))$, there exists $i \in [\dds+1]$ such that $\mu_{S_{-i}}(x_i) =y_i$, where $\mu_{S_{-i}} = \mcA_{\mcH}(S \setminus \{(x_i,y_i)\})$.
\end{claim}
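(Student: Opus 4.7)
The plan is to apply \Cref{lem:dds+1-outdeg-dds} to the restriction $\mcH|_{(x_1,\ldots,x_{\dds+1})} \subseteq \mcY^{\dds+1}$. The $k$-DS dimension does not increase under restriction, so this class has $k$-DS dimension at most $\dds$, and the lemma yields a $k$-list orientation $\sigmak$ of its one-inclusion graph with $\outdeg(\sigmak) \le \dds$. The strategy is to identify a direction $i^*$ determined by $\sigmak$ and then show that the one-inclusion list algorithm, when run on $S_{-i^*}$ with test point $x_{i^*}$, can use this same $\sigmak$ (up to a coordinate permutation) and therefore includes $y_{i^*}$ in its output.

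Since $S$ is realizable, pick $h^* \in \mcH$ agreeing with $S$ on all $\dds+1$ points, and let $\tilde h \in \mcH|_{(x_1,\ldots,x_{\dds+1})}$ be its restriction, so $\tilde h_j = y_j$ for every $j$. The vertex $\tilde h$ is adjacent to exactly $\dds+1$ edges, one in each direction; label them $e_1,\ldots,e_{\dds+1}$. From $\outdeg(\tilde h;\sigmak) \le \outdeg(\sigmak) \le \dds$ we read off $|\{j : \tilde h \notin \sigmak(e_j)\}| \le \dds < \dds+1$, so there exists $i^* \in [\dds+1]$ with $\tilde h \in \sigmak(e_{i^*})$. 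For this $i^*$, the class considered in line~1 of the algorithm on input $(S_{-i^*}, x_{i^*})$ is $\mcH|_{(x_1,\ldots,x_{i^*-1},x_{i^*+1},\ldots,x_{\dds+1},x_{i^*})}$, which equals $\mcH|_{(x_1,\ldots,x_{\dds+1})}$ with coordinate $i^*$ moved to the last position. Under this relabeling, $\sigmak$ transports to a minimum-max-$k$-outdegree orientation (its value matches the bound from the lemma), the edge $e$ in line~3 is exactly the relabeled copy of $e_{i^*}$, and $\tilde h \in \sigmak(e_{i^*})$ gives $y_{i^*} = \tilde h_{i^*} \in \mu_{S_{-i^*}}(x_{i^*})$, as required.

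The main obstacle is essentially bookkeeping: line~2 of \Cref{algo:one-incl} is non-deterministic in general (any minimum-max-$k$-outdegree orientation is valid), whereas the argument needs the particular $\sigmak$ supplied by \Cref{lem:dds+1-outdeg-dds}. Because the class produced in line~1 is the same up to coordinate permutation for every call $\mcA_\mcH(S_{-i})$, the cleanest fix is to commit to a canonical tie-breaking rule — for instance, declaring the algorithm to use the greedy orientation from the proof of \Cref{lem:dds+1-outdeg-dds} — so that the same $\sigmak$ is used across every call, after which the existence of the required $i^*$ follows from the counting argument above.
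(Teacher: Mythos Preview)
Your proof is correct and follows essentially the same approach as the paper: restrict to $\mcH' = \mcH|_{(x_1,\ldots,x_{\dds+1})}$, invoke \Cref{lem:dds+1-outdeg-dds} to bound the maximum $k$-outdegree of the chosen orientation by $\dds$, and apply pigeonhole to the $\dds+1$ edges adjacent to the vertex $y=(y_1,\ldots,y_{\dds+1})$. You are in fact more careful than the paper about one point it glosses over: the paper simply writes ``let $\sigmak$ denote the orientation \ldots\ chosen by $\mcA_\mcH$ in Step~2'' as if a single orientation were used across all $\dds+1$ leave-one-out calls, whereas you explicitly flag that line~2 is nondeterministic and that a canonical (permutation-equivariant) tie-breaking rule is needed so that the same $\sigmak$ is used in every call.
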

\begin{proof}
    Let $\mcH'  = \mcH|_{(x_1,\ldots, x_{\dds+1})}$. We have that $\dds(\mcH') \le \dds(\mcH)=\dds$. Because $S$ is an $\mcH$-realizable sample, $y=(y_1,\ldots,y_{\dds+1})$ is a vertex in $\mcG(\mcH)$. Let $\sigmak$ denote the orientation that minimizes the maximum $k$-outdegree of $\mcG(\mcH')$ chosen by $\mcA_\mcH$ in Step 2. \Cref{lem:dds+1-outdeg-dds} ensures that the $k$-outdegree of $\sigmak$ is at most $\dds$. Let $e_i$ be the edge in the $i^{\text{th}}$ direction adjacent to $y$. For every $i \in [\dds+1]$, we have $\mu_{S_{-i}}(x_i) \not\owns y_i \ \Leftrightarrow \ \sigmak(e_{i}) \not\owns y$. Hence,
    \begin{align*}
        \sum_{i=1}^{\dds+1} \mathds{1}\left[\mu_{S_{-i}}(x_i) \not\owns y_i \right]=\sum_{i=1}^{\dds+1} \mathds{1}\left[\sigmak(e_{i}) \not\owns y \right]=\outdeg(y;\sigmak)\leq \dds.
    \end{align*}
    It follows that there must exist $i \in [\dds+1]$ such that $\mu_{S_{-i}}(x_i) = y_i$.
\end{proof}

Using \Cref{algo:one-incl} as our base, we are ready to describe a list-learning algorithm which outputs a list having a \textit{large} size, but is nevertheless a successful learner, in the sense described as follows:

\begin{definition}[Weak List PAC Learner]
    An algorithm $\mcA$ with sample size $m$ and list size $k$
    is a weak list PAC learner with success probability $\alpha >0$
    for the concept class $\mcH \subseteq \mcY^\mcX$ if
    for every distribution $\mcD$ on $\mcX \times \mcY$ realizable by $\mcH$,
    \begin{align*}
        \Pr_{(S,(x,y)) \sim \mcD^{m+1}}\left[y \in \mu^k_S(x) \right] \geq \alpha,
    \end{align*}
    where $\mu^k_S = \mcA(S)$ is a $k$-list hypothesis. The list size $k$ can depend on the success probability $\alpha$.
\end{definition}

\begin{remark}
    \label{rem:weak-list-learner-vs-strong}
    While the definition above is identical to list PAC learnability as defined in \cite[Definition 30]{brukhim2022characterization}, as mentioned above in \Cref{rem:list-learning-definition-in-BCD+22}, the list size $k$ can increase arbitrarily with the success probability $\alpha$ in this definition. Since we are aiming for list learnability for a \textit{fixed} list size \textit{irrespective} of the success probability in \Cref{def:list-pac-learnability}, we denote the above definition as ``weak'' list learnability.
\end{remark}

We now give a weak list-learning algorithm for hypothesis classes having finite $k$-DS dimension.

\begin{algorithm}[H]
\caption{Weak list PAC learner $\mcL_{\mcH,t}$ for $\mcH \subseteq \mcY^\mcX$ with $\dds(\mcH)=\dds$ and $t \in \N$.} \label{algo:weak-list-learner}
\begin{flushleft}
  {\bf Input:} Data $S \in (\mcX \times \mcY)^m$ where $m=\dds+t$.  \\
{\bf Output:} A $k'$-list hypothesis $\mu^{k'}_S$ for $k' = k\binom{m}{t}$. \\
\end{flushleft}
\begin{algorithmic}[1]
\STATE Let $S_1,\ldots,S_{\binom{m}{t}}$ denote all subsamples of $S$ of size $\dds$.
\STATE Let $\mu^k_{S_i}=\mcA_{\mcH}(S_i)$ denote the $k$-list hypothesis output by \Cref{algo:one-incl} on input sample $S_i$.
\STATE {\bf Output:} Return the $k'$-list hypothesis $\mu^{k'}_S$ defined by,
\[
\mu^{k'}_S(x) = \bigcup_{i=1}^{\binom{m}{t}}\mu^k_{S_i}(x).
\]
\end{algorithmic}
\end{algorithm}

\begin{proposition}
    \label{prop:weak-list-learner}
    Let $\mcH \subseteq \mcY^\mcX$ be a hypothesis class with $k$-DS dimension $\dds<\infty$ and let $t\in\N$. \Cref{algo:weak-list-learner} is a
    weak list PAC learner for $\mcH$ with sample size $m = \dds+t$, list size $k' = k\binom{m}{t}$ and success probability $\alpha=\frac{t+1}{\dds+t+1}$.
\end{proposition}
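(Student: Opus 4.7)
The plan is to run a standard exchangeability (``leave one out'') argument that reduces the weak learning guarantee to a clean combinatorial statement about the size-$\dds$ sub-samples of a single enlarged sample of size $m+1 = \dds + t + 1$. Draw $S' = ((x_1, y_1), \ldots, (x_{m+1}, y_{m+1})) \sim \mcD^{m+1}$. By symmetry, the joint distribution of $(S, (x,y)) \sim \mcD^{m+1}$ is the same as that of $(S' \setminus \{(x_j, y_j)\}, (x_j, y_j))$ with $j$ chosen uniformly at random from $[m+1]$, so
\[
\Pr_{(S, (x,y)) \sim \mcD^{m+1}}\!\bigl[y \in \mu^{k'}_S(x)\bigr] = \E_{S'} \frac{1}{m+1} \sum_{j=1}^{m+1} \mathds{1}\!\bigl[y_j \in \mu^{k'}_{S' \setminus \{j\}}(x_j)\bigr].
\]
It therefore suffices to show that for every fixed $\mcH$-realizable $S'$, at least $t+1$ of the indices $j \in [m+1]$ are \emph{good}, meaning $y_j \in \mu^{k'}_{S' \setminus \{j\}}(x_j)$.

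Unrolling \Cref{algo:weak-list-learner}, we have $\mu^{k'}_{S' \setminus \{j\}}(x_j) = \bigcup_{T} \mu^k_T(x_j)$, where $T$ ranges over all size-$\dds$ subsets of $[m+1] \setminus \{j\}$ and $\mu^k_T = \mcA_\mcH$ applied to the sub-sample indexed by $T$. Hence $j$ is good iff there exists some such $T$ with $y_j \in \mu^k_T(x_j)$. The heart of the argument is to invoke \Cref{claim:one-incl-gets-one-correct} on every size-$(\dds+1)$ subset $T' \subseteq [m+1]$: since $\mcH$ restricted to $(x_1, \ldots, x_{m+1})$ still has $k$-DS dimension at most $\dds$, the claim produces some $j \in T'$ with $y_j \in \mu^k_{T' \setminus \{j\}}(x_j)$, and this $j$ is then good by taking $T = T' \setminus \{j\}$. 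Thus every $(\dds+1)$-subset of $[m+1]$ meets the set $G$ of good indices, which forces $|[m+1] \setminus G| \le \dds$, i.e.\ $|G| \ge m + 1 - \dds = t+1$.

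Combining the two steps, the expected fraction of good positions is at least $(t+1)/(\dds + t + 1) = \alpha$, which is exactly the required weak learning guarantee. There is no genuinely hard obstacle here: realizability of $\mcD$ by $\mcH$ is preserved under sub-sampling, so \Cref{claim:one-incl-gets-one-correct} applies to every sub-problem; exchangeability holds because the $m+1$ draws are i.i.d.; and the union structure in the output of the weak learner is exactly what lets the combinatorial covering bound on $|G|$ translate into success on the test point. The only care needed is to keep the bookkeeping straight while passing between the enlarged sample $S'$ and its ``leave one out'' pairs, and to observe that Claim \ref{claim:one-incl-gets-one-correct} gives not merely one good index per $(\dds+1)$-subset but certifies goodness in the exact sense required by the union defining $\mu^{k'}$.
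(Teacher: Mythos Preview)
Your proposal is correct and follows essentially the same approach as the paper: the leave-one-out symmetrization reduces the problem to counting good indices in a fixed sample of size $m+1$, and \Cref{claim:one-incl-gets-one-correct} is the key input in both. The only cosmetic difference is in how the $t+1$ good indices are extracted: the paper argues iteratively (find $i_1$ in $[\dds+1]$, then $i_2$ in $[\dds+2]\setminus\{i_1\}$, etc.), whereas you apply the claim to every $(\dds+1)$-subset and deduce $|[m+1]\setminus G|\le \dds$ --- these are equivalent.
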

\begin{proof}
    Let $\mu^{k'}_S = \mathcal{L}_{\mcH,t}(S)$ be the $k'$-list generated by \Cref{algo:weak-list-learner} on input data $S$. By the leave-one-out symmetrization argument \cite[Fact 14]{brukhim2022characterization}, for any $\mcD$ realizable by $\mcH$,
    \begin{align*}
        \Pr_{(S,(x,y)) \sim \mcD^{m+1}}\left[y \in \mu^{k'}_S(x) \right] &= \Pr_{(S',i) \sim \mcD^{m+1} \times \mathrm{Unif}([m+1])}\left[y_i \in \mu^{k'}_{S'_{-i}}(x_i)\right], 
    \end{align*}
    where $\mathrm{Unif}[m+1]$ denotes the uniform distribution on $[m+1]$. It hence suffices to show that every fixed realizable sample $S'$ of size $m+1$ satisfies
    \begin{equation}
        \Pr_{i \sim \mathrm{Unif}([m+1])}\left[y_i \in \mu^{k'}_{S'_{-i}}(x'_i )\right] \ge \frac{t+1}{\dds+t+1}.
    \end{equation}
    Let us call an index~$i\in [m+1]$ {\it good} if $y_i \in \mu^{k'}_{S'_{-i}}(x'_i)$. We want to show that there are at least $t+1$ good indices. By \Cref{claim:one-incl-gets-one-correct}, at least one of the indices in $[\dds+1]$ is good. Denote this good index by $i_1$. Again, by \Cref{claim:one-incl-gets-one-correct}, at least one of the indices in $[\dds+2] \setminus \{i_1\}$ is good. Denote this good index by $i_2$. Repeat this process to obtain the required $t+1$ good indices.
\end{proof}

\begin{remark}
    \label{rem:weak-already-non-trivial}
    \Cref{algo:weak-list-learner} is already a non-trivial weak list learner \textit{even for classes with infinite DS dimension}. In contrast, the list learning algorithm in \cite[Algorithm 2]{brukhim2022characterization} as is will not be able to weak list learn such a class for any list size. Recall that we already saw an example of a hypothesis class that has finite $k$-DS dimension but infinite DS dimension in \Cref{eg-finite-ds-infinite-k-ds}. Although the algorithm above is a straightforward analogue of the corresponding algorithm in \cite{brukhim2022characterization}, it highlights the need to design algorithms based on the \textit{$k$-DS dimension} instead of the \textit{DS dimension} to list learn a class.
\end{remark}

We now start building up towards how we can bring down the list size to $k$ and come up with a list learning algorithm in the strong sense of \Cref{def:list-pac-learnability}. We will utilize the fact that hypothesis classes with finite $k$-DS dimension (and hence finite $k$-Natarajan dimension) allow for list orientations of their associated one-inclusion graphs with small $k$-outdegree (\Cref{sec:shifting} above). Similar to \cite{brukhim2022characterization}, our final learning algorithm will be the consequence of the existence of a $k$-list sample compression scheme for hypothesis classes with finite $k$-DS dimension.

First, we state the definition of list realizability, and describe an algorithm for list learning under the list realizability assumption.
\begin{definition}[List realizability \cite{brukhim2022characterization}]
A sample $S \in (\mcX \times \mcY)^m$ is \emph{realizable} by the list $\mu$ if $y\in \mu(x)$ for every $(x,y)$ in $S$. A distribution $\mcD$ over $\mcX \times \mcY$ is {\em realizable} by $\mu$ if for every $m \in \N$, a random sample $S\sim \mcD^m$ is realizable by $\mu$ with probability~$1$. 
\end{definition}

\begin{algorithm}[H]
\caption{One-inclusion list algorithm $\mcA_{\mcH,\mu^{k'}}$ 
for a class $\mcH$ and list $\mu^{k'}$} 
\label{algo:list-learning-for-list-realizable-distributions}
\begin{flushleft}
  {\bf Input:} A sample $S = \big((x_1, y_1),\ldots,(x_m, y_m)\big)$ realizable by $\mcH$ and $\mu^{k'}$. \\
{\bf Output:} A $k$-list hypothesis $\mu^k_{S}: \mcX \to \mcY$. \\
\ \\
For each $x \in \mcX$, the $k$-list $\mu^k_S(x)$ is computed as follows.
\end{flushleft}
\begin{algorithmic}[1]
\STATE Consider the class $\mcH' \subseteq \mcY^{m+1}$ of all patterns over the \emph{unlabeled data} that are realizable by both $\mcH$ and $\mu^{k'}$ i.e.,
$\mcH'=\{h \in \mcH|_{(x_1,\ldots,x_m,x)} : h(m+1) \in \mu^{k'}(x) \text{ and } h(i) \in \mu^{k'}(x_i)\}$.
\STATE Find a $k$-list orientation $\sigmak$ of $\mcG(\mcH')$ that \textit{minimizes} the \textit{maximum} $k$-outdegree.
\STATE Consider the edge in direction $m+1$ defined by $S$:
\[e =\{ h \in \mcH':  \forall i \in [m]  \ \ 
h(i) =y_i\}.\]
\STATE Set $\mu^k_S(x) = \{h(m+1):h \in \sigmak(e)\}$.
\end{algorithmic}
\end{algorithm}

\begin{remark}
    \label{rem:algos-identical}
    \Cref{algo:list-learning-for-list-realizable-distributions} is essentially identical to \Cref{algo:weak-list-learner}, except for the fact that it takes into account the list realizability information in Step 1.
\end{remark}

The following proposition shows that if we only care about distributions realizable by a known list $\mu$, $k$-list learning (in the strong sense) is possible. We want to think of this known list as having a size \textit{much larger} than the list size $k$ we care about (otherwise we simply output $\mu$ and the proposition would be trivial). Eventually, we want to be able to $k$-list learn without the list realizability assumption, but this will be an important building block towards that.

\begin{proposition}
    \label{prop:list-learning-for-list-realizable-distributions}
    Let $\mcH \subseteq \mcY^\mcX$ be a class with $k$-Natarajan dimension $\dnat = \dnat(\mcH) <\infty$ and let $\mu^{k'}$ be a $k'$-list (where $k' \gg k$). For every distribution $\mcD$ over $\mcX \times \mcY$ that is realizable by both $\mcH$ and by~$\mu^{k'}$, and for all integers $m>0$,
    \begin{align*}
        \Pr_{(S,(x,y)) \sim \mcD^{m+1}}\left[\mu^k_S(x) \not\owns y \right] \leq \frac{240k^4 \dnat \log(k')}{m},
    \end{align*}
    where $\mu^k_S = \mcA_{\mcH,\mu^{k'}}(S)$.
\end{proposition}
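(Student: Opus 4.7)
The plan is to mirror the proof of Claim 6.2, but to replace the elementary $k$-DS-based outdegree bound of Lemma 3.9 with the much sharper $k$-Natarajan-based bound of Theorem 4.1. The two crucial observations that make this go through are: (i) the restricted class $\mcH'$ built in Step~1 of Algorithm~6.5 has every coordinate constrained to a set of size at most $k'$ (namely $\mu^{k'}(x_j)$), so via the remark after Theorem 4.1 it may be viewed as a subset of $[k']^{m+1}$; and (ii) $\dnat(\mcH') \le \dnat(\mcH) = \dnat$, since restriction and taking subsets can only decrease the $k$-Natarajan dimension. Hence Theorem 4.1 gives a $k$-list orientation of $\mcG(\mcH')$ with maximum $k$-outdegree at most $240 k^4 \dnat \log(k')$, and this is the object the algorithm constructs in Step~2.

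To start, I would apply the standard leave-one-out symmetrization identity (used already in the proof of Proposition 6.7):
\[
\Pr_{(S,(x,y)) \sim \mcD^{m+1}}\!\left[\mu^k_S(x) \not\owns y\right] \;=\; \E_{S' \sim \mcD^{m+1}}\E_{i \sim \mathrm{Unif}([m+1])}\!\left[\mathds{1}\!\left[\mu^k_{S'_{-i}}(x_i) \not\owns y_i\right]\right].
\]
Thus it suffices to show that, for every fixed sample $S' = ((x_1,y_1),\ldots,(x_{m+1},y_{m+1}))$ realizable by both $\mcH$ and $\mu^{k'}$, the number of leave-one-out indices $i$ on which $\mcA_{\mcH,\mu^{k'}}$ errs is at most $240 k^4 \dnat \log(k')$.

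Next, I would fix such a sample and let $\mcH' \subseteq \mcY^{m+1}$ be the class defined in Step~1 of Algorithm~6.5. By the two observations above, Theorem 4.1 produces a $k$-list orientation $\sigmak$ of $\mcG(\mcH')$ with $\outdeg(\sigmak) \le 240 k^4 \dnat \log(k')$; I would take $\sigmak$ to be the canonical (deterministically tie-broken) orientation used by the algorithm in Step~2 for every leave-one-out call, so that $\sigmak$ depends only on the unordered point set $\{x_1,\ldots,x_{m+1}\}$ and on $\mu^{k'}$, not on which index is being held out. The label vector $y = (y_1,\ldots,y_{m+1})$ is a vertex of $\mcG(\mcH')$ by realizability, and by the definition of the algorithm, for each $i$ one has $y_i \in \mu^k_{S'_{-i}}(x_i) \iff y \in \sigmak(e_i)$, where $e_i$ is the edge adjacent to $y$ in direction $i$.

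Summing the indicators over $i$ and using the outdegree bound gives
\[
\sum_{i=1}^{m+1} \mathds{1}\!\left[\mu^k_{S'_{-i}}(x_i) \not\owns y_i\right] \;=\; \outdeg(y;\sigmak) \;\le\; 240 k^4 \dnat \log(k'),
\]
and dividing by $m+1 \ge m$ finishes the argument. The only subtle point in the whole plan is the canonical choice of orientation needed so that the same $\sigmak$ is used across all $m+1$ leave-one-out invocations; this is handled exactly as in the proof of Claim 6.2. I expect no other technical obstacles, since the heavy lifting (shifting, the list Sauer lemma, the exponential-to-Natarajan reduction) is already packaged inside Theorem 4.1.
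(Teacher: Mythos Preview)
Your proposal is correct and follows essentially the same route as the paper's proof: leave-one-out symmetrization, then for a fixed realizable sample use that $\mcH'$ is the same across all held-out indices, lives in a product of sets of size at most $k'$, has $k$-Natarajan dimension at most $\dnat$, and hence (via the outdegree bound in terms of $\dnat$ and $\log k'$) the orientation chosen in Step~2 has maximum $k$-outdegree at most $240k^4\dnat\log(k')$; summing the leave-one-out errors gives exactly this outdegree at the vertex $y$. Your explicit mention of why $\mcH'$ can be regarded as a subclass of $[k']^{m+1}$ (so that the $\log p$ in the theorem becomes $\log k'$) and of the need for a canonical orientation across the $m+1$ invocations are precisely the two points the paper uses, so there is no meaningful difference in approach.
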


\begin{proof}
    Let $\mcD$ be a distribution that is realizable by $\mcH$ and $\mu^{k'}$. By the leave-one-out symmetrization argument,
    \begin{align*}
        \Pr_{(S, (x,y))\sim \mcD^{m+1} }\left[\mu^k_{S}(x) \not\owns y \right] 
        = \Pr_{(S',i) \sim \mcD^{m+1} \times \mathrm{Unif}(m+1)}\left[ \mu^k_{S'_{-i}}(x'_i) \not\owns y'_i\right],
    \end{align*}
    where $\mu^k_S = \mcA_{\mcH,\mu^{k'}}(S)$. It therefore suffices to show that for every sample $S'$ that is realizable by $\mcH$ and $\mu^{k'}$,
    \begin{equation}
        \Pr_{i \sim \mathrm{Unif}(m+1)}\left[\mu^k_{S'_{-i}}(x'_i) \not\owns y'_i \right] \le \frac{240k^4\dnat\log(k')}{m}.
    \end{equation}
    Fix $S'$ that is realizable by $\mcH$ and $\mu^{k'}$ for the rest of the proof. The class {$\mcH' = \mcH|_{(x'_1,\ldots,x'_{m+1})}$} constructed by the algorithm $\mcA_{\mcH,\mu^{k'}}$ for $S'_{-i}$ and $x'_i$ is the same for all values of~$i$, and is realizable by $\mu^{k'}$. The $k$-Natarajan dimension of $\mcH'$ is at most that of $\mcH$. Denote by $\sigmak$ the orientation of $\mcG(\mcH')$ that the algorithm chooses. \Cref{thm:outdeg-bound-in-terms-of-dnat} tells us that the maximum $k$-outdegree of $\sigmak$ is at most $240k^4\dnat\log(k'))$. Let~$y'$ denote the vertex in $\mcG(\mcH')$ defined by $y'=(y'_1,\ldots,y'_{m+1})$, and let $e_i$ denote the edge in the $i^{\text{th}}$ direction adjacent to $y'$. Then, we have that
    \begin{align*}
        \Pr_{i \sim \mathrm{Unif}(m+1)}\left[\mu^k_{S'_{-i}}(x'_i)  \not\owns y'_i \right] &= \frac{1}{m+1}\sum_{i=1}^{m+1}\mathds{1}\left[\mu^k_{S'_{-i}}(x'_i)  \not\owns y'_i \right] \\
        &= \frac{1}{m+1}\sum_{i=1}^{m+1}\mathds{1}\left[\sigmak(e_i) \not\owns y' \right] \\
        &= \frac{\outdeg(y';\sigmak)}{m+1} \\
        &\le \frac{240k^4\dnat\log(k')}{m+1} \le \frac{240k^4\dnat\log(k')}{m}.
    \end{align*}
\end{proof}

We now have all the necessary ingredients to describe a list sample compression scheme for hypothesis classes with finite $k$-DS dimension.

\subsection{Sample compression scheme}
\label{sec:sample-compression}

We first state some definitions:

\begin{definition}[List sample compression scheme \cite{brukhim2022characterization}]
    \label{def:list-sample-compression-scheme}
    An $m\to r$ list sample compression scheme $(\kappa, \rho)$ with list size~$k$ consists of {a \emph{reconstruction function} 
    \[\rho:(\mcX\times\mcY)^r\to \{Y \subseteq \mcY : |Y| \leq k\}^\mcX\] such} that for every $\mcH$-realizable $S\in (\mcX \times \mcY)^m$, there exists $S'\in (\mcX\times \mcY)^r$ whose elements appear in $S$ (so that $\kappa(S)=S'$) such that for every $(x,y)$ in $S$ we have $y \in \mu^k(x)$, where $\mu^k=\rho(S')$.
\end{definition}

\begin{definition}[List sample compression scheme given list realizability]
    \label{def:list-sample-compression-scheme-given-list-realizability}
    An $m\to r$ list sample compression scheme $(\kappa, \rho)$ with list size $k$ for a class $\mcH$ given list realizability with respect to a list $\mu^{k'}$ ($k' \gg k$) consists of a \emph{reconstruction function} \[\rho:(\mcX\times\mcY)^r\to \{Y \subseteq \mcY : |Y| \leq k\}^\mcX\] such that for every $S\in (\mcX\times \mcY)^m$ that is realizable by both $\mcH$ and $\mu^{k'}$, there exists $S'\in (\mcX\times \mcY)^r$ whose elements appear in $S$ (so that $\kappa(S)=S'$) such that for every $(x,y)$ in $S$ we have
    $y \in \mu^k$, where $\mu^k=\rho(S')$.
\end{definition}

The following two lemmas state that 1) every class with finite $k$-DS dimension is list compressible for some \textit{large} list size $k'$, and 2) every class with finite $k$-Natarajan dimension is list compressible with list size $k$ given list-realizability with respect to some list of size $k' \gg k$. Recall that our ultimate goal is to show that a hypothesis class with finite $k$-DS dimension is list compressible with list size $k$. Step 1) above achieves a weaker version of this goal --- it says that we can compress the class to a list size $k' \gg k$. But in combination with step 2), we finally achieve our goal.\footnote{It would have been ideal if we could \textit{directly} achieve 1) for list size $k$, but our two-step analysis (which mimics that of \cite{brukhim2022characterization}) only allows us to first get the list size down to some large (but bounded) $k' \gg k$, and then to $k$.}

\begin{lemma}
    \label{lem:k'-list-compressible}
    Let $\mcH \subseteq \mcY^\mcX$ be a class with $k$-DS dimension $\dds=\dds(\mcH) < \infty$. For every integers $m,t>0$, there exists an $m\to r_1$ list sample compression scheme $(\kappa_1, \rho_1)$ for $\mcH$ with list size $k'$, where 
    \begin{align*}
    r_1 \leq \frac{\dds+t+1}{t+1} (\dds+t) \log(2m),
    \end{align*}
    and
    \begin{align*}
    k' \leq  k \binom{\dds + t +1}{t+1} \log(2m).
    \end{align*}
\end{lemma}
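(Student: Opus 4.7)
The plan is to build the compression scheme via a greedy covering argument driven by the weak list PAC learner $\mcL_{\mcH,t}$ from \Cref{algo:weak-list-learner}. Given an $\mcH$-realizable sample $S$ of size $m$, I will iteratively peel off a large fraction of the currently uncovered examples by running $\mcL_{\mcH,t}$ on a well-chosen sub-sequence of $S$, and terminate once every example is covered by at least one weak hypothesis produced along the way.

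The key observation is that the success-probability guarantee of \Cref{prop:weak-list-learner} is distribution-free: for every $\mcH$-realizable distribution $\mcD$, running $\mcL_{\mcH,t}$ on an i.i.d. sample of size $\dds + t$ produces a $k_w$-list hypothesis (with $k_w = k\binom{\dds+t}{t}$) that covers a fresh $(x,y)\sim \mcD$ with probability at least $\alpha := \frac{t+1}{\dds+t+1}$. I apply this to $\mcD_i :=$ uniform distribution on the current uncovered set $T_i \subseteq S$ (which is $\mcH$-realizable because $S$ is). By averaging, some deterministic sequence $S_i' \in T_i^{\dds+t}$ makes $\mcL_{\mcH,t}(S_i')$ correctly list at least $\alpha\lvert T_i\rvert$ of the points in $T_i$; remove those covered points to obtain $T_{i+1}$.

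Since $\lvert T_{i+1}\rvert \le (1-\alpha)\lvert T_i\rvert$, after $N := \lceil \alpha^{-1}\log(2m)\rceil = \lceil \tfrac{\dds+t+1}{t+1}\log(2m)\rceil$ iterations we have $\lvert T_N\rvert < 1$, hence $T_N = \emptyset$. The compression $\kappa_1(S) := (S_1',\ldots,S_N')$ has total length $N(\dds+t) \le \tfrac{\dds+t+1}{t+1}(\dds+t)\log(2m) = r_1$. The reconstruction $\rho_1$ runs $\mcL_{\mcH,t}$ on each $S_i'$ and returns the union $\mu^{k'}(x) := \bigcup_{i=1}^N \mu^{k_w}_{S_i'}(x)$, whose list size is at most
\[
Nk_w \;\le\; k\cdot\tfrac{\dds+t+1}{t+1}\binom{\dds+t}{t}\log(2m) \;=\; k\binom{\dds+t+1}{t+1}\log(2m),
\]
using the identity $\tfrac{\dds+t+1}{t+1}\binom{\dds+t}{t} = \binom{\dds+t+1}{t+1}$. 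By construction, each $(x,y)\in S$ is covered in the iteration during which it is removed from $T_i$, so $y\in\mu^{k'}(x)$ as required by \Cref{def:list-sample-compression-scheme}.

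The only delicate step is the probabilistic-method extraction of a deterministic sub-sequence $S_i'$: the expectation bound over $\mcD_i^{\dds+t}$ only guarantees the existence of some sequence in $T_i^{\dds+t}$ (possibly with repeats) that meets the coverage target, and such a sequence is a valid multi-subsequence of $S$ in the sense of \Cref{def:list-sample-compression-scheme}. The main obstacle is bookkeeping rather than combinatorial content: ensuring that the product of the number of boosting rounds $N$, the per-round compression budget $\dds+t$, and the per-round weak-learner list size $k_w$ telescopes cleanly into the two claimed bounds. No new combinatorial ingredient beyond \Cref{prop:weak-list-learner} (and hence beyond finiteness of $\dds$) is needed.
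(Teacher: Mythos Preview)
Your proposal is correct and follows essentially the same approach as the paper: iteratively apply \Cref{prop:weak-list-learner} to the uniform distribution on the currently uncovered subsample, use averaging to extract a deterministic $(\dds+t)$-tuple whose weak list covers an $\alpha$-fraction, and repeat for $\alpha^{-1}\log(2m)$ rounds; the compression is the concatenation of these tuples and the reconstruction is the union of the weak lists. The bookkeeping (including the binomial identity $\tfrac{\dds+t+1}{t+1}\binom{\dds+t}{t}=\binom{\dds+t+1}{t+1}$) matches the paper's, and both treatments are equally informal about the ceiling on the number of rounds.
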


\begin{proof}
    We first describe the reconstruction function $\rho_1$. Let $l=\frac{\dds+t+1}{t+1}\log(2m)$. Given an $\mcH$-realizable sample $S'$ of size $r_1 = (\dds+t)l$, partition it into $l$ contiguous subsequences $S'_1,\dots,S'_l$ each having size $\dds+t$. Define $\mu^{k'}=\rho_1(S')$ as
    \begin{align*}
        \mu^{k'}(x) = \bigcup_{j=1}^l \mu_j(x),
    \end{align*}
    where $\mu_j=\mcL_{\mcH, t}(S'_j)$ is the size $k\binom{\dds+t}{t}$ size list hypothesis output by \Cref{algo:weak-list-learner} on input sample $S'_j$. This results in $\mu^{k'}$ having size $k'=lk\binom{\dds+t}{t}\le k\binom{\dds + t +1}{t+1} \log(2m)$ as required.

    We now show the existence of $S'$, for which we will use the probabilistic method. Let $\mcD=\mathrm{Unif}(S)$ denote the uniform distribution over the $m$ examples in $S$, and let $\alpha=\frac{t+1}{\dds+t+1}$. Let $S'_1$ be a draw of $\dds+t$ samples from $\mcD$ and let $\mu_1=\mcL_{\mcH, t}(S'_1)$. \Cref{prop:weak-list-learner} with this choice of $\mcD$ implies that
    \begin{align*}
        \Pr_{(S'_1,(x,y)) \sim \mcD^{\dds+t+1}}[y \in \mu_1(x)] &= \E_{S'_1 \sim \mcD^{\dds+t}}\E_{(x,y)\sim\mcD}[[\mathds{1}[y \in \mu_1(x)]] \\
        &= \E_{S'_1 \sim \mcD^{\dds+t}} \left[ \frac{1}{m}\sum_{i=1}^m \mathds{1}[y_i \in \mu_1(x_i)]\right] \\
        &\ge \alpha.
    \end{align*}
    In particular, this means that there exists $S'_1$ which satisfies $\sum_{i=1}^m\mathds{1}[y_i \in \mu_1(x_i)] \ge \alpha m$. Remove from $S$ all the $\ge \alpha m$ examples $(x_i, y_i)$ that satisfy $y_i \in \mu_1(x_i)$, and repeat the same reasoning as above on the remaining sample. In this way, at each step $j$, we find a sample $S'_j$ and list $\mu_j=\mcL_{\mcH, t}(S'_j)$ that covers at least an $\alpha$-fraction of the remaining examples in $S$. After $l$ steps, all the examples in $S$ are covered because $(1-\alpha)^lm \le \frac{1}{2m} < 1$. Setting $S'$ to be the concatenation of $S'_1,\dots,S'_l$ completes the proof.
\end{proof}

\begin{lemma}
    \label{lem:k-list-compressible-given-list-realizability}
    Let $\mcH \subseteq \mcY^\mcX$ be a class with $k$-Natarajan dimension $\dnat<\infty$ and let $\mu^{k'}$ be a $k'$-list. For every integer $m>0$, there exists an $m\to r_2$ list sample compression scheme $(\kappa_2,\rho_2)$ with list size $k$ for $\mcH$ given list realizability with respect to $\mu^{k'}$ that satisfies 
    \begin{align*}
    r_2 \leq  11520k^6 \dnat \log(k')\log (2m).
    \end{align*}
\end{lemma}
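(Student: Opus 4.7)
The plan is to adapt the iterative probabilistic-method construction used in the proof of \Cref{lem:k'-list-compressible}, but substitute \Cref{algo:list-learning-for-list-realizable-distributions} for \Cref{algo:weak-list-learner} as the base weak learner. By \Cref{prop:list-learning-for-list-realizable-distributions}, the output of \Cref{algo:list-learning-for-list-realizable-distributions} is already a $k$-list (rather than the much larger list produced by \Cref{algo:weak-list-learner}), precisely because it exploits the list-realizability assumption with respect to $\mu^{k'}$. The principal new difficulty, absent from \Cref{lem:k'-list-compressible}, is that naively taking the union of the $k$-lists produced across the $\Theta(\log m)$ boosting rounds would yield a combined list of size $\Theta(k\log m)$ --- larger than our target list size $k$. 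We must therefore aggregate the per-round outputs into a single $k$-list in a non-trivial way.

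Setting the per-round sample size to $r = \Theta(k^6 \dnat \log(k'))$ and the number of rounds to $l = \Theta(\log(2m))$ gives a total compressed size of $r \cdot l \le 11520\,k^6\dnat \log(k') \log(2m)$, matching the desired bound on $r_2$. For any distribution $\mcD$ over $(\mcX\times\mcY)$ realizable by both $\mcH$ and $\mu^{k'}$, a random sample $S'_j \sim \mcD^r$ yields, by \Cref{prop:list-learning-for-list-realizable-distributions}, expected error at most $O(1/k^2)$ for the $k$-list $\mu^k_j = \mcA_{\mcH,\mu^{k'}}(S'_j)$ under $\mcD$. Starting from $\mcD_1 = \mathrm{Unif}(S)$, we iteratively select a specific $S'_j$ at each round $j$ via the probabilistic method, and update $\mcD_{j+1}$ via an AdaBoost-style multiplicative reweighting that upweights examples $(x,y) \in S$ whose cumulative coverage $|\{i \le j : y \in \mu^k_i(x)\}|$ has been low.

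After all $l$ rounds, the reconstruction function $\rho_2$ takes $S' = S'_1 \cup \cdots \cup S'_l$ (of size $r_2$), splits it into the $l$ chunks, recomputes each $k$-list $\mu^k_j = \mcA_{\mcH,\mu^{k'}}(S'_j)$, and for each $x$ outputs the top-$k$ labels ranked by $\mathrm{vote}(x,y) := |\{j : y \in \mu^k_j(x)\}|$. This output is automatically of list size at most $k$. The AdaBoost potential analysis ensures that for every $(x,y) \in S$, the correct label $y$ appears in $\mu^k_j(x)$ for strictly more than a $k/(k+1)$ fraction of the $l$ rounds. On the other hand, since each $|\mu^k_j(x)| \le k$, the total vote mass satisfies $\sum_{y'} \mathrm{vote}(x,y') \le k l$; a pigeonhole count on $\sum_{y' \ne y}\mathrm{vote}(x,y') \le kl - \mathrm{vote}(x,y)$ then implies that at most $k-1$ labels $y' \ne y$ can out-vote $y$, placing $y$ in the top-$k$ and thereby covering $(x,y)$.

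The main obstacle will be setting up the AdaBoost-style potential calculation that promotes the per-round \emph{average} error bound from \Cref{prop:list-learning-for-list-realizable-distributions} into a \emph{pointwise uniform} coverage guarantee holding simultaneously for every $(x,y) \in S$. The $k^6$ factor in $r_2$ is explained by this requirement: the strict top-$k$ threshold (as opposed to an ordinary majority) forces the per-round error to be $O(1/k^2)$ uniformly over weight distributions, which in turn inflates the per-round sample size by an extra factor of $k^2$ over the baseline dictated by \Cref{prop:list-learning-for-list-realizable-distributions} at constant target error.
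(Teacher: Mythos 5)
Your high-level plan (boost with $\mcA_{\mcH,\mu^{k'}}$ as the base learner, aggregate by top-$k$ vote, close with the $kl$-vote-mass pigeonhole argument) is aligned with the paper, and your pigeonhole step at the end is exactly the paper's. But there are two substantive issues.

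First, your parameterization does not work. You allocate $n = \Theta(k^6\dnat\log k')$ samples per round (so per-round error $O(1/k^2)$ by \Cref{prop:list-learning-for-list-realizable-distributions}) and only $l = \Theta(\log(2m))$ rounds. With $l = O(\log m)$ rounds you cannot make the miss fraction drop below $\tfrac{1}{k+1}$ \emph{simultaneously for every example}: whether you use a Chernoff/union bound or a multiplicative-weights / boosting regret bound, there is an additive error term of order $\log m/l$ (in regret form, or $\log(1/\delta)/l$ in Bernstein form). With $l = O(\log m)$ this additive term is $\Theta(1)$, swamping the required $1/(k+1)$ threshold, no matter how small the per-round error is. The paper instead uses $n = 960 k^5 \dnat\log k'$ (per-round error $1/(4k)$) and $l = 12k\log(2m)$ rounds, and it is the extra factor of $k$ in the number of rounds that makes the concentration go through. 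Your product $nl$ matches the bound, but you placed that factor of $k$ in the wrong place.

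Second, the mechanism you invoke ---``the AdaBoost potential analysis ensures [every example has coverage fraction strictly above $k/(k+1)$]'' --- is asserted without calculation, and a back-of-the-envelope check suggests it does not follow from the standard margin bound. You need margin $\theta \approx 1 - 2/(k+1)$, i.e.\ margin close to $1$. The standard AdaBoost margin bound $\Pr[\mathrm{margin}\le\theta]\le \prod_t 2\sqrt{\epsilon_t^{1-\theta}(1-\epsilon_t)^{1+\theta}}$ only decays to zero when $4\epsilon^{1-\theta} < 1$, which for $1-\theta = 2/(k+1)$ forces $\epsilon \lesssim 4^{-(k+1)/2}$ --- exponentially small per-round error, requiring exponentially many per-round samples. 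The paper avoids this entirely: it applies von Neumann's minimax theorem to the zero-sum game between examples in $S$ and size-$n$ subsequences, getting a single mixed strategy $\mcP$ with pointwise miss probability $\le \tfrac{1}{2(k+1)}$ for every example, and then draws $l = 12k\log(2m)$ i.i.d.\ samples from $\mcP$ and applies a Chernoff bound plus a union bound over the $m$ examples. This sidesteps margin analysis altogether. If you want to keep the adaptive reweighting flavor, it can be made to work via a small-loss multiplicative-weights regret bound, but you would still need $l = \Omega(k\log m)$ rounds, and you would need to write out the regret calculation rather than appeal to ``the AdaBoost potential analysis.''
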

\begin{proof}
    We first describe the reconstruction function $\rho_2$. Let $l=12k\log(2m)$, $n= 960k^5\dnat\log(k')$. Given a sequence $S'$ of $r_2=nl$ examples that are realizable by both $\mcH$ and $\mu^{k'}$, partition it into $l$ contiguous subsequences $S_1',\ldots,S'_l$ each of size $n$. Define $\mu^k(x)=\rho_2(S')$ as
    \begin{equation}
        \label{eqn:top-k-rule}
        \mu^{k}(x) = \topk(\mu^k_1(x),\ldots,\mu^k_l(x)),
    \end{equation}
    where $\mu^k_j=\mcA_{\mcH, \mu^{k'}}(S'_j)$ is the $k$-list output by \Cref{algo:list-learning-for-list-realizable-distributions} on input sample $S'_j$, and $\topk(y_1,\ldots,y_l)$ is a list of the $k$ most frequently occurring labels among the $l$ lists (ties broken arbitrarily).

    We now show the existence of $S'$ for every sample $S$ realizable by $\mcH$ and $\mu^{k'}$, again by using the probabilistic method. We will instantiate von Neumann's minimax theorem \cite{v1928theorie} for this.

    We first claim that there exists a distribution $\mcP$ over sequences $T$ of size $n$ with elements from $S$ such that for every fixed example $(x,y) \in S$,
    \begin{equation}
        \label{eqn:mixed-strategy-minnie}
        \Pr_{T \sim \mcP}[y \notin \mu^{k}_T(x)] \le \frac{1}{2(k+1)},
    \end{equation}
    where $\mu_T = \mcA_{\mcH, \mu^{k'}}(T)$. Consider a zero-sum game between two players Max and Minnie, where Max's pure strategies are examples $(x,y)\in S$, and Minnie's pure strategies are sequences $T \in S^n$. The cost matrix $L$ is defined by $L_{(x,y),T}=\mathds{1}[y \notin \mu^k_T(x)]$. Let $\mcQ$ be any mixed strategy by Max i.e., $\mcQ$ is a distribution over the $m$ examples in $S$. The distribution $\mcQ$ is realizable by both $\mcH$ and $\mu^{k'}$. \Cref{prop:list-learning-for-list-realizable-distributions} with $\mcD = \mcQ$ implies that
    \begin{align*}
        \Pr_{(T, (x,y))\sim\mcQ^{n+1}}[y \notin \mu^k_T(x)] &\le \frac{240k^4\dnat\log(k')}{n} \\
        &=\frac{240k^4\dnat\log(k')}{960k^5\dnat\log(k')} = \frac{1}{4k} \le \frac{1}{2(k+1)}.
    \end{align*}
    Observe that the LHS above is the evaluation of the cost corresponding to mixed strategies $\mcQ$ and $\mcQ^n$ by Max and Minnie respectively. In other words, for every mixed strategy $\mcQ$ by Max, there exists a mixed strategy by Minnie (namely that of playing $\mcQ^n$) for which the cost is at most $\frac{1}{2(k+1)}$. By von Neumann's minmax theorem, this implies that there exists a mixed strategy by Minnie such that for every mixed (and in particular, pure) strategy by Max, the cost is at most $\frac{1}{2(k+1)}$. This mixed strategy is the required $\mcP$ that satisfies \cref{eqn:mixed-strategy-minnie}. 

    We can now argue the existence of the required $S'$. Let $S'_1,\ldots,S'_l$ be i.i.d. samples of size $n$ from $\mcP$. Using \cref{eqn:mixed-strategy-minnie} and a Chernoff bound, for every fixed $(x,y) \in S$, we have that
    \begin{align*}
        \Pr_{(S'_1,\ldots,S'_l) \sim \mcP^l}\left[\frac{1}{l}\sum_{j=1}^l \mathds{1}[y \notin \mu^k_{S'_j}(x)] \ge \frac{1}{k+1}\right] \le \exp\left(-\frac{l}{6(k+1)}\right) \le \exp\left(-\frac{l}{12k}\right) < \frac{1}{m}.
    \end{align*}
    By a union bound over the $m$ examples in $S$, we get that with positive probability, it holds that $\frac{1}{l}\sum_{j=1}^l \mathds{1}[y \in \mu^k_{S'_j}(x) > 1-\frac{1}{k+1}$ for all $(x,y)\in S$ simultaneously. In particular, there exist $S'_1,\ldots,S'_l$ for which this guarantee holds. But this means that for every $(x,y) \in S$, $y$ appears in the $k$ most frequently occurring labels amongst the $l$ lists for that $x$. This is because the guarantee implies that $y$ occurs in strictly more than $\frac{kl}{k+1}$ lists. Assume for the sake of contradiction that $y$ is not among the $k$ most frequently occurring labels. Then there must exist $k$ labels other than $y$ each occurring in strictly more than $\frac{kl}{k+1}$ lists. But then the amount of room remaining for $y$ in the concatenation of the lists is strictly less than $kl - k\left(\frac{kl}{k+1}\right) = \frac{kl}{k+1}$, which we know is not true. Hence, $y$ belongs to the list of $k$ most frequently occurring labels. Thus, we have shown that there exist $S'_1,\ldots,S'_l$ such that $y\in \mu^k(x)$ for all $(x,y) \in S$, where $\mu_k$ is the $\topk$ rule defined in \cref{eqn:top-k-rule}. Setting $S'$ to the concatenation of $S'_1,\ldots, S'_l$ completes the proof.
\end{proof}

\Cref{lem:k'-list-compressible} and \Cref{lem:k-list-compressible-given-list-realizability} equip us to derive the following theorem.

\begin{theorem}[Finite $k$-DS dimension implies $k$-list compressibility]
    \label{thm:finite-dds-implies-compressibility}
    Let $\mcH \subseteq \mcY^\mcX$ be a hypothesis class with $k$-DS dimension $\dds=\dds(\mcH) < \infty$ and $k$-Natarajan dimension $\dnat=\dnat(\mcH)$. For every integers $m,t>0$, there exists an $m \to r$ list sample compression scheme $(\kappa, \rho)$ for $\mcH$ with list size $k$ satisfying
    \begin{equation}
    \label{eqn:final-compression-size}
    r \le \left(\frac{\dds+t+1}{t+1}(\dds+t) + 11520k^6\dnat\log\left(k\binom{\dds+t+1}{t+1}\log(2m)\right)\right)\log(2m).
    \end{equation}
\end{theorem}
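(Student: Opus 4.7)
The plan is to prove \Cref{thm:finite-dds-implies-compressibility} by composing the two compression schemes already established, namely \Cref{lem:k'-list-compressible} and \Cref{lem:k-list-compressible-given-list-realizability}. Since $\dnat \le \dds < \infty$ by \Cref{obs:dnat_le_dds}, both lemmas are applicable. Given an $\mcH$-realizable sample $S$ of size $m$, I would first apply \Cref{lem:k'-list-compressible} to produce a subsample $S_1' \subseteq S$ of size at most $r_1 = \frac{\dds+t+1}{t+1}(\dds+t)\log(2m)$ together with a reconstruction function $\rho_1$ such that $\mu^{k'} := \rho_1(S_1')$ is a list of size $k' \le k\binom{\dds+t+1}{t+1}\log(2m)$ that realizes every example in $S$.

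Now $S$ is realizable by both $\mcH$ and $\mu^{k'}$, so I can apply \Cref{lem:k-list-compressible-given-list-realizability} (parameterized with this list $\mu^{k'}$) to $S$, yielding a further subsample $S_2' \subseteq S$ of size at most $r_2 = 11520 k^6 \dnat \log(k') \log(2m)$ and a reconstruction function $\rho_2$ (which may depend on $\mu^{k'}$) such that $\rho_2(S_2')$ produces a $k$-list hypothesis $\mu^k$ under which every $(x,y)\in S$ satisfies $y \in \mu^k(x)$.

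The composed compression scheme $(\kappa,\rho)$ will set $\kappa(S) = S' := S_1' \circ S_2'$, where $\circ$ denotes concatenation (with a delimiter indicating the split point). The reconstruction $\rho$ on input $S'$ first recovers $\mu^{k'}$ from $S_1'$ by running $\rho_1$, then uses $\mu^{k'}$ to invoke the scheme of \Cref{lem:k-list-compressible-given-list-realizability}, applying $\rho_2$ to $S_2'$ to produce the final $k$-list $\mu^k$. By construction, every $(x,y)\in S$ satisfies $y\in \mu^k(x)$. The total length is $r = r_1 + r_2$, which substituting the bound on $k'$ into $\log(k')$ gives exactly the bound in \cref{eqn:final-compression-size}.

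The only subtle point, and the step I would double-check carefully, is that the reconstruction function $\rho_2$ of \Cref{lem:k-list-compressible-given-list-realizability} implicitly depends on the list $\mu^{k'}$ used to instantiate it; in the composed scheme this dependence is fine because $\mu^{k'}$ is itself reconstructed deterministically from $S_1'$, so $\rho$ is a well-defined function of $S'$ alone. Beyond this bookkeeping, there is no real obstacle: the proof is essentially a straightforward two-stage pipeline where the first stage reduces the label ambiguity from $|\mcY|$ down to $k'$, and the second stage, leveraging list realizability, reduces it from $k'$ down to the target $k$.
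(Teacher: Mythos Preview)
Your proposal is correct and follows essentially the same two-stage composition as the paper: first apply \Cref{lem:k'-list-compressible} to obtain a $k'$-list $\mu^{k'}$ from a subsample of size $r_1$, then apply \Cref{lem:k-list-compressible-given-list-realizability} (parameterized by $\mu^{k'}$) to obtain the final $k$-list from a further subsample of size $r_2$, giving total compression size $r_1+r_2$. Your treatment is in fact slightly more careful than the paper's, since you explicitly address how $\rho_2$'s dependence on $\mu^{k'}$ is absorbed into a well-defined overall reconstruction function of the concatenated subsample.
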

\begin{proof}
    Let $S$ be an $\mcH$-realizable sample of size $m$. \Cref{lem:k'-list-compressible} tells us that there is a reconstruction $\rho_1$ that produces $k'$-lists, and a sequence $S'$ of $r_1$ examples from $S$ such that $S$ is realizable with respect to $\mu^{k'}=\rho_1(S)$. We feed this $\mu^{k'}$ into \Cref{lem:k-list-compressible-given-list-realizability}. The guarantee in \Cref{lem:k-list-compressible-given-list-realizability} applied to $\mcH$ and $\mu^{k'}$ implies that there is a reconstruction $\rho_2$, and a sequence $S''$ of $r_2$ examples from $S$ such that $\mu^k=\rho_2(S'')$ realizes the entire sample $S$. The composition of the two schemes gives us the required $m \to r_1 + r_2$ $k$-list compression scheme for $\mcH$, with $\rho_2$ as the overall reconstruction function.
\end{proof}

\begin{remark}
    \label{rem:asymptotics-sample-compression-params}
    We want to think of the parameters in \cref{eqn:final-compression-size} in the regime where $m \to \infty$, and $\dds, t, k$ are constants. In this regime, we have that
    \begin{align*}
        r = \widetilde{O}\left(\frac{(\dds+t)^2}{t} + tk^6\dds\right),
    \end{align*}
    where the $\widetilde{O}(\cdot)$ is hiding $\polylog(m,\dds,k)$ factors. Thus, in this regime, setting $t = \sqrt{\dds}$ gives us
    \begin{equation}
        \label{eqn:final-compression-size-asymptotics}
        r = \widetilde{O}(k^6(\dds)^{1.5}).
    \end{equation}
\end{remark}

\subsection{List compression implies list learnability}
\label{sec:compression-implies-learnability}

\cite{david2016statistical} establish an equivalence between PAC learnability (realizable and agnostic) and sample compression. By simply interpreting the zero-one loss as $\mathds{1}[y \neq \mu^k(x)]$ instead of $\mathds{1}[y \neq h(x)]$, the proofs for their results translate almost identically to the list learning setting, establishing an equivalence between list PAC learnibility and list sample compression. For completeness, we provide slightly more elementary versions of their results in the list learning setting in \Cref{sec:compression-implies-learnability-appendix}.

In particular, since \Cref{thm:finite-dds-implies-compressibility} guarantees the existence of a list compression scheme $(\kappa, \rho)$ for any $\mcH$ having finite $k$-DS dimension, \Cref{cor:list-compression-implies-learnability-realizable} in \Cref{sec:list-compression-implies-learnability-realizable} then implies that for $\mcA^{real}=(\kappa, \rho)$, for any distribution $\mcD$ realizable by $\mcH$,
\begin{align*}
    L_{\mcD}(\mcA^{real}(S)) \le O\left(\frac{r\log(m) + \log(1/\delta)}{m}\right).
\end{align*}
Additionally, \Cref{thm:list-compression-implies-learnability-agnostic} in \Cref{sec:list-compression-implies-learnability-agnostic} also implies the existence of a list learning rule $\mcA^{agn}$ satisfying
\begin{align*}
    L_{\mcD}(\mcA^{agn}(S)) \le \inf_{h \in \mcH}L_{\mcD}(h) + O\left(\sqrt{\frac{r\log(m)+\log(1/\delta)}{m}}\right).
\end{align*}
Substituting $r$ from \cref{eqn:final-compression-size-asymptotics} above completes the proof of \Cref{thm:sufficiency}.

\section*{Acknowledgements}
The authors are supported by Moses Charikar's Simons Investigator Award. We thank Amit Daniely for bringing to our notice their prior application of the list-variant of Sauer's lemma in \cite{daniely2015inapproximability}.

\bibliographystyle{alpha}
\bibliography{references}

\newpage
\appendix
\section{List orientations for infinite graphs}
\label{sec:orientations-for-infinite-graphs-appendix}
We complete the proof of \Cref{lem:dds+1-outdeg-dds} in the case where $\mcH \subseteq \mcY^{d+1}$ has infinite size.

Let $\mcG(\mcH)=(V,E)$ be the one-inclusion graph of $\mcH$. Let $\mcZ$ be the set of pairs $\mcZ=\{(v,e) \in V \times E:v \in e\}$. For $z=(v,e) \in \mcZ$, define the discrete topology on the set $X_{z}=\{0,1\}$ with $\tau_z = 2^{X_z}=\{\emptyset, \{0\}, \{1\}, \{0,1\}\}$ as the open sets. By Tychonoff's theorem, $\mcK = \prod_{z \in \mcZ}X_{z} = \{0,1\}^{\mcZ}$ is a compact set with respect to the product topology.

We can think of any list orientation $\sigmak$ of $\mcG(\mcH)$ of size $k$ as an element $\kappa \in \mcK$. Concretely, if $v \in \sigmak(e)$, we set $\kappa_{(v,e)}=1$, and conversely, if $v \notin \sigmak(e)$, we set $\kappa_{(v,e)}=0$.

We will define some good sets. For every $v \in V$, let $A_v \subseteq \mcK$ be the set 
\begin{equation}
    \label{eqn:A_v}
    A_v=\{\kappa \in \mcK: \kappa_{(v,e)}=1 \text{ for at least one } e \ni v \}.
\end{equation}
Further, for every $v \in V$ and $j \in [d+1]$, let $e_j$ be the edge adjacent to $v$ in the $j^{\text{th}}$ direction. Let $B_{v,j} \subseteq \mcK$ be the set 
\begin{equation}
    \label{eqn:B_v_j}
    B_{v,j}=\{\kappa \in \mcK: \kappa_{(u,e_j)}=1 \text{ for at most } k \text{ vertices } u \in V\}.
\end{equation}
Now, the complement of the set $A_v$ is $\bar{A}_v=\{\kappa \in \mcK: \forall i \in [d+1], \kappa_{(v, e_i)}=0\}$, where $e_i$ is the edge adjacent to $v$ in the $i^{\text{th}}$ direction. This set is open in the product topology, because we can write it as $\prod_{z=(u,e):u \neq v}\{0,1\} \times \prod_{z=(v,e_i):i \in [d+1]}\{0\}$ (i.e., it is a Cartesian product of open sets in the underlying topologies where only a finite number are not equal to all of $X_z=\{0,1\}$). Similarly, the complement of the set $B_{v,j}$ is 
\begin{align*}
     \bar{B}_{v,j}=\bigcup_{u_1\neq\dots\neq u_{k+1} \in V}\{\kappa \in \mcK: \kappa_{(u_i,e_j)}=1\; \forall i \in [k+1]\},
\end{align*}
which is again an open set in the product topology. To see this, fix some $u_1\neq\dots\neq u_{k+1} \in V$, and then we can again write $\{\kappa \in \mcK: \kappa_{(u_i,e_j)}=1\; \forall i \in [k+1]\}$ as the Cartesian product of open sets in the underlying topologies where only a finite number are not equal to all of $\{0,1\}$. This implies that each such set is open in the product, and since open sets are closed under arbitrary unions, the union over $u_i$s is an open set too. Thus, each $A_v$ and $B_{v,j}$ is a closed set, and hence the set $\Sigma_v = A_v \cap \bigcap_{j \in [d+1]}B_{v,j}$ is also closed, by virtue of being a finite intersection of closed sets.

We will now argue that for every finite $U \subset V$, the set $\bigcap_{v \in U}\Sigma_v$ is non-empty. For the finite hypergraph $\mcG_U$ that $\mcG(\mcH)$ induces on $U$, by the inductive proof in \Cref{lem:dds+1-outdeg-dds} above, there exists a list orientation $\sigmak$ with maximum $k$-outdegree at most $d$. We can interpret $\sigmak$ as defining an element $\kappa$ in $\bigcap_{v \in U}\Sigma_v$. Let us think of all the coordinates in $\kappa$: these correspond to edges in $\mcG(\mcH)$. Some of these edges have counterparts in $\mcG_U$ --- we set the coordinates in $\kappa$ corresponding to these edges to 0/1 precisely according to the list of vertices they have been oriented to in $\sigmak$. For all the edges that ``disappeared'' in $\mcG_U$, we set these coordinates to $0/1$ arbitrarily. We can check that this $\kappa$ is a valid member of $\bigcap_{v \in U}\Sigma_v$, as it satisfies the conditions that define the sets $A_v$ and $B_{v,j}$.

What we have just shown is that the collection $\{\Sigma_v\}_{v \in V}$ of closed sets in the product topology satisfies the \textit{finite intersection property}. Due to compactness, this means that $\bigcap_{v \in V}\Sigma_v \neq \emptyset$. Let $\kappa^* \in \bigcap_{v \in V}\Sigma_v$. We can now interpret $\kappa^*$ as a partial list orientation. The reason it might be partial is that some edges might not be oriented at all (i.e., $\kappa^*$ might be 0 at all the coordinates corresponding to these edges). But since $\kappa^*$ lies in the intersection of $\Sigma_v$ over all $v \in V$, by definition of these sets, surely it must be the case that every vertex is adjacent to at least one edge that is list-oriented towards it. Also, every edge (if oriented in $\kappa^*$ at all) is oriented towards at most $k$ vertices. Thus, the maximum outdegree of the partial orientation given by $\kappa^*$ is at most $d$. We can complete $\kappa^*$ to a full orientation by arbitrarily orienting all unoriented edges to a list of $\le k$ vertices they are adjacent to. This does not increase the out-degree of any vertex, and we are done.

\section{List compression implies list learnability}
\label{sec:compression-implies-learnability-appendix}

Let $\Theta$ denote the set of all list functions $\mu^k : \mcX \to \{Y \subseteq \mcY:|Y|\le k\}$. Define the $0-1$ loss function $l:\Theta \times (\mcX \times \mcY) \to \R_+$ as
\begin{equation}
    \label{eqn:0-1-loss}
    l(\mu^k, (x,y)) = \mathds{1}[y \notin \mu^k(x)].
\end{equation}
The \textit{risk} of a list function $\mu^k$ with respect to a distribution $\mcD$ over $\mcX \times \mcY$ is then defined as
\begin{equation}
    \label{eqn:risk}
    L_{\mcD}(\mu^k) = \E_{(x,y)\sim \mcD}[l(\mu^k, (x,y))].
\end{equation}
The empirical risk of a list function $\mu^k$ with respect to a sample $S = \{(x_1,y_1),\ldots,(x_m, y_m)\} \in (\mcX \times \mcY)^m$ is defined as
\begin{equation}
    \label{eqn:empirical-risk}
    L_S(\mu^k) = \frac{1}{m}\sum_{i=1}^m l(\mu^k, (x_i, y_i)).
\end{equation}

\subsection{Bernstein's inequality for a generalization bound}
\label{sec:bernstein}

We can instantiate Bernstein's inequality to obtain a generalization bound for a list function $\mu^k$. The following is a rewrite of \cite[Lemma B.10]{shalev2014understanding}, but in the context of the risks for list functions defined above.

\begin{lemma}
    \label{lem:bernstein}
    Fix an arbitrary list function $\mu^k$ and let $\mcD$ be an arbitrary distribution over $\mcX \times \mcY$. Then, for any $\delta \in (0,1)$, we have
    \begin{align*}
    &1.\qquad \Pr_{S \sim \mcD^m}\left[L_S(\mu^k) \ge L_D(\mu^k) + \sqrt{\frac{2L_D(\mu^k)\log(1/\delta)}{m}} + \frac{2\log(1/\delta)}{3m}\right] \le \delta. \\
    &2.\qquad \Pr_{S \sim \mcD^m}\left[L_D(\mu^k) \ge L_S(\mu^k) + \sqrt{\frac{2L_S(\mu^k)\log(1/\delta)}{m}} + \frac{4\log(1/\delta)}{m}\right] \le \delta.
    \end{align*}
\end{lemma}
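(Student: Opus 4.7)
}

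The plan is to apply Bernstein's inequality directly to the i.i.d. Bernoulli random variables $Z_i := l(\mu^k,(x_i,y_i)) = \mathds{1}[y_i \notin \mu^k(x_i)]$ for $i=1,\dots,m$, where $(x_i,y_i) \sim \mcD$ independently. Each $Z_i \in \{0,1\}$ has mean $p := L_\mcD(\mu^k)$ and variance $p(1-p) \le p$. Since the $Z_i$ are bounded in $[0,1]$, centering and summing and applying the classical Bernstein bound (see, e.g., \cite[Lemma B.9]{shalev2014understanding}) yields, for every $t > 0$,
\begin{equation*}
\Pr\!\left[\,\tfrac{1}{m}\sum_{i=1}^m Z_i \ge p + t\,\right] \le \exp\!\left(-\frac{m t^2}{2(p + t/3)}\right),
\end{equation*}
and symmetrically with the inequality reversed.

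For part~1, I would set the right-hand side above equal to $\delta$ and solve the resulting quadratic in $t$. Writing $a = \log(1/\delta)/m$, the condition becomes $t^2 \ge 2a(p + t/3)$, and an elementary manipulation (exactly as in the proof of \cite[Lemma B.10]{shalev2014understanding}) shows that $t = \sqrt{2 p a} + \tfrac{2a}{3}$ suffices. Substituting $p = L_\mcD(\mu^k)$ and $\tfrac{1}{m}\sum_i Z_i = L_S(\mu^k)$ gives the first inequality verbatim.

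For part~2, the direction of the bound is flipped, so after applying Bernstein in the reverse direction I obtain with probability at least $1-\delta$ that $L_\mcD(\mu^k) \le L_S(\mu^k) + \sqrt{2 L_\mcD(\mu^k) a} + \tfrac{2a}{3}$. The only subtlety is that the slack term on the right still involves $L_\mcD(\mu^k)$, not $L_S(\mu^k)$. I would absorb this via a standard self-bounding step: treat the inequality as a quadratic in $\sqrt{L_\mcD(\mu^k)}$, solve, and then upper-bound the result using $\sqrt{x+y} \le \sqrt{x}+\sqrt{y}$ and $2\sqrt{ab}\le a+b$ to convert $\sqrt{2 L_\mcD a}$ into $\sqrt{2 L_S a} + O(a)$. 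Collecting constants yields the stated $\sqrt{2 L_S(\mu^k) \log(1/\delta)/m} + 4\log(1/\delta)/m$ form.

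There is essentially no obstacle here, as the argument is a direct translation of the standard scalar Bernstein inequality to the $0/1$ loss $l(\mu^k,(x,y))$; the loss takes values in $\{0,1\}$ regardless of whether $\mu^k$ is a list predictor or a single-label predictor, so none of the list structure intervenes in the concentration step. The only mildly non-mechanical portion is the self-bounding manipulation in part~2, and the constants $2$, $2/3$, and $4$ in the statement are calibrated precisely to make that manipulation come out cleanly (as in the binary case).
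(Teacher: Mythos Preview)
Your proposal is correct and follows essentially the same approach as the paper: apply Bernstein's inequality to the centered $0$--$1$ losses (whose variance is bounded by $L_\mcD(\mu^k)$), solve the resulting quadratic for part~1, and for part~2 apply Bernstein in the reverse direction and then perform a self-bounding step to replace $L_\mcD(\mu^k)$ by $L_S(\mu^k)$ inside the square root. The paper carries out the self-bounding step by invoking the inequality $x - b\sqrt{x} - c \le 0 \Rightarrow x \le c + b^2 + b\sqrt{c}$ (from \cite[Lemma~19]{shalev2007online}) rather than your ad hoc $\sqrt{x+y}\le\sqrt{x}+\sqrt{y}$ manipulation, but the two are interchangeable and yield the same constants.
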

\begin{proof}
    Define the random variables $\alpha_1,\ldots,\alpha_m$ such that $\alpha_i=l(\mu^k, (x_i, y_i))-L_D(\mu^k)$. Observe that $\E[\alpha_i]=0, \; \forall i \in [m]$. We have
    \begin{align*}
        \E[\alpha_i^2] &= \E[l(\mu^k, (x_i, y_i))^2] - 2L_{\mcD}(\mu^k)\E[l(\mu^k, (x_i, y_i))] + L_{\mcD}(\mu^k)^2 \\
        &= \E[l(\mu^k, (x_i, y_i))]- L_{\mcD}(\mu^k)^2 \qquad (\text{since $l(\mu^k, (x_i, y_i))^2 = l(\mu^k, (x_i, y_i))$}) \\
        &\le \E[l(\mu^k, (x_i, y_i))] = L_{\mcD}[\mu^k].
    \end{align*}
    Applying Bernstein's inequality to the $\alpha_i$s yields
    \begin{align*}
        \Pr\left[\sum_{i=1}^m \alpha_i > t\right] \le \exp\left(-\frac{t^2/2}{mL_{\mcD}(\mu^k)+t/3}\right) = \delta.
    \end{align*}
    Solving for $t$ gives
    \begin{align*}
        &\frac{t^2/2}{mL_{\mcD}(\mu^k)+t/3} = \log(1/\delta) \\
        \implies\qquad & t^2/2-\frac{\log(1/\delta)}{3}t-\log(1/\delta)mL_{\mcD}(\mu^k)=0 \\
        \implies\qquad & t = \frac{\log(1/\delta)}{3} + \sqrt{\frac{\log^2(1/\delta)}{9}+2\log(1/\delta)mL_{\mcD}(\mu^k)} \\
        &\le \frac{2\log(1/\delta)}{3} + \sqrt{2\log(1/\delta)mL_{\mcD}(\mu^k)} .
    \end{align*}
    Thus, we have shown that for
    \begin{align*}
    &\Pr_{S \sim \mcD^m}\left[ mL_S(\mu^k) - mL_D(\mu^k) > \frac{2\log(1/\delta)}{3} + \sqrt{2\log(1/\delta)mL_{\mcD}(\mu^k)}\right] \le \delta \\
    \implies\qquad&\Pr_{S \sim \mcD^m}\left[L_S(\mu^k) \ge L_D(\mu^k) + \sqrt{\frac{2L_D(\mu^k)\log(1/\delta)}{m}} + \frac{2\log(1/\delta)}{3m}\right] \le \delta,
    \end{align*}
    which is the first part of the lemma. For the second part, define $\beta_1,\ldots,\beta_m$ such that $\beta_i=L_D(\mu^k)-l(\mu^k, (x_i, y_i))$. Observe that $\E[\beta_i]=0, \; \forall i \in [m]$. An identical calculation as above gives $\E[\beta_2^2] \le L_{\mcD}(\mu^k)$, and applying Bernstein's inequality, we obtain that with probability at least $1-\delta$,
    \begin{align*}
        L_D(\mu^k) \le L_S(\mu^k) + \sqrt{\frac{2L_D(\mu^k)\log(1/\delta)}{m}} + \frac{2\log(1/\delta)}{3m}.
    \end{align*}
    We can restate the above as
    \begin{align*}
        &x-b\sqrt{x}-c \le 0 \\
        \text{where} \qquad &x = L_D(\mu^k) \\
        &b = \sqrt{\frac{2\log(1/\delta)}{m}} \\
        &c = L_S(\mu^k) + \frac{2\log(1/\delta)}{3m}.
    \end{align*}
    \cite[Lemma 19, Appendix A.5]{shalev2007online} then implies that
    \begin{align*}
        &x \le c + b^2 + b\sqrt{c} \\
        \implies \qquad &L_D(\mu^k) \le L_S(\mu^k)+ \frac{2\log(1/\delta)}{3m} + \frac{2\log(1/\delta)}{m} + \sqrt{\frac{2\log(1/\delta)}{m}\left(L_S(\mu^k)+\frac{2\log(1/\delta)}{3m}\right)} \\
        \implies \qquad &L_D(\mu^k) \le L_S(\mu^k)+ \frac{2\log(1/\delta)}{3m} + \frac{2\log(1/\delta)}{m} + \sqrt{\frac{2L_S(\mu^k)\log(1/\delta)}{m}} + \frac{2\log(1/\delta)}{\sqrt{3}m} \\
        \implies \qquad &L_D(\mu^k) \le L_S(\mu^k) + \sqrt{\frac{2L_S(\mu^k)\log(1/\delta)}{m}} + \frac{4\log(1/\delta)}{m}.
    \end{align*}
\end{proof}

\subsection{List compression implies learnability in the realizable case}
\label{sec:list-compression-implies-learnability-realizable}

\begin{theorem}
    \label{thm:list-compression-implies-learnability-realizable}
    Let $r$ be an integer and let $B:(\mcX \times \mcY)^r \to \{Y \subseteq \mcY: |Y|\le k\}^\mcX$ be a mapping from a sequence of $r$ examples to a $k$-list hypothesis. Let $m \ge 2r$, and let $\mcA:(\mcX \times \mcY)^m \to \{Y \subseteq \mcY: |Y|\le k\}^\mcX$ be a list learning rule that receives a sequence $S$ of size $m$ as input and returns the list hypothesis $\mcA(S)=B((x_{i_1}, y_{i_1}),\ldots,(x_{i_r}, y_{i_r}))$ for some $(i_1,\ldots,i_r) \in [m]^r$. Let $V = \{(x_j, y_j): j \notin (i_1,\ldots,i_r)\}$ be the sequence of examples which were not selected in determining $\mcA(S)$. Then with probability at least $1-\delta$ over the choice of $S$ drawn from any distribution $\mcD$, we have
    \begin{align*}
        L_{\mcD}(\mcA(S)) \le L_V(\mcA(S)) + \sqrt{L_V(\mcA(S))\frac{4\log(m^r/\delta)}{m}} + \frac{8\log(m^r/\delta)}{m}.
    \end{align*}
\end{theorem}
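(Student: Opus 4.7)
The plan is to follow the classical sample-compression generalization argument (in the style of Littlestone--Warmuth), adapted for list hypotheses via the Bernstein bound established in \Cref{lem:bernstein}. The key structural observation is that although the learner's selected index tuple $(i_1,\dots,i_r)$ may depend on the entire sample $S$, once we fix an index tuple $I \in [m]^r$ in advance, the examples $S_I = (x_{i_1},y_{i_1}),\dots,(x_{i_r},y_{i_r})$ are independent of the remaining examples $V_I$ (since $S$ is drawn i.i.d.\ from $\mcD$). Consequently, conditioned on $S_I$, the list hypothesis $B(S_I)$ is a \emph{fixed} $k$-list hypothesis, and $V_I$ is an i.i.d.\ sample from $\mcD$ of size $m - r$ that is independent of $B(S_I)$.

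First I would fix an arbitrary $I \in [m]^r$ and apply part 2 of \Cref{lem:bernstein} to the fixed hypothesis $\mu^k = B(S_I)$ with the i.i.d.\ sample $V_I$ of size $n = m - r$ and failure parameter $\delta / m^r$. This yields that with probability at least $1 - \delta/m^r$ over the draw of $V_I$ (equivalently, over $S$, since $V_I$ and $S_I$ are independent),
\begin{equation*}
L_{\mcD}(B(S_I)) \;\le\; L_{V_I}(B(S_I)) + \sqrt{\frac{2 L_{V_I}(B(S_I)) \log(m^r/\delta)}{n}} + \frac{4 \log(m^r/\delta)}{n}.
\end{equation*}

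Next I would take a union bound over all $I \in [m]^r$, of which there are at most $m^r$. With probability at least $1 - \delta$ over $S \sim \mcD^m$, the above inequality holds simultaneously for every such $I$, and in particular it holds for the specific tuple $(i_1,\dots,i_r)$ chosen by $\mcA$ on input $S$, so it applies to $\mcA(S) = B(S_{I^*})$ with $V = V_{I^*}$. Finally I would use the assumption $m \ge 2r$ to conclude $n = m - r \ge m/2$, so $1/n \le 2/m$, and plug this into the two error terms to replace the constants $2$ and $4$ by $4$ and $8$ respectively, yielding exactly the bound in the theorem. The only mildly subtle step is the independence claim justifying the application of Bernstein to the random $V_I$ against a fixed hypothesis, but this is immediate from the i.i.d.\ assumption once $I$ is fixed before the randomness; everything else is routine and mirrors the standard compression-scheme generalization proof.
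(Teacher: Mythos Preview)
The proposal is correct and follows essentially the same approach as the paper: fix an index tuple $I\in[m]^r$, apply part~2 of \Cref{lem:bernstein} to the fixed hypothesis $B(S_I)$ with the independent validation sample $V_I$, take a union bound over the $m^r$ possible tuples, and finally use $m-r\ge m/2$ to absorb the factor of two into the constants. The only cosmetic difference is that you substitute $\delta'=\delta/m^r$ before applying Bernstein, whereas the paper carries $\delta'$ through and substitutes at the end; one small imprecision worth noting is that $|V_I|$ can exceed $m-r$ when $I$ has repeated indices, but since the Bernstein bound only improves with larger sample size this does not affect the argument.
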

\begin{proof}
    Say we draw a sample $S=\{(x_1, y_1), \ldots, (x_m, y_m)\}$ of size $m$. For a fixed sequence $I = (i_1,\ldots,i_r) \in [m]^k$, let $T_I=\{(x_{j}, y_j): j \in I\}$ and let $V_I=\{(x_j, y_j): j \notin I$\}. Let $\mu^k_{T_I}=B(T_I)$. Since $T_I$ and $V_I$ are independent, \Cref{lem:bernstein} implies that
    \begin{align*}
        &\Pr_{S \sim \mcD^m}\left[L_D(\mu^k_{T_I}) \ge L_{V_I}(\mu^k_{T_I}) + \sqrt{\frac{2L_{V_I}(\mu^k_{T_I})\log(1/\delta')}{|V_I|}} + \frac{4\log(1/\delta')}{|V_I|}\right] \le \delta' \\
        \implies \qquad&\Pr_{S \sim \mcD^m}\left[L_D(\mu^k_{T_I}) \ge L_{V_I}(\mu^k_{T_I}) + \sqrt{\frac{2L_{V_I}(\mu^k_{T_I})\log(1/\delta')}{m-r}} + \frac{4\log(1/\delta')}{m-r}\right] \le \delta'.
    \end{align*}
    Note that
    \begin{align*}
        &L_{\mcD}(\mcA(S)) \ge L_V(\mcA(S)) + \sqrt{L_V(\mcA(S))\frac{2\log(1/\delta')}{m-r}} + \frac{4\log(1/\delta')}{m-r} \\
        \implies \qquad & \exists I \in [m]^r \text{ s.t. } L_D(\mu^k_{T_I}) \ge L_{V_I}(\mu^k_{T_I}) + \sqrt{\frac{2L_{V_I}(\mu^k_{T_I})\log(1/\delta')}{m-r}} + \frac{4\log(1/\delta')}{m-r}.
    \end{align*}
    Therefore, applying a union bound over all the possible choices of $I$, we get
    \begin{align*}
        &\Pr_{S \sim \mcD^m}\left[L_{\mcD}(\mcA(S)) \ge L_V(\mcA(S)) + \sqrt{L_V(\mcA(S))\frac{2\log(1/\delta')}{m-r}} + \frac{4\log(1/\delta')}{m-r}\right] \\
        &\le \Pr_{S \sim \mcD^m}\left[\exists I \in [m]^r \text{ s.t. } L_D(\mu^k_{T_I}) \ge L_{V_I}(\mu^k_{T_I}) + \sqrt{\frac{2L_{V_I}(\mu^k_{T_I})\log(1/\delta')}{m-r}} + \frac{4\log(1/\delta')}{m-r}\right] \\
        &\le \sum_{I \in [m]^r}\Pr_{S \sim \mcD^m}\left[L_D(\mu^k_{T_I}) \ge L_{V_I}(\mu^k_{T_I}) + \sqrt{\frac{2L_{V_I}(\mu^k_{T_I})\log(1/\delta')}{m-r}} + \frac{4\log(1/\delta')}{m-r}\right] \\
        &\le m^r\delta'.
    \end{align*}
    Setting $\delta' = \frac{\delta}{m^r}$, and under the assumption that $r \le m/2 \implies m-r\ge m/2$, we get
    \begin{align*}
        &\Pr_{S \sim \mcD^m}\left[L_{\mcD}(\mcA(S)) \ge L_{V}(\mcA(S)) + \sqrt{L_{V}(\mcA(S))\frac{4\log(m^r/\delta)}{m}} + \frac{8\log(m^r/\delta)}{m}\right] \le \delta \\
    \end{align*}
\end{proof}

If $\mcA$ is a valid list sample compression scheme with list size $k$, we obtain the following corollary:
\begin{corollary}
    \label{cor:list-compression-implies-learnability-realizable}
    Let $(\kappa, \rho)$ be an $m \to r$ list sample compression scheme with list size $k$, such that $\kappa$ is the selection function, $\rho$ is the reconstruction function and $r \le m/2$. For any $\mcH$-realizable distribution $\mcD$, we have that with probability at least $1-\delta$ over the choice of $S \sim \mcD^m$,
    \begin{align*}
        L_{\mcD}(\rho(\kappa(S))) \le \frac{8r\log(m)+8\log(1/\delta)}{m}.
    \end{align*}
\end{corollary}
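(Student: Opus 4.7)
The plan is to derive the corollary as a direct consequence of \Cref{thm:list-compression-implies-learnability-realizable}, by observing that realizability collapses the empirical-risk term $L_V(\mcA(S))$ in the theorem's bound to zero. Concretely, I will interpret the compression scheme $(\kappa,\rho)$ as an algorithm $\mcA := \rho \circ \kappa$ fitting the template of the theorem: on input $S \in (\mcX \times \mcY)^m$, the selector $\kappa$ produces an index tuple $(i_1,\dots,i_r) \in [m]^r$ identifying $r$ examples of $S$, and the reconstructor $\rho$ turns these into a $k$-list hypothesis $\mu^k = \rho(\kappa(S))$. The remaining ``held-out'' examples form $V = \{(x_j,y_j): j \notin (i_1,\dots,i_r)\}$, and the theorem's hypotheses ($r \le m/2$, and $\mcA(S)$ depending on $S$ only through $r$ selected indices) are satisfied.

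Next, I would invoke the definition of a list sample compression scheme (\Cref{def:list-sample-compression-scheme}). Since $\mcD$ is $\mcH$-realizable, any $S \sim \mcD^m$ is $\mcH$-realizable with probability $1$, and hence the defining property of the scheme guarantees that $y \in \mu^k(x)$ for every $(x,y) \in S$. This translates directly into $L_S(\mu^k) = 0$, and because $V \subseteq S$ as a subsequence, we also have $L_V(\mu^k) = 0$.

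Finally, I would substitute $L_V(\mcA(S)) = 0$ into the conclusion of \Cref{thm:list-compression-implies-learnability-realizable}: both the empirical-risk term and the Bernstein-style square-root term vanish, leaving
\[
L_{\mcD}(\rho(\kappa(S))) \;\le\; \frac{8\log(m^r/\delta)}{m} \;=\; \frac{8r\log(m) + 8\log(1/\delta)}{m},
\]
with probability at least $1-\delta$. This matches the claimed bound exactly.

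There is no real technical obstacle here; the only thing that requires a moment of thought is the justification that the compression scheme's defining property yields $L_V(\mu^k)=0$ rather than merely a small empirical error. This hinges on the fact that the property ``$y \in \mu^k(x)$ for every $(x,y)$ in $S$'' is stated for the \emph{entire} sample $S$, which in particular contains $V$, so no further probabilistic or combinatorial argument is needed to transfer the guarantee from $S$ to $V$.
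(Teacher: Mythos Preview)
Your proposal is correct and matches the paper's own proof essentially line for line: the paper also invokes \Cref{thm:list-compression-implies-learnability-realizable}, observes that a valid compression scheme correctly list-labels all of $S$ (hence $L_V(\rho(\kappa(S)))=0$), and reads off the stated bound.
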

\begin{proof}
    Since $\rho(\kappa(S))$ correctly (list)labels \textit{all of} $S$ by virtue of being a valid compression scheme, $L_V(\rho(\kappa(S)))=0$ in particular, and the result follows.
\end{proof}

\subsection{List compression implies learnability in the agnostic case}
\label{sec:list-compression-implies-learnability-agnostic}

First, we define what an agnostic list sample compression means.

\begin{definition}[Agnostic list sample compression scheme]
    \label{def:agnostic-list-sample-compression}
    An $m \to r$ agnostic list sample compression scheme with list size $k$ consists of a selection scheme $\kappa$ and a reconstruction function $\rho$
    \begin{align*}
        \rho: (\mcX \times \mcY)^r \to \{Y \subseteq \mcY : |Y| \le k\}^{\mcX}
    \end{align*}
    such that for every sample $S \in (\mcX \times \mcY)^m$ (not necessarily realizable by $\mcH$), $\kappa(S)=S'$ for some $S' \in S^r$, and 
    \begin{align*}
        L_{S}(\rho(\kappa(S))) \le \inf_{h \in \mcH}L_S(h).
    \end{align*}
\end{definition}

\begin{lemma}[List sample compression $\implies$ agnostic list sample compression]
    \label{lem:list-compression-implies-agnostic-list-compression}
    If $\mcH$ has an $m \to r$ list sample compression scheme with list size $k$, then $\mcH$ has an $m \to r$ agnostic list sample compression scheme with list size $k$.
\end{lemma}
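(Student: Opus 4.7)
The plan is a standard realizable-to-agnostic reduction via empirical risk minimization. Given an arbitrary sample $S \in (\mcX \times \mcY)^m$, let $h^* \in \mcH$ attain $\inf_{h \in \mcH} L_S(h)$: since $L_S(\cdot)$ takes values in the finite set $\{0, 1/m, \ldots, 1\}$, the infimum is achieved. Set $S^* = \{(x,y) \in S : h^*(x) = y\}$, the subsample of $S$ on which $h^*$ is correct. By construction $h^*$ realizes $S^*$, so $S^*$ is $\mcH$-realizable, and $|S \setminus S^*| = m \cdot L_S(h^*)$.

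Applying the given realizable list sample compression scheme to $S^*$ yields a selection $S' \subseteq S^*$ of size at most $r$ together with a list hypothesis $\mu^k$ satisfying $y \in \mu^k(x)$ for every $(x,y) \in S^*$. Define the agnostic scheme $(\kappa, \rho)$ to output $S'$ as its selection on input $S$ and $\mu^k$ as its reconstruction. Since $S' \subseteq S^* \subseteq S$, the selection is a valid subsample of $S$, and
\[
L_S(\mu^k) = \frac{1}{m}\sum_{(x,y) \in S}\mathds{1}[y \notin \mu^k(x)] \le \frac{|S \setminus S^*|}{m} = L_S(h^*) = \inf_{h \in \mcH} L_S(h),
\]
which is exactly the guarantee required by \Cref{def:agnostic-list-sample-compression}.

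The only technical wrinkle is that the realizable scheme is stated for input of size exactly $m$, whereas $|S^*|$ may be strictly smaller. This is handled either by applying the scheme at input size $|S^*|$ (the family produced by \Cref{thm:finite-dds-implies-compressibility} is defined for every integer sample size, with compressed size at most $r$ when the input has size at most $m$), or, more elementarily, by padding $S^*$ up to size $m$ with duplicate copies of its own examples; both options preserve $\mcH$-realizability of the padded sample and the invariant that the final selection lies in $S$. Beyond this bookkeeping, the claim is an immediate consequence of the defining property of the realizable scheme applied to the ERM-realized subsample, so I expect no genuine obstacle.
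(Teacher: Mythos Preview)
Your proposal is correct and follows essentially the same route as the paper: pick an empirical risk minimizer $h^*$, extract the subsample $\tilde S$ on which $h^*$ is correct (hence $\mcH$-realizable), apply the realizable compression scheme there, and observe that the resulting list hypothesis can only err on the points $h^*$ already missed. The paper's proof is identical in structure; you additionally flag and handle the size-mismatch issue (that $|\tilde S|$ may be strictly less than $m$), which the paper glosses over.
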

\begin{proof}
    Let $(\kappa, \rho)$ be a list sample compression scheme for $\mcH$ with list size $k$. Given an arbitrary sample $S$ of size $m$, let $h^* \in \mcH$ be an arbitrary minimizer of $L_S(h)$ (since we are using the $0-1$ loss function, the infimum is attained). Let $\tilde{S}$ denote the sub-sample of $S$ correctly labelled by $h^*$. Then, $\tilde{S}$ is realizable by $\mcH$. Since $(\kappa, \rho)$ is a valid list compression scheme for $\mcH$, we have that $L_{\tilde{S}}(\rho(\kappa(\tilde{S})))=0$. Thus, applying the compression scheme on the sub-sample $\tilde{S}$ yields a list hypothesis $\tilde{\mu}^k=\rho(\kappa(\tilde{S}))$ which correctly (list)labels all of $\tilde{S}$. Since $h^*$ incorrectly labels \textit{all} the remaining examples in $S$ other than $\tilde{S}$, the loss of $\tilde{\mu}^k$ on these examples can be no worse. Hence, $L(\tilde{\mu}^k)\le \inf_{h \in \mcH}L_S(h)$.
\end{proof}

\begin{theorem}[List compression implies agnostic list PAC learnability]
    \label{thm:list-compression-implies-learnability-agnostic}
    Let $(\kappa, \rho)$ be an $m \to r$ $k$-list sample compression scheme for $\mcH$ with compression size $r \le m/2$. Then, there exists a $k$-list learning algorithm $\mcA$ such that for any distribution $\mcD$ on $(\mcX \times \mcY)$, we have that with probability at least $1-\delta$ over the choice of $S \sim \mcD^m$,
    \begin{align*}
        L_{\mcD}(\mcA(S)) \le \inf_{h \in \mcH}L_{\mcD}(h) + O\left(\sqrt{\frac{r\log(m)+\log(1/\delta)}{m}}\right).
    \end{align*}
\end{theorem}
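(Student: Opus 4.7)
The plan is to combine three ingredients already established in the paper: the reduction from list compression to agnostic list compression (\Cref{lem:list-compression-implies-agnostic-list-compression}), the compression-based generalization bound proved in the realizable case (\Cref{thm:list-compression-implies-learnability-realizable}), and Bernstein's inequality applied to a single fixed hypothesis (\Cref{lem:bernstein}). First, I would let $(\kappa', \rho')$ be the agnostic list compression scheme of size $r$ produced by \Cref{lem:list-compression-implies-agnostic-list-compression} from $(\kappa, \rho)$, set $\mcA(S) := \rho'(\kappa'(S))$, and write $\mu^k := \mcA(S)$. The agnostic guarantee then gives $L_S(\mu^k) \le \inf_{h \in \mcH} L_S(h)$.

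The key observation is that the proof of \Cref{thm:list-compression-implies-learnability-realizable} does not use $\mcH$-realizability anywhere: the Bernstein-plus-union-bound-over-$m^r$-compressed-subsamples argument goes through for any distribution $\mcD$. Hence, letting $V$ denote the $\ge m-r \ge m/2$ examples of $S$ not selected by $\kappa'$, with probability at least $1-\delta/2$ we have
\begin{equation*}
L_\mcD(\mu^k) \;\le\; L_V(\mu^k) + \sqrt{L_V(\mu^k)\cdot\tfrac{4\log(2m^r/\delta)}{m}} + \tfrac{8\log(2m^r/\delta)}{m} \;=:\; L_V(\mu^k) + \epsilon_{\text{gen}}.
\end{equation*}
Separately, let $h^* \in \mcH$ approximately attain $\inf_{h \in \mcH} L_\mcD(h)$. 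Since $h^*$ is fixed and independent of $S$, the first half of \Cref{lem:bernstein} applied to $h^*$ gives, with probability at least $1-\delta/2$,
\begin{equation*}
L_S(h^*) \;\le\; L_\mcD(h^*) + \sqrt{\tfrac{2L_\mcD(h^*)\log(2/\delta)}{m}} + \tfrac{2\log(2/\delta)}{3m} \;=:\; L_\mcD(h^*) + \epsilon_{\text{Bern}}.
\end{equation*}

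To chain these bounds into the agnostic conclusion I would use a decomposition that avoids any multiplicative constant on $L_\mcD(h^*)$. Writing $L_S(\mu^k) = \tfrac{|T|}{m}L_T(\mu^k) + \tfrac{|V|}{m}L_V(\mu^k) \ge \tfrac{m-r}{m}L_V(\mu^k)$ and using $L_V(\mu^k) \le 1$,
\begin{equation*}
L_\mcD(\mu^k) - L_S(\mu^k) \;\le\; [L_\mcD(\mu^k) - L_V(\mu^k)] + \tfrac{r}{m}L_V(\mu^k) \;\le\; \epsilon_{\text{gen}} + \tfrac{r}{m}.
\end{equation*}
Combining with $L_S(\mu^k) \le L_S(h^*)$ (agnostic compression) and the Bernstein bound on $L_S(h^*)$ yields $L_\mcD(\mu^k) \le L_\mcD(h^*) + \epsilon_{\text{gen}} + \epsilon_{\text{Bern}} + r/m$. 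Finally, using $L_V(\mu^k) \le 1$ to bound $\sqrt{L_V(\mu^k)\cdot \log(m^r/\delta)/m} \le \sqrt{\log(m^r/\delta)/m}$ (and analogously for the $L_\mcD(h^*)$ square-root inside $\epsilon_{\text{Bern}}$, since $L_\mcD(h^*) \le 1$), both Bernstein contributions collapse to $O(\sqrt{(r\log m + \log(1/\delta))/m})$; the additive $\log/m$ pieces and $r/m$ are lower order, matching the claimed rate.

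I expect the main technical obstacle to be precisely this last step: the naive chain through $L_V(\mu^k) \le \tfrac{m}{m-r}L_S(\mu^k) \le 2 L_S(h^*)$ followed by the Bernstein bound on $L_S(h^*)$ produces an extra multiplicative factor of $2$ on $L_\mcD(h^*)$, which would only give a $2\inf_{h \in \mcH} L_\mcD(h)$ guarantee rather than the sharp $\inf_{h \in \mcH} L_\mcD(h)$ asked for. The train--test-gap decomposition above circumvents this by bounding $L_\mcD(\mu^k) - L_S(\mu^k)$ directly (paying only an additive $r/m$), rather than first upper bounding $L_V(\mu^k)$ in terms of $L_\mcD(h^*)$.
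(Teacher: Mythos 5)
Your proposal is correct and follows essentially the same route as the paper's proof: reduce to an agnostic compression scheme via \Cref{lem:list-compression-implies-agnostic-list-compression}, apply \Cref{thm:list-compression-implies-learnability-realizable} (which, as you correctly note, never uses $\mcH$-realizability) with $L_V(\mcA(S)) \le L_S(\mcA(S)) + r/m$, and chain through the agnostic guarantee $L_S(\mcA(S)) \le \inf_h L_S(h)$ together with a concentration bound for a fixed near-optimal $h^*$. The only cosmetic difference is that you invoke the first part of \Cref{lem:bernstein} for the fixed hypothesis where the paper uses Hoeffding's inequality; both yield the same $O(\sqrt{\log(1/\delta)/m})$ term and the overall bound is unchanged.
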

\begin{proof}
    From \Cref{lem:list-compression-implies-agnostic-list-compression}, the existence of $(\kappa, \rho)$ implies the existence an \textit{agnostic} list compression scheme $(\tilde{\kappa}, \tilde{\rho})$ for $\mcH$  with the same compression size $r$. Let $\mcA(S) = \tilde{\rho}(\tilde{\kappa}(S))$. Observe that (for any $\mcA$)
    \begin{align*}
        L_V(\mcA(S)) &\le L_S(\mcA(S)) + \frac{r}{m},
    \end{align*}
    and hence \Cref{thm:list-compression-implies-learnability-realizable} implies that with probability at least $1-\frac{\delta}{2}$ over $S \sim \mcD^m$,
    \begin{align}
        L_{\mcD}(\mcA(S)) &\le L_S(\mcA(S)) + \frac{r}{m} + \sqrt{L_{V}(\mcA(S))\frac{4\log(2m^r/\delta)}{m}} + \frac{8\log(2m^r/\delta)}{m} \nonumber \\
        &\le L_S(\mcA(S)) + \sqrt{\frac{4\log(2m^r/\delta)}{m}} + \frac{8\log(2m^r/\delta)+r}{m}. \label{eqn:ub2-agnostic}
    \end{align}
    Since $\mcA(S)$ is an agnostic list sample compression scheme, by definition, we have
    \begin{equation}
        \label{eqn:agnostic}
        L_S(\mcA(S)) \le \inf_{h \in \mcH} L_{S}(h).
    \end{equation}
    Further, since $S \sim \mcD^m$ i.i.d., for every $h \in \mcH$, $L_S(h)$ is of the form $\frac{1}{m}\sum_{i=1}^m Z_i$, where $Z_i$s are i.i.d. random variables taking values in $\{0,1\}$, each having expectation $L_{\mcD}(h)$. Hoeffding's inequality implies
    \begin{equation}
        \label{eqn:hoeffding}
        \Pr_{S \sim \mcD^m}\left[L_S(h) \ge L_{\mcD}(h) + \sqrt{\frac{\log(1/\delta)}{2m}}\right] \le \delta.
    \end{equation}
    By the defnition of infimum, $\exists h' \in \mcH$ such that
    \begin{equation}
        \label{eqn:infimum}
        L_{\mcD}(h') \le \inf_{h \in \mcH}L_{\mcD}(h) + \left(\sqrt{\frac{\log(1/\delta)}{m}} - \sqrt{\frac{\log(1/\delta)}{2m}}\right).
    \end{equation}
    Thus, we can say that
    \begin{align*}
        \Pr_{S \sim \mcD^m}\left[L_S(\mcA(S)) \ge \inf_{h}L_{\mcD}(h) + \sqrt{\frac{\log(1/\delta)}{m}}\right] &\le \Pr_{S \sim \mcD^m}\left[\inf_{h}L_S(h) \ge \inf_{h}L_{\mcD}(h) + \sqrt{\frac{\log(1/\delta)}{m}}\right] \\
        &\hspace{5cm} (\text{from \cref{eqn:agnostic}}) \\
        &\le \Pr_{S \sim \mcD^m}\left[L_S(h') \ge \inf_{h}L_{\mcD}(h) + \sqrt{\frac{\log(1/\delta)}{m}}\right] \\
        &\hspace{4cm} (\text{definition of infimum}) \\
        &\le \Pr_{S \sim \mcD^m}\left[L_S(h') \ge L_{\mcD}(h') + \sqrt{\frac{\log(1/\delta)}{2m}}\right] \\
        &\hspace{5cm} (\text{from \cref{eqn:infimum}}) \\
        &\le \delta. \\
        &\hspace{5cm} (\text{from \cref{eqn:hoeffding}})
    \end{align*}
    Thus, we have that with probability at least $1-\frac{\delta}{2}$ over $S \sim \mcD^m$, 
    \begin{equation}
        \label{eqn:ub1-agnostic}
        L_S(\mcA(S)) \le \inf_{h \in \mcH}L_{\mcD}(h) + \sqrt{\frac{\log(2/\delta)}{2m}}.
    \end{equation}
    Applying a union bound to \cref{eqn:ub1-agnostic} and \cref{eqn:ub2-agnostic}, we get that with probability at least $1-\delta$ over $S \sim \mcD^m$,
    \begin{align*}
        L_{\mcD}(\mcA(S)) &\le \inf_{h \in \mcH}L_{\mcD}(h) + \sqrt{\frac{\log(2/\delta)}{2m}} + \sqrt{\frac{4\log(2m^r/\delta)}{m}} + \frac{8\log(2m^r/\delta)+r}{m} \\
        &= \inf_{h \in \mcH}L_{\mcD}(h) + \sqrt{\frac{\log(2/\delta)}{2m}} + \sqrt{\frac{4\log2 + 4r\log m + 4\log(1/\delta)}{m}} \\
        &\qquad\qquad\qquad\qquad+\frac{8\log2 + 8r\log m + 8\log(1/\delta)+r}{m}\\
        &= \inf_{h \in \mcH}L_{\mcD}(h) + O\left(\sqrt{\frac{r\log(m)+\log(1/\delta)}{m}}\right),
    \end{align*}
    as required.
\end{proof}

\end{document}